\newif\ifnips
\renewcommand{\P}{\mathbb{P}}
\newcommand{\E}{\mathbb{E}}
\renewcommand{\S}{\mathbb{S}}
\renewcommand{\phi}{\varphi}
\newcommand{\eps}{\varepsilon}
\newcommand{\R}{\mathbb{R}}
\newcommand{\la}{\langle}
\newcommand{\ra}{\rangle}
\newcommand{\V}{\mathcal{V}}
\newtheorem{theorem}{Theorem}[section]
\newtheorem{corollary}{Corollary}
\newtheorem{lemma}{Lemma}
\newtheorem{proposition}{Proposition}
\newtheorem{remark}{Remark}
\newtheorem{assumption}{Assumption}
\newcommand{\post}{\textbf{Post-LN}}
\newcommand{\pre}{\textbf{Pre-LN}}
\newcommand{\mix}{\textbf{Mix-LN}}
\newcommand{\peri}{\textbf{Peri-LN}}
\newcommand{\ngpt}{\textbf{nGPT}}
\newcommand{\cod}{\textbf{sqrt-scaling}}
\newcommand{\DS}{\displaystyle}
\newcommand{\calS}{{\mathcal{S}}}
\newcommand{\norm}{\mathsf{Norm}}
\newcommand{\1}{{\rm 1}\kern-0.24em{\rm I}}
\def\dim{\mathop{\rm dim}}
\def\exp{\mathop{\rm exp}}
\newcommand{\ud}{\mathrm{d}}
\title{Normalization in Attention Dynamics}
\author{
  Nikita Karagodin$^{1}$ \quad
  Shu Ge$^{2}$ \quad
  Yury Polyanskiy$^{1}$ \quad
  Philippe Rigollet$^{2}$%
  \thanks{$^{1}$Department of EECS, MIT, Cambridge, MA, USA}%
  \thanks{$^{2}$Department of Mathematics, MIT, Cambridge, MA, USA}
}
\renewcommand{\P}{\mathbb{P}}
\newcommand{\E}{\mathbb{E}}
\renewcommand{\S}{\mathbb{S}}
\renewcommand{\phi}{\varphi}
\newcommand{\eps}{\varepsilon}
\newcommand{\R}{\mathbb{R}}
\newcommand{\la}{\langle}
\newcommand{\ra}{\rangle}
\newcommand{\V}{\mathcal{V}}
\title{\textbf{Normalization in Attention Dynamics}}
\author{
  Nikita Karagodin$^{1}$ \quad
  Shu Ge$^{2}$ \quad
  Yury Polyanskiy$^{1}$ \quad
  Philippe Rigollet$^{2}$%
  \thanks{$^{1}$Department of EECS, MIT, Cambridge, MA, USA}%
  \thanks{$^{2}$Department of Mathematics, MIT, Cambridge, MA, USA}
}
\begin{document}

\maketitle
\begin{abstract}
We study the effect of normalization schemes on token representations in deep transformers. Modeling their evolution as interacting particles on the sphere, we show that normalization acts as a form of speed regulation. This perspective enables a unified analysis of several schemes---including \post, \pre, \mix, \peri, \ngpt ---revealing how they influence clustering dynamics and representation collapse. Our framework clarifies how different schemes shape token representations across layers and provides a principled basis for comparing them, identifying \peri{} as a particularly effective choice.
\end{abstract}

\section{Introduction}
\label{sec:intro}

Transformer architectures have revolutionized natural language processing and beyond, demonstrating unprecedented performance across diverse tasks\textemdash{}from machine translation and text generation to reasoning and protein folding. The remarkable capabilities of transformers, including their emerging reasoning abilities, are enabled by the attention mechanism introduced in~\citet{BahChoBen15a, vaswani2017attention}.

A recent line of theoretical work, initiated in~\citet{geshkovski2023emergence}, studies information processing across deep transformer layers by reframing them as interacting particle systems, building on the original setup of~\citet{sander2022sinkformers}. Following this initial work, layer normalization (LayerNorm) emerged as a critical component significantly influencing the long-term dynamics of these systems. \citet{mathpersp23} proposed a model in which particles are constrained to evolve on a sphere, corresponding to the so-called \emph{Post-layer norm} (\post) scheme. This model has since become 
the standard paradigm for transformer analysis in subsequent 
research~\citep{KarPolRig24, GesKouPol24, GesRigRui24, BruPasAga24, bruno2025multiscale, boumal}.

Several alternatives to Post-LayerNorm (\post) have emerged in recent 
years to improve training performance, each subtly altering transformers' long-term  clustering behavior. Most notably, Pre-LayerNorm (\pre) has become the  default choice for leading large language models including GPT~\citep{radford2019gpt}  and LLaMA~\citep{touvron2023llama}. This approach was originally introduced in ResNet-v2~\citet{he2016identity} before being adapted for Transformer architectures. It enables more stable training of 
deeper networks while reducing sensitivity to hyperparameters such as learning 
rates~\citep{xiong2020layer}.

Understanding normalization schemes is essential for advancing the design and performance of transformer architectures. In particular, \citet{sun2025curse} and \citet{gromov2024unreasonable} identify a phenomenon known as the \emph{curse of depth}, in which deep layers of large language models (LLMs) degenerate into near-identity transformations.  This effect is so pronounced that it enables pruning of deeper layers with minimal impact on performance~\citep{muralidharan2024compact, siddiqui2024deeper}. On the other hand, the well-known issue of \emph{representation collapse} presents a significant challenge to increasing the depth of LLMs.

To mitigate this issue, \citet{li2024mixln} propose a hybrid normalization scheme that applies \post~normalization in the early layers and reserves \pre~normalization for the deeper layers. This strategy was further refined in the development of \peri~\citep{kim2025peri}, which has been reported to be used in the Gemma-3 model~\citep{gemma2025}. Alternatively, \citet{NociAnagBig22} suggest a simpler fix: rescaling residuals by the square root of the depth (a method we will call \cod). This method with \pre~ architecture was then studied in-depth by \citet{sun2025curse}.
Additionally, \citet{loshchilov2024ngpt} show that with careful architectural design, as in \ngpt, normalizing tokens to lie on the unit sphere can further streamline the normalization process.

Given the diversity of these approaches, we are motivated to explore the following question:
\begin{center}
\textit{How do normalization schemes influence deep representations in transformers?}
\end{center}

To answer this question, we revisit both classical and novel LayerNorm schemes through the lens of the simplified interacting particle dynamics introduced in~\citet{mathpersp23} to bring a theoretical understanding of these various design choices.
Since the final decoding layer of a transformer is typically preceded by a normalization step, we focus on the \emph{direction} of token representations. Regardless of the specific normalization used, these directions naturally form an interacting particle system on the sphere. This shared geometric setting enables a direct, side-by-side comparison of various normalization schemes, all of which we reinterpret as forms of \emph{speed regulation}. Despite its simplicity, our model captures complex behaviors observed in practice, including \emph{curse of depth} and \emph{representation collapse}.

\paragraph{Related Work.}
A growing body of work has examined normalization in Transformers, with a primary focus on its empirical and theoretical implications for gradient stability. Notably, \citet{xiong2020layer} and \citet{sun2025curse} provide experimental evidence that improper placement of normalization layers can lead to exploding or vanishing gradients in deep models. These findings are often supported by variance-based analyses that track the propagation of activations and gradients through the network, such as \citep{NociAnagBig22} and \citep{kedia2024transformersstableendtoendsignal}. \citet{wortsman2024small} further identify normalization-related training instabilities that emerge at scale. Building on this foundation, \citet{li2024mixln} and \citet{kim2025peri} explore hybrid normalization strategies in large-scale settings, using both theoretical approximations and empirical diagnostics to study gradient flow and the stability of learned representations.

In contrast to prior work that primarily investigates gradient dynamics, our study focuses on the forward evolution of token representations through the network. This perspective complements the analysis of gradient flow by shifting the emphasis from the ability to train (via backpropagation) to the expressiveness and structure of the learned representations. While both viewpoints offer valuable insights, we focus on the latter in the present work. A companion paper dedicated to the analysis of gradients is currently in preparation.

\paragraph{Our contributions.}

We provide different perspectives on normalization architecture, by casting differently normalized Transformers as variations of a common interacting-particle ODE, where the normalization method determines a \emph{speed factor}, which can amplify initial velocity and dampen representation collapse in deep layers. Within this unified framework, we extend the framework of \citet{mathpersp23} for \post{} and in particular, establish asymptotic clustering under general conditions on the speed regulation mechanism. To differentiate various normalization schemes we further study the initial and final velocity of tokens corresponding to first and deep layers respectively. In particular, we recover the representation collapse phenomenon that plagues \post. Our theoretical framework identifies \peri{} as a particularly effective scheme that makes good use of both early and deep layers.

\section{Normalized Attention Dynamics}
\label{sec:nsa}

A sequence of $n$ tokens is represented by their column-vector embeddings  
$X = (x_1, \dots, x_n) \in \mathbb{R}^{d \times n}$. 
In the rest of this section, functions $f:\R^d \to \R^d$ applied to such a matrix are understood column-wise: $f(X)=[f(x_1), \ldots, f(x_n)]$. 
For each token embedding $x_k$, we define its \emph{direction} $\theta_k = x_k / \|x_k\| \in \mathbb{S}^{d-1}$ and its \emph{magnitude} $r_k = \|x_k\| \ge 0$, so that  
\[
x_k = r_k \cdot \theta_k.
\]

As the sequence of token embeddings is processed through the layers of a transformer, it gets updated from $X^t$ to $X^{t+1}$ at layer $t$. In the rest of this section we derive the updates obtained by different normalization rules and recast them as speed regulation mechanisms for token directions.

For simplicity and convenience of exposition, we omit FFN layers and focus on pure attention. The approach could be extended to a more general architecture, but this would introduce additional technical complexities beyond the scope of this paper.
\subsection{Attention}

At layer $t$, an attention head is characterized by three matrix parameters $Q^t, K^t, V^t$, called Query, Key, and Value respectively. These matrices are used to create the \emph{attention matrix}, which is an $n\times n$ matrix $W=\{w_{jk}\}_{1\le j,k\le n}$ of pairwise interactions between tokens with entries given by
$$
w_{jk}^t = \frac{ e^{\beta \langle Q^tx_j, K^tx_k\rangle}}{ \sum_{l=1}^n e^{\beta \langle Q^tx_j, K^tx_l\rangle}}\,,
$$
where we added a redundant temperature parameter usually taken equal to 1 but that will be convenient in our simplifications below. The attention function is the linear operator $A^t: \R^{d \times n} \to \R^{d \times n}$ defined as $A^t(X)=[X^1_1(X), \ldots, A^t_n(X)]$ where each column is given by
$$
A_j^t(X) = \sum_{k=1}^n w_{jk}^t V^t x_k\,, \qquad j=1, \ldots,n\,.
$$
Throughout this paper, we focus on the simplified setting of~\citet{mathpersp23} where $Q^t=K^t=V^t=I_d$ for all $t\ge 0$.

\subsection{Normalization.}

The Root Mean Squared (RMS) norm $\norm(x)=x/\|x\|$ of a token~\citet{zhang2019rmsnorm} is a critical ingredient of all normalization schemes considered here.

\tikzset{
  ln/.style  ={draw,rounded corners=3pt,
               top color=orange!40!brown!20,
               bottom color=orange!80!brown!45,
               minimum width=2.8cm,minimum height=0.9cm,
               font=\bfseries\footnotesize,inner sep=2pt},
  mod/.style ={draw,rounded corners=3pt,
               top color=yellow!25!orange!15,
               bottom color=yellow!60!orange!40,
               minimum width=2.8cm,minimum height=0.9cm,
               font=\bfseries\footnotesize,inner sep=2pt},
  plus/.style={draw,circle,inner sep=0pt,minimum size=7pt,font=\small},
  flow/.style={->,line width=0.85pt},
  skip/.style={->,line width=0.85pt}
}

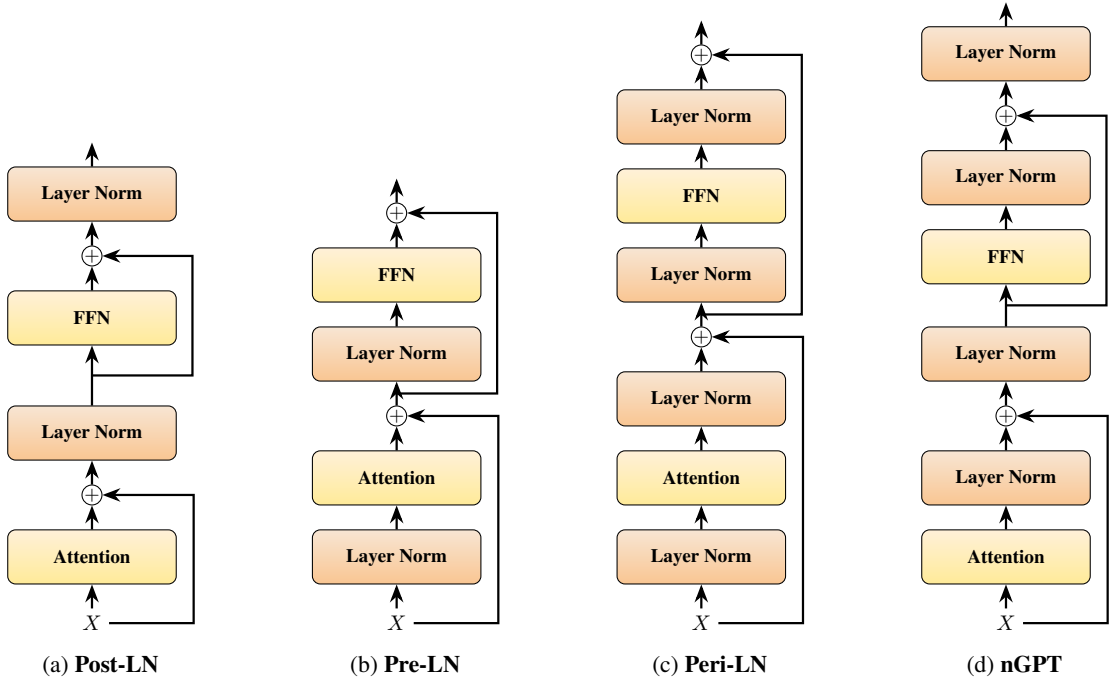
\begin{figure}[h]
\centering
\begin{subfigure}[t]{.24\linewidth}\centering
\begin{tikzpicture}[node distance=4mm,
                    scale=0.8,               
                    every node/.style={transform shape},
                    >=Stealth] %

  \node (x) {$X$};
  \node[mod,  above=of x]   (attn) {Attention};
  \node[plus, above=of attn] (add1) {$+$};
  \node[ln,   above=of add1] (ln1)  {Layer Norm};
  \node[mod, above = 10mm of ln1]  (ffn)  {FFN};
  \node[plus, above=of ffn]  (add2) {$+$};
  \node[ln,   above=of add2] (ln2)  {Layer Norm};
  \node[above=of ln2]        (out)  {};

  \draw[flow] (x)    -- (attn);
  \draw[flow] (attn) -- (add1);
  \draw[flow] (add1) -- (ln1);
  \draw[flow] (ln1)  -- (ffn);
  \draw[flow] (ffn)  -- (add2);
  \draw[flow] (add2) -- (ln2);
  \draw[flow] (ln2)  -- (out);

\coordinate (skipA) at ([xshift=4em]x.east);

\coordinate (skipB) at (skipA |- add1.east);

\draw[skip]        (x.east) -- (skipA)   
                   -- (skipB)            
                   -- (add1.east);       
\coordinate (midLNFF) at ($(ln1.south)!0.5!(ffn.north)$);

\coordinate (skipC) at ([xshift=4.75em]midLNFF);     
\coordinate (skipD) at (skipC |- add2.east);      
\draw[skip] (midLNFF) -- (skipC) -- (skipD) -- (add2.east);
\end{tikzpicture}
\caption{\post}
\end{subfigure}
\begin{subfigure}[t]{.24\linewidth}\centering
\begin{tikzpicture}[node distance=4mm,
                    scale=0.8,
                    every node/.style={transform shape},
                    >=Stealth]

  \node (x) {$X$};
  \node[ln,  above=of x]     (ln0)  {Layer Norm};   
  \node[mod, above=of ln0]   (attn) {Attention};
  \node[plus,above=of attn]  (add1) {$+$};

  \node[ln,  above=of add1]  (ln1)  {Layer Norm};   
  \node[mod, above=of ln1]   (ffn)  {FFN};
  \node[plus,above=of ffn]   (add2) {$+$};

  \node[above=of add2]       (out)  {};             

  \draw[flow] (x)    -- (ln0);
  \draw[flow] (ln0)  -- (attn);
  \draw[flow] (attn) -- (add1);
  \draw[flow] (add1) -- (ln1);
  \draw[flow] (ln1)  -- (ffn);
  \draw[flow] (ffn)  -- (add2);
  \draw[flow] (add2) -- (out);


\coordinate (skipA) at ([xshift=4em]x.east);
\coordinate (skipB) at (skipA |- add1.east);
\draw[skip] (x.east) -- (skipA) -- (skipB) -- (add1.east);

\coordinate (midAddLN) at ($(add1.north)!0.5!(ln1.south)$); 
 \coordinate (skipC) at ([xshift=4.75em]midAddLN);
\coordinate (skipD) at (skipC |- add2.east);
\draw[skip] (midAddLN) -- (skipC) -- (skipD) -- (add2.east);

\end{tikzpicture}
\caption{\pre}
\end{subfigure}
\begin{subfigure}[t]{.24\linewidth}\centering
\begin{tikzpicture}[node distance=4mm,
                    scale=0.8,
                    every node/.style={transform shape},
                    >=Stealth]

\node (x) {$X$};

\node[ln,  above=of x]     (ln0)  {Layer Norm};   
\node[mod, above=of ln0]   (attn) {Attention};

\node[ln,  above=of attn]  (ln1)  {Layer Norm};   
\node[plus,above=of ln1]   (add1) {$+$};          

\node[ln,  above=of add1]  (ln2)  {Layer Norm};   
\node[mod, above= of ln2] (ffn) {FFN};        

\node[ln,  above=of ffn]   (ln3)  {Layer Norm};   
\node[plus,above=of ln3]   (add2) {$+$};          

\node[above=of add2]       (out) {};              

\draw[flow] (x)   -- (ln0);
\draw[flow] (ln0) -- (attn);
\draw[flow] (attn) -- (ln1);
\draw[flow] (ln1) -- (add1);
\draw[flow] (add1) -- (ln2);
\draw[flow] (ln2) -- (ffn);
\draw[flow] (ffn) -- (ln3);
\draw[flow] (ln3) -- (add2);
\draw[flow] (add2) -- (out);

\coordinate (skipA) at ([xshift=4em]x.east);
\coordinate (skipB) at (skipA |- add1.east);
\draw[skip] (x.east) -- (skipA) -- (skipB) -- (add1.east);

\coordinate (midAddLN) at ($(add1.north)!0.5!(ln2.south)$);
\coordinate (skipC)     at ([xshift=4.75em]midAddLN);
\coordinate (skipD)     at (skipC |- add2.east);
\draw[skip] (midAddLN) -- (skipC) -- (skipD) -- (add2.east);

\end{tikzpicture}
\caption{\peri}
\end{subfigure}
\begin{subfigure}[t]{0.24\linewidth}\centering
\begin{tikzpicture}[node distance=4mm,
                    scale=0.8,
                    every node/.style={transform shape},
                    >=Stealth]

\node (x) {$X$};

\node[mod, above=of x]    (attn) {Attention};

\node[ln,  above=of attn] (ln1)  {Layer Norm};   
\node[plus,above=of ln1]  (add1) {$+$};          

\node[ln,  above=of add1] (ln2)  {Layer Norm};   
\node[mod, above=2em of ln2] (ffn) {FFN};       

\node[ln,  above=of ffn]  (ln3)  {Layer Norm};   
\node[plus,above=of ln3]  (add2) {$+$};          

\node[ln,  above=of add2] (ln4)  {Layer Norm};   
\node[above=of ln4]       (out)  {};             

\draw[flow] (x)   -- (attn);
\draw[flow] (attn) -- (ln1);
\draw[flow] (ln1) -- (add1);
\draw[flow] (add1) -- (ln2);
\draw[flow] (ln2) -- (ffn);
\draw[flow] (ffn) -- (ln3);
\draw[flow] (ln3) -- (add2);
\draw[flow] (add2) -- (ln4);
\draw[flow] (ln4) -- (out);

\coordinate (skipA) at ([xshift=4em]x.east);
\coordinate (skipB) at (skipA |- add1.east);
\draw[skip] (x.east) -- (skipA) -- (skipB) -- (add1.east);

\coordinate (midLNFFN) at ($(ln2.north)!0.5!(ffn.south)$);
\coordinate (skipC) at ([xshift=4.75em]midLNFFN.east);
\coordinate (skipD) at (skipC |- add2.east);
\draw[skip] (midLNFFN.east) -- (skipC) -- (skipD) -- (add2.east);

\end{tikzpicture}
\caption{\ngpt}
\end{subfigure}%
\caption{Normalization layer placements in various architectures.}
\label{fig:normalization_schemes}
\end{figure}

In this paper, we study six major schemes: \post{}~\citep{vaswani2017attention}, \pre{}~\citep{xiong2020layer}, \mix{}~\citep{li2024mixln}, \peri{}~\citep{kim2025peri}, \ngpt{}~\citep{loshchilov2024ngpt}, and \cod{}~\citep{sun2025curse}. Note that \mix{} is a combination of \post{} for $t\le \tau$ and \pre{} for $t>\tau$ while \cod is a deterministic rescaling of \post{}. The four remaining schemes are presented in Figure~\ref{fig:normalization_schemes}.
Such explicitly layer-normalization rules are not the only strategies employed in practice. Other attempts to improve normalization suggest better initializations $Q^0, K^0, V^0$~\citep{kedia2024transformersstableendtoendsignal} and explicit scaling of the updates, similarly to $\alpha_t$ that is trainable in \ngpt{}.

Thanks to the residual connections, each layer-update can be seen as a forward Euler discretization of a continuous-time ODE that captures the dynamics of tokens while enabling the deployment of useful calculus tools. In this context, it is convenient to write $X(t)$ as a function of time and replace $X^{t+1}-X^t$ with $\dot{X}(t)$. For any two matrices $X,Y \in \R^{d\times n}$ where $X$ has unit-norm columns, define the projection operator $\mathbf{P}_X Y$ to be the column-wise projection on to the tangent space of the sphere $\calS^{d-1}$:
$$
\mathbf{P}_X Y = \big[ \mathbf{P}_{x_1}  y_1, \ldots, \mathbf{P}_{x_n}y_n\big]\,,
$$
where for any $x \in \calS^{d-1}, y \in \R^d$, $\mathbf{P}_{x}y=y-\langle y,x\rangle x$ is the projection of $y$ onto the tangent space of $\calS^{d-1}$ at $x$. 

The dynamics described by each normalization schemes are presented in Table~\ref{tab:norm}.

\begin{table}[ht]
\centering
\caption{Normalization Schemes in Discrete and Continuous Time Domains. In \ngpt{}, $\alpha_t \in \mathbb{R}$ is a layer-dependent learnable parameter.}
\label{tab:norm}
\begin{tabular}{@{}p{0.15\textwidth}p{0.42\textwidth}p{0.42\textwidth}@{}}
\toprule
\textbf{Scheme} & \textbf{Discrete Time Update} & \textbf{Continuous Time Update} \\
\midrule
\post & $X^{t+1}=\norm\bigl(X^{t}+ A^t(X^{t})\bigr)$ & $\dot{X}(t)=\mathbf{P}_{X(t)}A^t(X(t))$ \\
\addlinespace
\pre & $X^{t+1}= X^{t}+ A^t\bigl(\norm(X^{t})\bigr)$ & $\dot{X}(t)= A^t\bigl(\norm(X(t))\bigr)$ \\
\addlinespace
\mix & $X^{t+1}= \big[\norm\bigl(X^{t}+ A^t(X^{t})\bigr)\big]\1_{t\le \tau}$  & $\dot{X}(t)= \big[\mathbf{P}_{X(t)}A^t(X(t))\big]\1_{t\le \tau}$ \\
& $\phantom{X^{t+1} }+\big[X^{t}+ A^t\bigl(\norm(X^{t})\bigr)\big]\1_{t> \tau}$ & $\phantom{\dot{X}(t)}+\big[A^t\bigl(\norm(X(t))\bigr)\big]\1_{t> \tau}$ \\
\addlinespace
\peri & $X^{t+1}= X^{t}+\norm\bigl(A^t(\norm(X^{t}))\bigr)$ & $\dot{X}(t)= \norm\bigl(A^t(\norm(X(t)))\bigr)$ \\
\addlinespace
\ngpt & $X^{t+1}=\norm\bigl(X^{t}+ \alpha_{t}\,\norm(A^t(X^{t}))\bigr)$ & $\dot{X}(t)=\mathbf{P}_{X(t)}\alpha_t \norm(A^t(X(t)))$ \\
\addlinespace
\cod & $X^{t+1}=\norm\bigl(X^{t}+\frac{1}{\sqrt{t+1}} A^t(X^{t})\bigr)$ & $\dot{X}(t)=\frac{1}{\sqrt{t+1}} \mathbf{P}_{X(t)}A^t(X(t))$ \\
\bottomrule
\end{tabular}
\end{table}

\subsection{Speed regulation formulation}

In \post{}, \ngpt{}, and \cod{}, tokens are constrained to the the sphere $\calS^{d-1}$ with \cod{} simply adjusting the speed of the particles as a function of $t$ compared to \post{}. For the other rules where tokens may have varying magnitude, one final projection is typically applied before the final decoding layer in practice. In particular, this means that decoding depends on \emph{directions} $\theta_j(t)=\norm(x_j(t))$, for $j=1, \ldots, n$.

Interestingly, when tracking only the directional components 
\( \theta_1(t), \ldots, \theta_n(t) \in \mathcal{S}^{d-1} \), 
all normalization rules give rise to interacting particle systems 
evolving on the sphere, governed by a common velocity field but subject 
to distinct, rule-dependent speed-regulation mechanisms. 
Note that this does not imply the particles follow the same trajectories 
at different speeds; indeed the speed parameter has a significant impact on the trajectories. More specifically, directions $\theta_1, \ldots, \theta_n \in \calS^{d-1}$ undergo the \emph{normalized attention dynamics} given by
\begin{equation}
    \label{NA}
\boxed{\;
  \dot \theta_j(t)= \frac{1}{s_j(t)}\,\mathbf{P}_{\theta_j(t)}\,A^t_j(\Theta(t))
  \;}
\tag{NA}
\end{equation}
where $\Theta(t)= [\theta_1(t), \ldots, \theta_n(t)]$ and we recall that  $\mathbf{P}_{\theta}=I_d-\theta\theta^{\top}$ is the projection from $\R^d$ to the tangent space of the sphere at $\theta$.  Using the following identities
\begin{align*}
\dot r_j(t)         &= \langle \theta_j(t) , \dot x_j(t) \rangle\,,\\
\dot \theta_j(t)    &= \frac{1}{r_j(t)} \big( \dot x_j(t)- \dot r_j(t)\theta_j(t)\big)=\frac{1}{r_j(t)} \mathbf{P}_{\theta_j(t)} \dot x_j(t)\,, 
\end{align*}
we readily get:
\begin{table}[htbp]
\centering
\caption{Speed regulation factors}
\label{tab:speed}
\begin{tabular}{@{}l ll@{}}
\toprule
                 & $s_j(t)$  & $\dot r_j(t)$\\
\midrule
\post{} & $1$   & $0$\\
\pre{}  & $r_j(t)$ & $\langle \theta_j(t), A^t_j(\Theta(t))\rangle $\\
\mix{}  & $ \1_{t\le \tau} +r_j(t)\1_{t> \tau} $ &  $\langle \theta_j(t), A^t_j(\Theta(t))\rangle \1_{t> \tau}$\\
\peri{} & $r_j(t)\|A^t_j(\Theta(t))\|$ & $\langle \theta_j(t) , A^t_j(\Theta(t))\rangle /\|A^t_j(\Theta(t))\|$\\
\ngpt{}    & $\alpha_t^{-1}\|A^t_j(\Theta(t))\|$ & $0$\\
\cod{}     & $\sqrt{t+1}$ & $0$\\
\bottomrule
\end{tabular}
\end{table}

\section{Asymptotic clustering}
\label{sec:convergence}
Since the work of \citet{geshkovski2023emergence, mathpersp23}, theoretical analyses of attention dynamics have primarily focused on establishing asymptotic clustering under the \post{} scheme, namely $\theta_j(t) \to \theta^*$ as $t \to \infty$ for all $j=1, \ldots, n$, under a generic initialization; see also \citet{boumal, CheLinPol25}. However, empirical studies have revealed that in practice, tokens often remain trapped in metastable states for extended periods before clustering emerges~\citep{GesKouPol24, BruPasAga24}. Despite this, the clustering phenomenon appears to occur at multiple local scales, and the simplified setting considered in prior work continues to offer valuable insights, as we will demonstrate in the next section. In this section, we extend the analysis and show that asymptotic clustering persists beyond the original \post~framework to other normalization schemes.

Recall that we study the normalized attention dynamics~\eqref{NA} defined by
    \[
    \dot \theta_j(t) = \frac{1}{s_j(t)}\,\mathbf{P}_{\theta_j(t)}\,A^t_j(\Theta(t))=\frac{1}{s_j(t)}\,\mathbf{P}_{\theta_j(t)}\,\sum_{k=1}^n V\theta_k(t)
 \frac{ e^{\beta \langle Q \theta_j(t), K\theta_k(t)\rangle}}{ \sum_{l=1}^n e^{\beta \langle Q \theta_j(t), K \theta_l(t)\rangle}}\, \quad j=1, \ldots, n\,,
    \]
    where the speed regulation factor $s_j(t)$ is given in Table~\ref{tab:speed}. It is interesting to note that both \pre{} and \peri{} are not directly regulated by an explicit mechanism but rather by the \emph{magnitude}. In particular, this mechanism dampens the speed of each token individually according to their magnitude.

    The main observation of \cite{mathpersp23} is that when  $KQ^\top = QK^\top = V$, the \post{} system is a gradient flow for the energy function
  \[
  E(\Theta) := -\sum_{j,k=1}^n e^{\beta \la Q \theta_k, K \theta_j \ra} \,,
  \]
  where we recall that $\Theta=[\theta_1, \ldots, \theta_n]$.
  
  For~\eqref{NA}, we have
  \[
  \dot \theta_j (t)= -\frac{1}{s_j(t) Z_j(t)} \nabla_{\theta_j} E(\Theta(t))\,,\quad \text{where} \ Z_j(t)= \sum_{l=1}^n e^{\beta \langle Q \theta_j(t), K \theta_l(t)\rangle}
  \]
  and $\nabla$ denotes the spherical (Riemannian) gradient.
  
  The above dynamics can be seen as modulated gradient flow, albeit with a complicated modulator that depends on time and space. For vanilla gradient flows, that is for $s_j(t)Z_j(t)=\textrm{const.}$, a celebrated result of \L{}ojasiewicz guarantees convergence of this gradient flow to a critical point of the energy. Following the same steps, we show in the Appendix D.1 that this result extends to the present framework, guaranteeing convergence of any trajectory. From there, we establish the following clustering result.

    \begin{theorem}
    \label{thm:convergence}
    Consider the normalized attention dynamics \eqref{NA} with $Q = K = V = I_d$.  Then for uniformly sampled initializations $\Theta(0) \in (\calS^{d-1})^{\otimes n}$ \post{}, \ngpt{}, \cod{} cluster asymptotically
    \[
    \P[\{\textrm{tokens synchronize to 1 cluster}\}] = 1,
    \]
    whereas for a standard Gaussian sample of $X(0):= r(0)\cdot \Theta(0)$ with $\Theta(0) \in (\calS^{d-1})^{\otimes n}, r(0) \in \mathbb{R}^{\otimes n}$ for \pre{}, \mix{}, \peri{} one has
    \[
    \P[\{\textrm{tokens $\theta_j$ synchronize to 1 cluster}\}\cup\{\min_{j\in [n]} \liminf_{t\to\infty} \dot r_j(t) = 0\}] = 1.
    \] 
    \end{theorem}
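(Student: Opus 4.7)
The plan is to leverage the modulated-gradient-flow form $\dot \theta_j = -\nabla_{\theta_j} E(\Theta)/(s_j Z_j)$ exhibited above the theorem, together with the \L{}ojasiewicz-type result announced for Appendix~D.1, which guarantees that any trajectory of such a modulated flow converges to some critical point $\Theta^\infty$ of $E$. The theorem then reduces to two sub-tasks: (i) classifying limit critical points as either single-cluster global minima or non-minimal saddles/maxima, and (ii) using a measure-theoretic argument on the initialization to rule out the latter.

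For \post{}, \ngpt{} and \cod{}, particles remain on the compact manifold $(\S^{d-1})^{\otimes n}$, where $Z_j(t)\in[e^{-\beta n},e^{\beta n}]$ is uniformly bounded. The corresponding speed factors $s_j$---respectively $1$, $\alpha_t^{-1}\|A^t_j\|$ and $\sqrt{t+1}$---all keep $1/(s_j Z_j)$ non-integrable over $[0,\infty)$, so the Appendix~D.1 result applies and delivers convergence to a critical point of $E$. To upgrade ``critical point'' to ``single cluster'' I would invoke the Center-Stable Manifold Theorem exactly as in \citet{mathpersp23}: the stable manifolds of the non-minimum hyperbolic critical points of $E$ form a countable union of lower-dimensional submanifolds, hence a Lebesgue-null subset of $(\S^{d-1})^{\otimes n}$, so a uniformly sampled $\Theta(0)$ converges almost surely to a minimizer of $E$. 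For $Q=K=V=I_d$, the minimizers of $-\sum_{j,k}e^{\beta\langle\theta_j,\theta_k\rangle}$ are exactly the fully synchronized configurations, establishing the first claim.

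For \pre{}, \mix{} and \peri{}, the added difficulty is that $s_j(t)$ depends on the unbounded magnitudes $r_j(t)$. I would partition the sample space according to whether $\int_0^\infty s_j(t)^{-1}\,dt = \infty$ for every $j$. On this event, the reparametrization $\tau_j(t)=\int_0^t s_j^{-1}$ is unbounded, and the spherical argument of the previous paragraph applies---with the Gaussian initialization ensuring a.s. that $\Theta(0)$ avoids the null stable manifolds of non-minimum critical points---to give clustering. On the complementary event there exists some $j$ with $\int^\infty s_j(t)^{-1}\,dt<\infty$; reading off Table~\ref{tab:speed}, either $s_j = r_j$ or (for \peri{}) $s_j = r_j\|A^t_j\|$ with $\|A^t_j\|\le 1$, while in all three schemes $|\dot r_j|\le 1$. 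A uniform lower bound $\dot r_j \ge c>0$ would then force only linear growth of $r_j$ and hence $\int^\infty 1/r_j = \infty$, a contradiction. Thus $\liminf_{t\to\infty}\dot r_j(t)=0$ for this index, which is the alternative.

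The main obstacle I anticipate lies with \peri{}, where the speed factor $s_j = r_j\|A^t_j\|$ carries the extra term $\|A^t_j\|$ that vanishes precisely as the configuration approaches a cluster: one needs a joint estimate guaranteeing that $1/(s_j Z_j)$ remains non-integrable on the clustering branch---so the Appendix~D.1 machinery still fires---while on the non-clustering branch the relation $\dot r_j = \langle\theta_j, A^t_j\rangle/\|A^t_j\|$ still forces some $r_j$ to stall. A secondary difficulty is verifying that the Center-Stable Manifold argument survives the nonautonomous, state-dependent prefactor $1/(s_j Z_j)$: strict hyperbolicity at non-minimum critical points of $E$ is inherited from the compact \post{} analysis, but the codimension count for the stable manifolds must be re-examined to ensure it is not spoiled by the time-dependent rescaling.
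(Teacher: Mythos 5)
Your high-level skeleton (\L{}ojasiewicz convergence to a critical point, then a Center-Stable Manifold argument to kill the measure of non-synchronized attractors) matches the paper's approach, and the dichotomy you set up for \pre{}/\mix{}/\peri{} is in the right spirit. However, there is a genuine gap: you never construct the autonomous compact system that makes the Center-Stable Manifold step rigorous. For \pre{} the $\theta$-flow $\dot\theta_j=\mathbf{P}_{\theta_j}A_j(\Theta)/r_j$ is non-autonomous and its prefactor depends on the unbounded $r_j(t)$, so the theorem you want to quote does not apply on $(\S^{d-1})^{\otimes n}$ alone. Your proposed fix, a per-token reparametrization $\tau_j(t)=\int_0^t s_j^{-1}$, does not produce a single well-defined flow (the time is $j$-dependent), and even with a common time change the $r_j$-dependence persists, so the state space is still neither compact nor autonomous. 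The paper's key move, which you do not have, is to adjoin the bounded variables $q_j:=r_j/t$ and pass to logarithmic time $\tau=\ln t$, giving the \emph{autonomous} system $d\theta_j/d\tau=\tfrac{1}{q_j}\mathbf{P}_{\theta_j}A_j(\Theta)$, $dq_j/d\tau=\langle\theta_j,A_j\rangle-q_j$ on the compact $(\S^{d-1})^n\times[1/m,Q]^n$ after partitioning by lower bounds on $q_j$; only in this frame do both the \L{}ojasiewicz lemma (Appendix D.2, Lemma~\ref{lem:loj}) and the Center-Stable Manifold Theorem fire.

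You also flag but do not resolve the hyperbolicity issue you call a ``secondary difficulty.'' The paper handles it with two concrete devices you are missing: the extended Jacobian at a critical point has block-triangular structure with $J_{qq}=-I_n$, $J_{\theta q}=0$, reducing instability to the $\theta\theta$ block; and that block is $D\cdot\mathrm{Hess}_\theta E$ with $D$ diagonal positive-definite, which Lemma~\ref{lem:matrix} shows is unstable whenever $\mathrm{Hess}_\theta E$ is (similarity to $P(\mathrm{Hess}\,E)P$ plus Sylvester's law of inertia). Without these, the claim that ``strict hyperbolicity is inherited from the compact \post{} analysis'' is unjustified, since the modulated Jacobian is not symmetric and could in principle have complex or vanishing eigenvalues. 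Finally, your dichotomy event $\{\int s_j^{-1}<\infty\text{ for some }j\}$ is not equivalent to the paper's alternative $\{\min_j\liminf_{t\to\infty}\dot r_j=0\}$: for \pre{} with $|\dot r_j|\le 1$ one always has $r_j\le r_0+t$ and hence $\int 1/r_j=\infty$, making your complementary event empty, whereas the paper's genuine obstruction is $q_j=r_j/t\to 0$, which threatens both the \L{}ojasiewicz upper bound and the compactness of the $(\theta,q)$ frame. This is exactly what the $\liminf\dot r_j=0$ clause captures.
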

    In fact, this result holds not only for $Q^t=K^t=V^t=I_d$ but more generally for $Q^t=Q, K^t=K$, and $V^t=V=Q^\top K=K^\top Q$ as in~\cite{sander2022sinkformers}. The second condition on the magnitude growth can be traced with Table~\ref{tab:speed} definition to work with further. For example, we immediately get the following.

\begin{corollary}
For \pre{}, \peri{} with $n\leq e^{\beta}$ we have unconditional synchronization.
\end{corollary}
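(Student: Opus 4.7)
The strategy is to invoke Theorem~\ref{thm:convergence} and rule out the second alternative $\min_{j}\liminf_{t\to\infty}\dot r_j(t)=0$ under the hypothesis $n\le e^{\beta}$. Concretely, I plan to establish a deterministic, uniform-in-$t$ positive lower bound on $\dot r_j(t)$ that depends only on $\beta$; once in place, the magnitude-degeneracy event in Theorem~\ref{thm:convergence} becomes vacuous and the clustering event holds with probability one.

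For \pre{}, Table~\ref{tab:speed} gives $\dot r_j(t)=\langle\theta_j(t),A_j^t(\Theta(t))\rangle$. Writing $c_{jk}=\langle\theta_j,\theta_k\rangle\in[-1,1]$ and using $Q=K=V=I_d$, one has
\[
\dot r_j \;=\; \frac{e^{\beta}+\sum_{k\ne j} c_{jk}\,e^{\beta c_{jk}}}{\sum_k e^{\beta c_{jk}}}.
\]
The elementary estimate $u\,e^{\beta u}\ge -1$ on $[-1,1]$ (for $u\in[-1,0]$ one has $|u|\,e^{\beta u}\le |u|\le 1$, and for $u\in[0,1]$ the quantity is nonnegative) yields the uniform numerator lower bound $e^{\beta}-(n-1)$, which under the standing hypothesis $n\le e^{\beta}$ is at least $1$. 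The denominator satisfies $Z_j\le n\,e^{\beta}\le e^{2\beta}$. Combining, $\dot r_j(t)\ge e^{-2\beta}>0$ for every $t$ and every configuration $\Theta(t)\in(\calS^{d-1})^n$.

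For \peri{}, Table~\ref{tab:speed} gives $\dot r_j=\langle\theta_j,A_j\rangle/\|A_j\|$. Since $A_j$ is a convex combination of the unit vectors $\theta_k$, one has $\|A_j\|\le 1$; moreover the positivity of $\langle\theta_j,A_j\rangle$ already certifies $A_j\ne 0$, so there is no division-by-zero issue. The same estimate from the \pre{} step then transfers (and in fact improves) to $\dot r_j\ge e^{-2\beta}>0$.

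In both cases we get $\min_j\liminf_{t\to\infty}\dot r_j(t)\ge e^{-2\beta}>0$ almost surely, so Theorem~\ref{thm:convergence} forces the synchronization event to have probability one. I do not anticipate a serious obstacle: the only substantive step is the lower bound on $u\mapsto u\,e^{\beta u}$, and the reason the threshold appears as $n\le e^{\beta}$ (rather than $n\le e^{2\beta}$, say) is precisely the crude but sufficient estimate $Z_j\le n\,e^{\beta}$; sharper thresholds would require a finer control of $Z_j$ as a function of the configuration, which is not needed here.
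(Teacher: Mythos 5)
Your proposal is correct and follows essentially the same argument as the paper: derive a uniform positive lower bound $\dot r_j \ge \tfrac{1}{Z_j}\bigl(e^\beta-(n-1)\bigr) \ge \tfrac{1}{ne^\beta}$ via the elementary estimate that the negative summands are bounded by $1$ in absolute value, then invoke Theorem~\ref{thm:convergence} to discard the magnitude-degeneracy alternative. Your explicit treatment of \peri{} via $\|A_j\|\le 1$ is exactly what the paper's ``\peri{} can be done similarly'' elides, and the slightly cruder final constant $e^{-2\beta}$ instead of $\tfrac{1}{ne^\beta}$ is immaterial.
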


This statement follows from a simple lower bound on $\dot r_j$. We write it for \pre{}, and \peri{} can be done similarly.
\[
\dot r_j = \la \theta_j, A_j(\Theta)\ra = \frac{1}{Z_j}(e^{\beta} \la \theta_j, \theta_j\ra+ \sum_{k\neq j} e^{\beta \la\theta_k, \theta_j \ra} \la \theta_k, \theta_j\ra)\geq \frac{1}{ne^{\beta}}(e^{\beta} - (n-1))\geq \frac{1}{ne^{\beta}},
\]
where we used the fact that any negative term in the second sum is at most $1$, $e^{\beta}\geq n$ and a trivial bound on $Z_j$.

\section{Initial and terminal token velocities}

The previous section established an asymptotic result but did not address the rate at which tokens cluster, an aspect that is crucial for understanding how representations evolve. This question is important because the velocity at time $t$ determines the influence of the $t$th layer in shaping the final token representation.

Before analyzing the propagation speed of tokens in our attention dynamics model, we first discuss a benchmark for desirable behavior. In an efficient architecture, each layer should meaningfully transform token representations, causing substantial displacement in representation space. If tokens remain nearly stationary across many layers, the architecture risks \emph{representation collapse}. Equally important, however, is ensuring that early layers contribute significantly—delaying transformation until later stages can limit the expressive power of the network.

\subsection{Prelude: Symmetric initialization}
\label{sec:symmetric-case}

Following~\cite{mathpersp23, cowsik2024geometric}, we begin with a so-called orthogonal symmetric initialization where $\langle \theta_j(0), \theta_k(0) \rangle = 0$ for $j \neq k$ and $r_j(0) = 1$ for all $j$. This configuration approximately matches that of randomly initialized tokens in high dimension. Due to the symmetry, the cosine similarity $\gamma(t) = \langle \theta_j(t), \theta_k(t) \rangle$ does not depend on $j\neq k$ and the entire token dynamics reduces to the evolution of two scalar quantities: $\gamma(t)$ and $r(t)$. In the Appendix, we derive a simple ODE for $\gamma(t), r(t)$ following~\citet[Theorem~6.8]{mathpersp23}. We plot ODE-based evolution of $\gamma(t)$ in Figure~\ref{fig:cosine-similarity-evolution} with parameters $\beta = 5$, $n = 256$. Despite its simplicity, this setup already provides striking insight into the effects of different normalization schemes. The importance tracks in how cosine similarity evolution is alike in the theoretical formula plotted in Figure~\ref{fig:cosine-similarity-evolution} and the experimental setup with random weights modeled in Figure~\ref{fig:cosine-similarity-evolution-rnd}.

\begin{theorem}
    \label{thm:symmetric}
     Consider the normalized attention dynamics \eqref{NA} with $Q = K = V = I_d$ initialized at a symmetric orthogonal configuration, i.e. $\langle \theta_j(0), \theta_k(0) \rangle = \delta_{jk}$ and $r_j(0)=r_0$ for all $j$. Then, for all $t>0$, the cosine similarity $\gamma(t) = \langle \theta_j(t), \theta_k(t) \rangle$ remains constant across all pairs $j \neq k$ and $\dot \gamma(t)$ for $t \to 0$ and $t \to \infty$ is given by
\begin{center}
\begin{tabular}{@{}l ll@{}}
                 & $t\to 0$  & $t \to \infty$\\
\midrule
\post{} & $\DS \frac{2}{e^\beta + n-1}$   & $\DS Ce^{-2t}$\\
\pre{}  & $\DS \frac{2}{r_0(e^\beta + n-1)}$ & $C/t^3$\\
\mix{}  & $\DS \frac{2}{e^\beta + n-1}$ &  $C/t^3$\\
\peri{} & $\DS \frac{2}{r_0\sqrt{e^{2\beta} + n-1}}$ &  $C/t^3$\\
\ngpt{}    & $\DS \frac{2\alpha_0}{\sqrt{e^{2\beta} + n-1}}$ & $C\DS\alpha_te^{-2\int_C^t \alpha_s d s}$\\
\cod{}     & $\DS \frac{2}{e^\beta + n-1}$ & $\DS C\frac{e^{-4\sqrt{t}}}{\sqrt{t}}$\\
\end{tabular}
\end{center}
where $C>0$ may change from line to line.
\end{theorem}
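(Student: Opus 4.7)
The plan rests on two observations: the symmetric orthogonal initialization is preserved by the dynamics, reducing the system to a pair of scalar ODEs for $(\gamma,r)$; and the asymptotic regimes are obtained by direct substitution at $\gamma=0$ and by linearization at $\gamma=1$. First, since $Q=K=V=I_d$, the velocity field in \eqref{NA} depends on $\Theta$ only through the Gram matrix $(\langle\theta_j,\theta_k\rangle)_{j,k}$ and the magnitudes $r_j$, so it is equivariant under the joint action of any permutation $\sigma\in S_n$ on token indices and any orthogonal $R\in O(d)$ on $\R^d$. For an orthogonal frame with equal magnitudes, every transposition $(j\,k)$ is matched by some $R_\sigma\in O(d)$ mapping the initial data to its image under $\sigma$, and by ODE uniqueness this equivariance propagates for all $t$. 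Consequently $\gamma(t):=\langle\theta_j(t),\theta_k(t)\rangle$ and $r(t):=r_j(t)$ are independent of the chosen indices.

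Under this pair-symmetry the attention matrix has only two distinct entries, $w_{jj}=e^\beta/Z$ and $w_{jk}=e^{\beta\gamma}/Z$ with $Z=e^\beta+(n-1)e^{\beta\gamma}$. A short computation yields
\[
\langle \mathbf{P}_{\theta_j}A_j(\Theta),\theta_k\rangle = \frac{e^{\beta\gamma}(1-\gamma)(1+(n-1)\gamma)}{Z}, \quad \|A_j(\Theta)\|^2 = \frac{e^{2\beta}+2(n-1)\gamma e^{\beta(1+\gamma)}+(n-1)(1+(n-2)\gamma)e^{2\beta\gamma}}{Z^2},
\]
together with $\langle\theta_j,A_j\rangle=[e^\beta+(n-1)\gamma e^{\beta\gamma}]/Z$. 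Since $\dot\gamma=2\langle\dot\theta_j,\theta_k\rangle$, the full dynamics collapses to the master ODE
\[
\dot\gamma(t) = \frac{2\, e^{\beta\gamma}(1-\gamma)(1+(n-1)\gamma)}{s(t)\bigl[e^\beta+(n-1)e^{\beta\gamma}\bigr]},
\]
coupled, when relevant, with the scalar companion equation for $r(t)$ read from Table~\ref{tab:speed}.

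The short-time column follows by evaluating at $\gamma(0)=0$ and $r(0)=r_0$: the numerator becomes $2$, the denominator $s(0)(e^\beta+n-1)$, and plugging in each scheme's $s(0)$, using $\|A_j(\Theta(0))\|=\sqrt{e^{2\beta}+n-1}/(e^\beta+n-1)$ for \peri{} and \ngpt{}, recovers the six listed entries. For the long-time column, Theorem~\ref{thm:convergence} ensures $\gamma(t)\to 1$; setting $u=1-\gamma$, the rational prefactor in the master ODE tends to $1$, so $\dot u = -2u(1+o(1))/s(t)$. For \post{} ($s\equiv 1$) and \cod{} ($s=\sqrt{t+1}$) the ODE integrates explicitly. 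For \pre{}, \mix{}, and \peri{} one uses $\langle\theta_j,A_j\rangle\to 1$ and $\|A_j\|\to 1$ as $\gamma\to 1$ to deduce $\dot r\to 1$, hence $r(t)\sim t$ and $s(t)\sim t$, producing $u\sim C/t^2$ and $\dot\gamma\sim C/t^3$. For \ngpt{}, $\|A_j\|\to 1$ gives $s(t)\sim\alpha_t^{-1}$, and separation of variables delivers the stated integral formula.

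The main obstacle is controlling the $o(1)$ error in the coupled schemes so that it does not corrupt the leading rate. For \pre{}, \mix{}, and \peri{} a small bootstrap is needed: the rate of $\gamma\to 1$ seemingly depends on $r(t)$, while the claim $r(t)\sim t$ seemingly requires knowing $\gamma$ is close to $1$. The resolution is that once $\gamma$ is sufficiently close to $1$ (guaranteed by Theorem~\ref{thm:convergence}), $\dot r$ is bounded below by a positive constant, so $r(t)\gtrsim t$ unconditionally; feeding this back into the equation for $u$ and applying a Gr\"onwall comparison closes the argument.
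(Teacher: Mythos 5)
Your reduction to a scalar ODE for $(\gamma,r)$ via the $S_n\times O(d)$ equivariance, your formulas for $w_{jk}$, $\langle \mathbf{P}_{\theta_j}A_j,\theta_k\rangle$, $\|A_j\|$ and the master ODE, and your treatment of the $t\to 0$ column all match the paper (Appendix~\ref{ap:symmetric}), where the corresponding ODEs are written out scheme by scheme.

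There is, however, one genuine gap in the $t\to\infty$ column: you invoke Theorem~\ref{thm:convergence} to conclude $\gamma(t)\to 1$, but that theorem is an almost-sure statement over a Gaussian (or uniform-on-sphere) initialization, and the symmetric orthogonal configuration is a measure-zero set, so the theorem does not apply here. The paper is explicit about this ("we need to ensure that $\gamma(t)\to 1$ from a symmetric initialization") and instead proves $\gamma\to 1$ directly from the scalar ODEs. Moreover, your bootstrap is in the wrong order: you propose to use $\gamma$ near $1$ to lower-bound $\dot r$ and hence $r(t)\gtrsim t$, and then "feed this back" to control $u=1-\gamma$; but a lower bound on $r$ only gives an \emph{upper} bound on $\dot\gamma$, which cannot establish $\gamma\to 1$ in the first place. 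The paper's argument goes the opposite way: it uses the unconditional bound $\dot r\le 1$ (so $r(t)\le t+r_0$) to get a \emph{lower} bound $\dot\gamma\ge c(1-\gamma)/(t+r_0)$, and Gr\"onwall then forces $1-\gamma(t)\to 0$ because $\int_0^t \ud s/r(s)$ diverges; only then does one deduce $\dot r\to 1$, $r(t)\sim t$, and finally $u\sim C/t^2$. You should replace the appeal to Theorem~\ref{thm:convergence} with this direct Gr\"onwall argument (and for \post{}, \cod{}, \ngpt{} with $\int\alpha_s\,\ud s=\infty$, the monotonicity of $\gamma$ together with the absence of interior fixed points suffices).
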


A few remarks are in order. First, the initial velocities are comparable across models, up to the effects of the tuning parameters $\alpha_0$ and $r_0$. Notably, the temperature parameter $\beta$ exponentially damps the initial velocity, suggesting that initializing $Q$ and $K$ with smaller magnitudes in the early layers may be beneficial. More striking is the effect of speed regulation at \emph{terminal velocity}: \pre{}, \mix{}, and \ngpt{} (with constant $\alpha_t$) exhibit a polynomial slowdown, in contrast to other normalization schemes. While \cod{} converges more slowly than exponential, it still outpaces the polynomial decay. This implies that \pre{}, \mix{}, and \ngpt{} cluster more gradually than their counterparts—indicating a more effective use of intermediate layers and a stronger resistance to representation collapse. Finally, note that the trainable parameter $\alpha_t$ in $\ngpt$ can have a drastic impact on both initial and terminal velocity. See Figure~\ref{fig:cosine-similarity-evolution} for a visual representation of cosine similarity and evolution of $\dot \gamma$ relative to time and position. See Figure~\ref{fig:cosine-similarity-evolution-mod} for comparison between different $\alpha_t$ in \ngpt.

\begin{figure}[ht]
    \centering
    \begin{subfigure}[b]{0.32\textwidth}
        \centering
        \includegraphics[width=\textwidth]{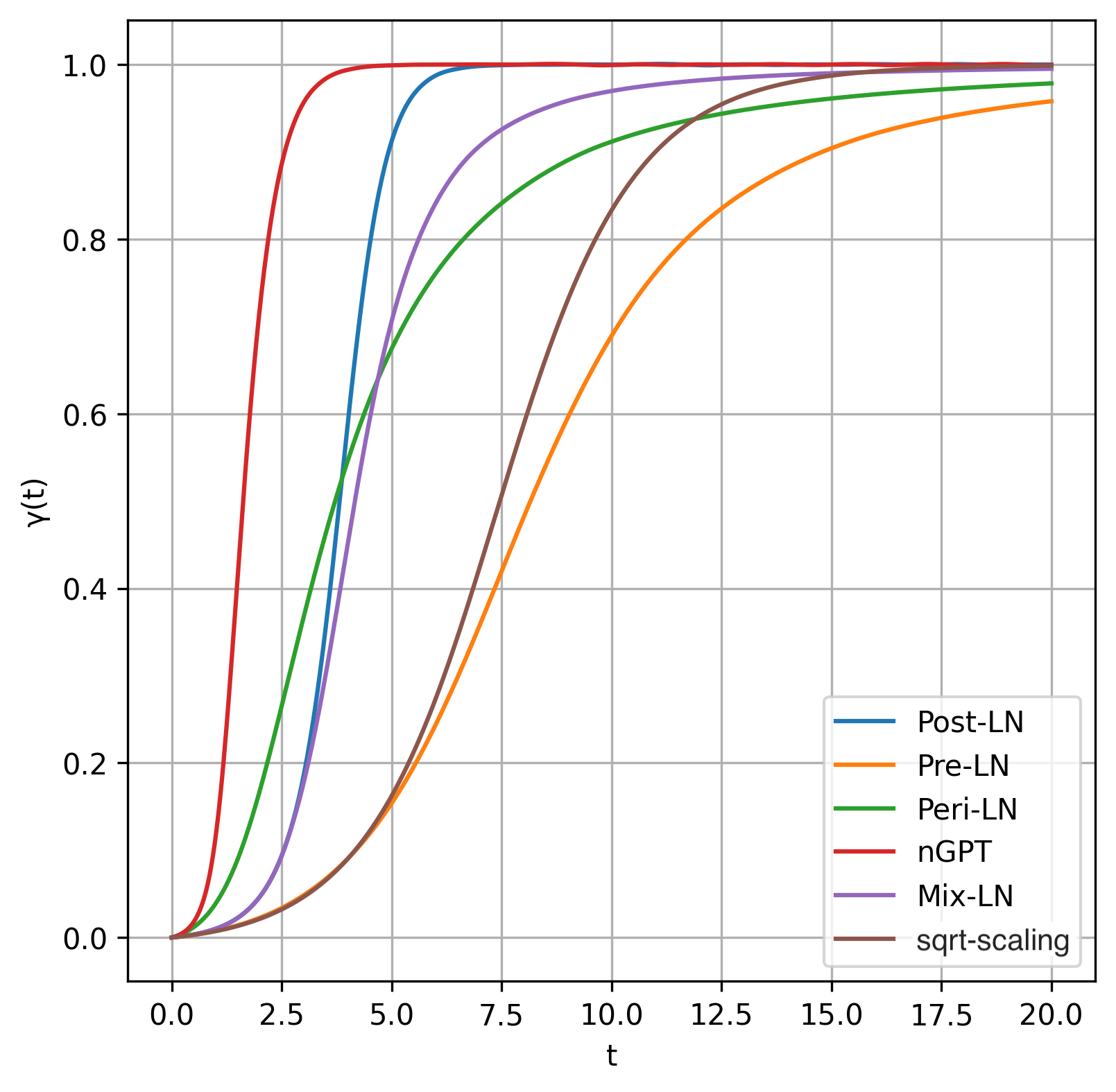}
        \caption{Cosine sim.\ $\gamma(t)$ vs.\ $t$}
    \end{subfigure}
    \hfill
    \begin{subfigure}[b]{0.32\textwidth}
        \centering
        \includegraphics[width=\textwidth]{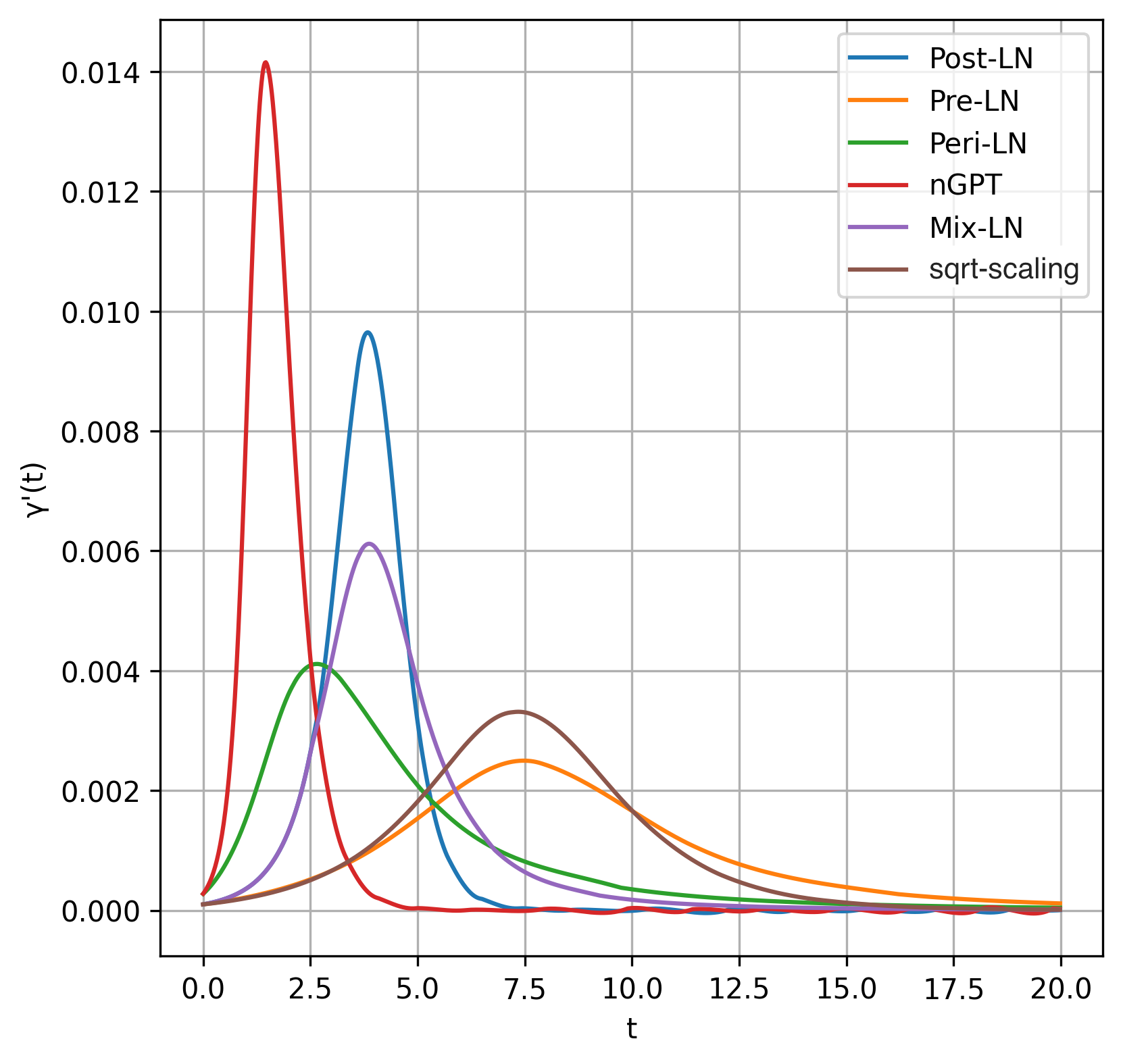}
        \caption{Speed $\dot\gamma(t)$ vs.\ $t$}
    \end{subfigure}
    \hfill
    \begin{subfigure}[b]{0.32\textwidth}
        \centering
        \includegraphics[width=\textwidth]{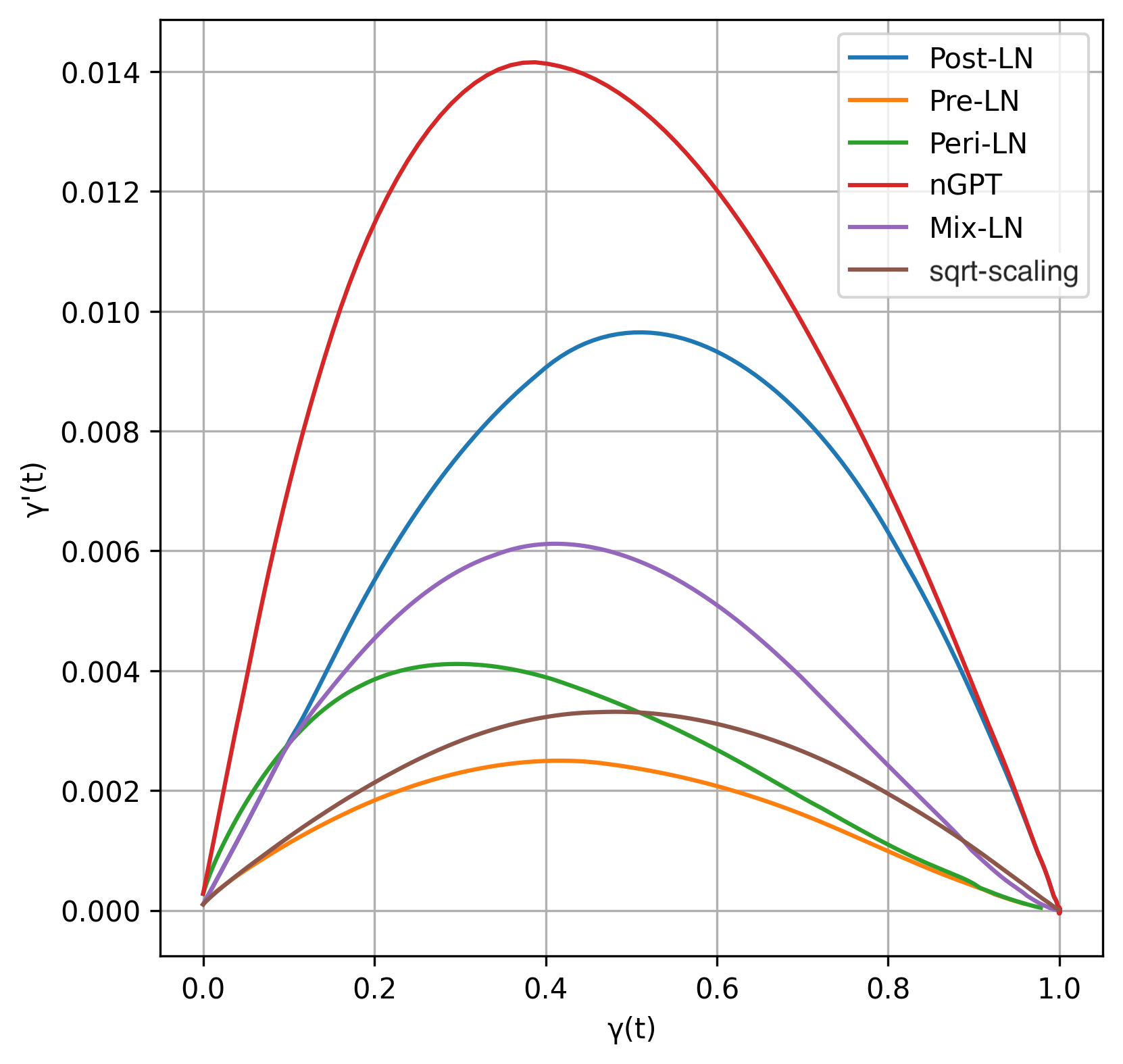}
        \caption{Phase plot $\dot\gamma(t)$ vs.\ $\gamma(t)$}
    \end{subfigure}
    \caption{(a) Evolution of cosine similarity $\gamma(t)$,  
             (b) its speed $\dot{\gamma}(t)$ over time,  
             (c) phase-plot of $\dot{\gamma}(t)$ vs.\ $\gamma(t)$,  
             for introduced normalization strategies. Here \ngpt{} has $\alpha_t \equiv 1$, to showcase the significance of that parameter. \pre{} and \peri{} are the last to converge, mitigating representation collapse. On the other hand, \post{}, \ngpt{} and \peri{} move faster in early layers, effectively utilizing them. In the phase-plot (c) we see how at the same position the speed is defined by a known speed control parameter, ranking different methods.}
    \label{fig:cosine-similarity-evolution}
\end{figure}

\begin{figure}[ht]
    \centering
    \begin{subfigure}[b]{0.32\textwidth}
        \centering
        \includegraphics[width=\textwidth]{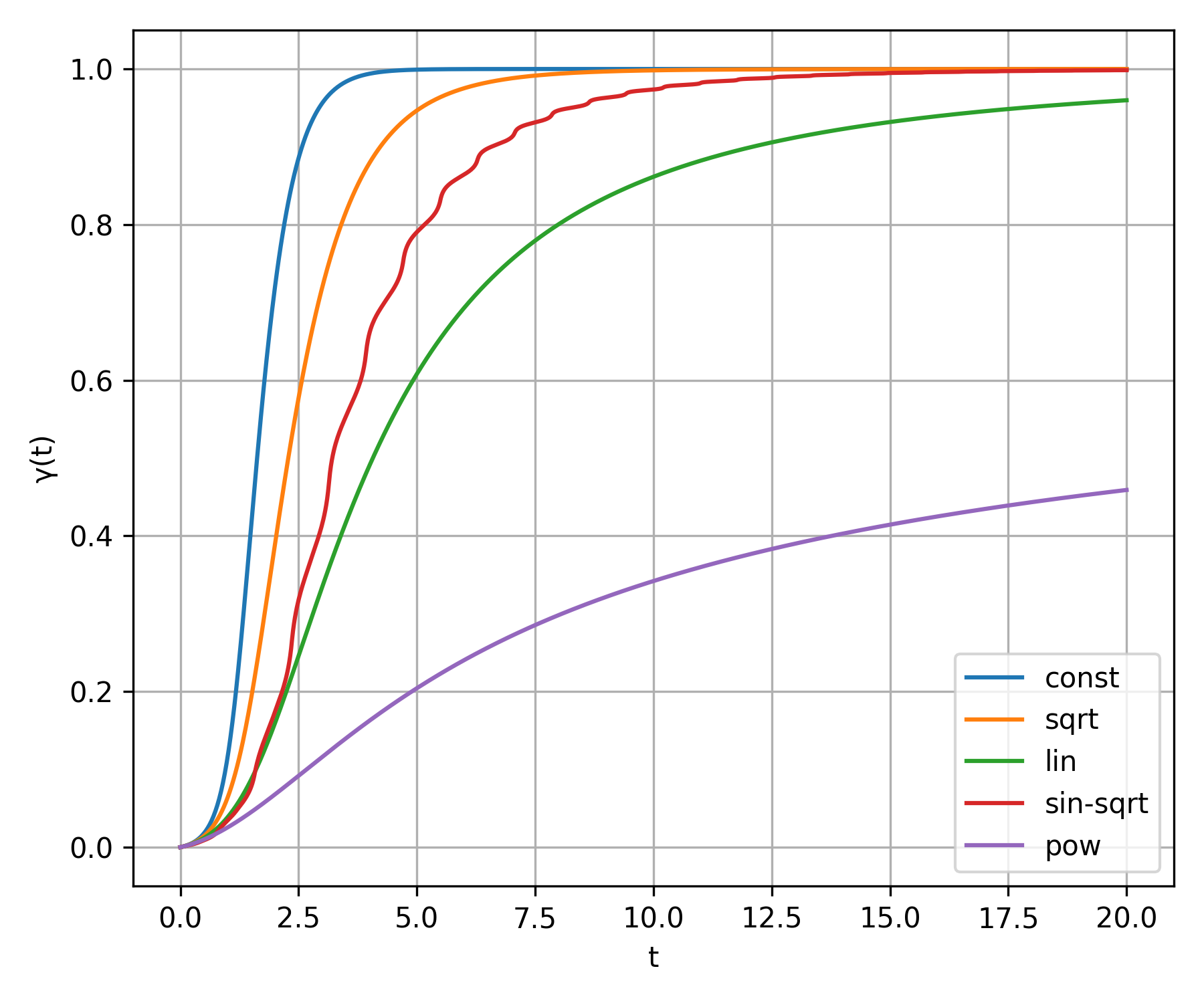}
        \caption{Cosine sim.\ $\gamma(t)$ vs.\ $t$}
    \end{subfigure}
    \hfill
    \begin{subfigure}[b]{0.32\textwidth}
        \centering
        \includegraphics[width=\textwidth]{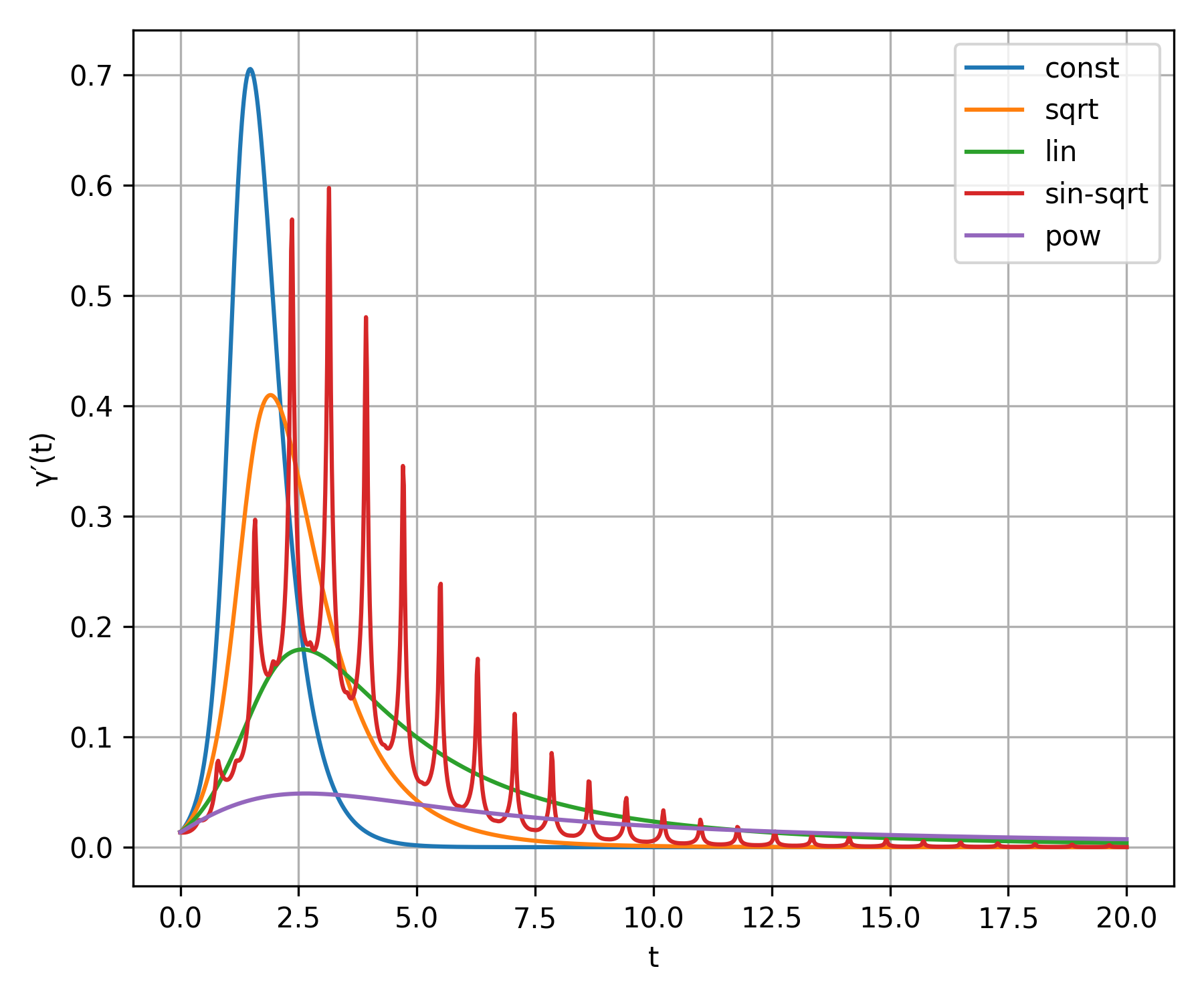}
        \caption{Speed $\dot\gamma(t)$ vs.\ $t$}
    \end{subfigure}
    \hfill
    \begin{subfigure}[b]{0.32\textwidth}
        \centering
        \includegraphics[width=\textwidth]{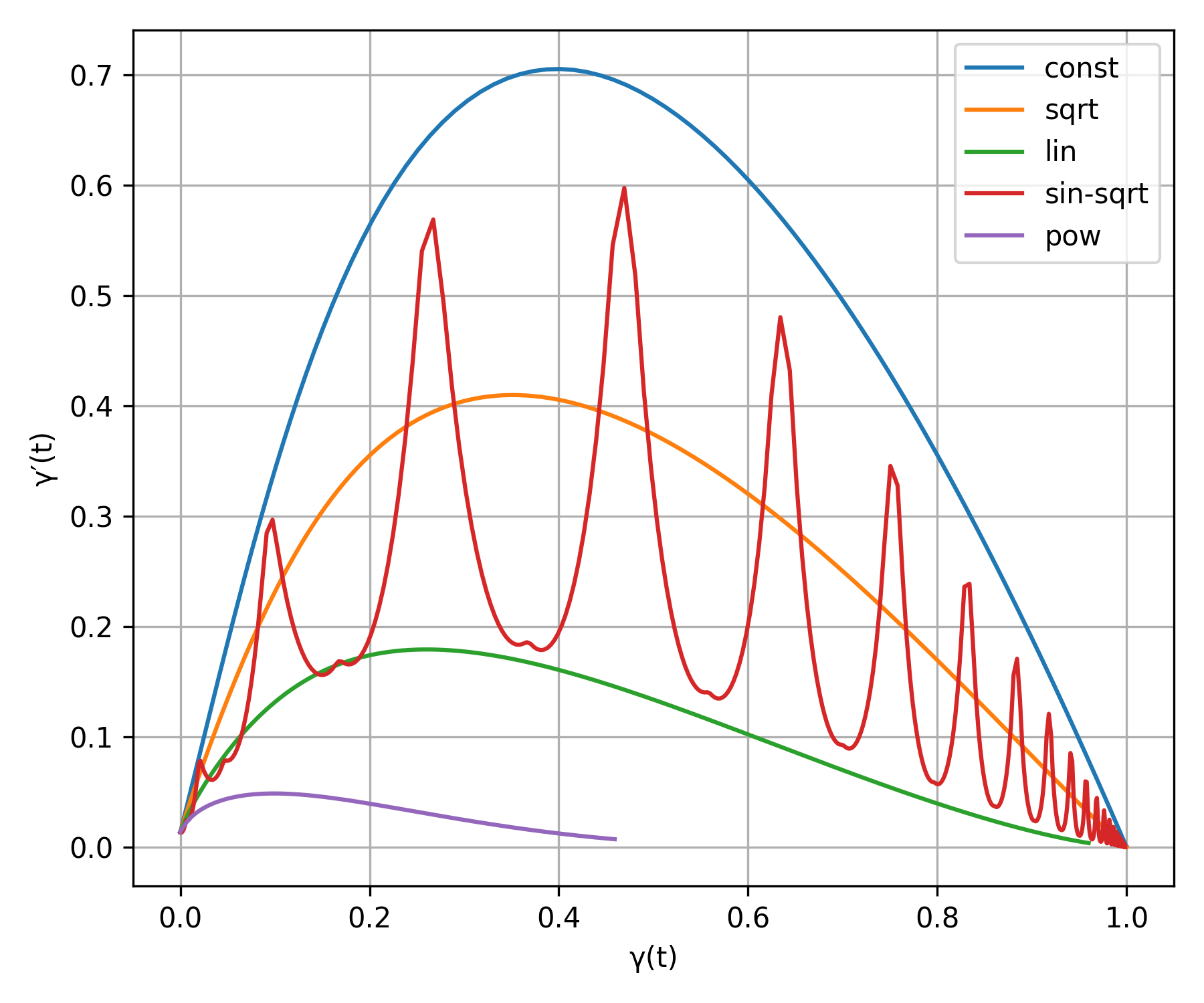}
        \caption{Phase plot $\dot\gamma(t)$ vs.\ $\gamma(t)$}
    \end{subfigure}
    \caption{Convergence in \ngpt{} from orthogonal initialization for different choices of $\alpha_t$ -- constant, root, linear, combination of linear and constant with weights $\sin(4t)$ and $\cos(4t$).}
    \label{fig:cosine-similarity-evolution-mod}
\end{figure}

\begin{figure}[ht]
   
        \centering
        \includegraphics[width=0.7\textwidth]{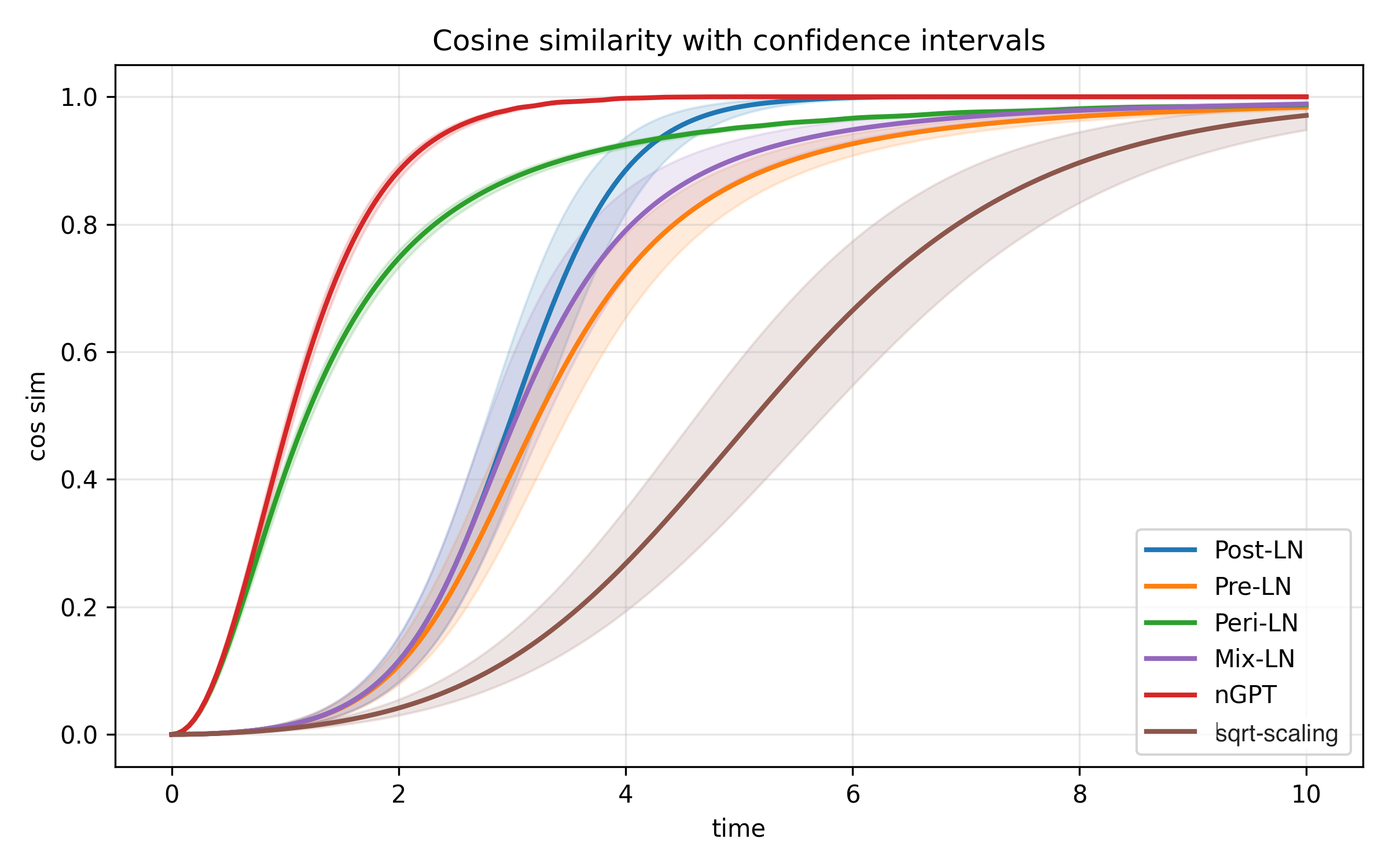}
    \caption{Evolution of average cosine similarity $\gamma(t)$ with 90\% confidence interval with randomly initialized weights (Kaiming init), $d = 512, n_{\textrm{heads}}=1, \beta = \sqrt{d}, d>n$ and random initial $X$. We set $\alpha_t \equiv 1$  for \ngpt{}. We see that \peri{} and \ngpt{} initially move faster, and that \post{} and \ngpt{} eventually collapse tokens faster than \pre{} and \peri{}. See Appendix~\ref{ap:pics} for more studies, including multi-head, untied weights and more.}
    \label{fig:cosine-similarity-evolution-rnd}
\end{figure}

\subsection{Initial velocity}
\label{sec:fast_start}

The symmetric evolution described above is too coarse to properly discriminate between normalization schemes at initialization. Here we show that early \peri{}/\ngpt{} layers move tokens \emph{order-one} distances on the hypersphere, while \post{} and \pre{} advance more slowly, with step sizes on the order of $O(\log n/d)$.

\begin{theorem}
\label{thm:initial-velocity}
Let $Q,K,V\in\R^{d\times d}$ satisfy
$\max\{\|Q^\top K\|_{op},\|V\|_{op}\}\leq 1$, $\beta = 1$.
Let the initial directions
$\theta_j(0)\stackrel{\text{i.i.d.}}{\sim}\mathrm{Unif}(\S^{d-1})$
and set the attention vector
\begin{equation*}
A_j(\theta)=\frac{1}{Z_j}\sum_{k=1}^{n} 
e^{\beta\langle Q\theta_j,K\theta_k\rangle}V\theta_k,
\qquad 
Z_j=\sum_{k=1}^{n}e^{\beta\langle Q\theta_j,K\theta_k\rangle}.
\end{equation*}

Then there are absolute constants $c,C>0$ such that for $e^{\sqrt{d}}\geq n\log n \geq d$, with probability
$1-n^{-C}$ simultaneously for all $j\in[n]$
\begin{equation*}
\|A_j(0)\| \leq C\left(\sqrt{\frac{\log n}{n}} + \frac{ \log n}{d}\right)\,.
\end{equation*}
\end{theorem}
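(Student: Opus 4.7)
The plan is to exploit the fact that in high dimension, uniform points on $\calS^{d-1}$ have near-orthogonal inner products: the attention weights $w_{jk}=e^{s_{jk}}/Z_j$ (with $s_{jk}:=\langle Q\theta_j,K\theta_k\rangle$) are therefore close to uniform, so that $A_j$ is close to $n^{-1}\sum_k V\theta_k$, which is itself small by standard vector concentration.

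First I would establish uniform control of the pairwise inner products. Conditioning on $\theta_j$, each $s_{jk}=\langle K^\top Q\theta_j,\theta_k\rangle$ with $k\neq j$ is the inner product of a fixed vector of norm $\le 1$ with a uniform point on $\calS^{d-1}$; sub-Gaussian concentration on the sphere gives $\P[|s_{jk}|\ge t]\le 2\exp(-cdt^2)$. Choosing $t=c_1\sqrt{\log n/d}$ and taking a union bound over the $n^2$ pairs, together with the assumption $\log n\le \sqrt d$ (from $n\log n\le e^{\sqrt d}$), yields $\eta:=\max_{j\ne k}|s_{jk}|\le c_1\sqrt{\log n/d}$ on an event $\calE_1$ of probability at least $1-n^{-C_1}$, with $\eta$ small.

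On $\calE_1$, Taylor-expand $e^{s_{jk}}=1+s_{jk}+R_{jk}$ with $|R_{jk}|\le \tfrac{1}{2}e^\eta s_{jk}^2\le C\log n/d$ for each $k\neq j$. The diagonal contribution $e^{s_{jj}}V\theta_j$ has norm at most $e^{\|Q^\top K\|_{op}}\le e$, hence at most $e/(n-1)\le C\log n/d$ after dividing by $Z_j$ (using $n\ge d/\log n$). The remaining sum decomposes as
\[
\frac{1}{n-1}\sum_{k\ne j}e^{s_{jk}}V\theta_k \;=\; \underbrace{\frac{1}{n-1}\sum_{k\ne j}V\theta_k}_{(\mathrm{I})} \;+\; \underbrace{V\,\widehat\Sigma_{-j}\,K^\top Q\theta_j}_{(\mathrm{II})} \;+\; \underbrace{\frac{1}{n-1}\sum_{k\ne j}R_{jk}V\theta_k}_{(\mathrm{III})},
\]
where $\widehat\Sigma_{-j}:=(n-1)^{-1}\sum_{k\ne j}\theta_k\theta_k^\top$. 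Term $(\mathrm{I})$ is a vector Hoeffding sum of bounded zero-mean i.i.d.\ vectors (each of norm $\le 1$), so its norm is $\le C\sqrt{\log n/n}$ with probability $\ge 1-n^{-C_2}$, uniformly in $j$. For $(\mathrm{II})$, since $\E\widehat\Sigma_{-j}=I_d/d$, the mean has norm $\le 1/d$; vector Bernstein applied to the i.i.d.\ summands $(\theta_k^\top u_j)V\theta_k$ conditional on $u_j:=K^\top Q\theta_j$ bounds the fluctuation by $C(\sqrt{\log n/(nd)}+\log n/n)$, both at most $C\log n/d$ in the regime $d\le n\log n$. Term $(\mathrm{III})$ is bounded by $\max_k|R_{jk}|\le C\log n/d$. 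An analogous scalar analysis (replacing $V\theta_k$ by $1$) shows $Z_j/(n-1)\ge 1/2$ with high probability.

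Combining these estimates and union-bounding over $j\in[n]$ gives $\|A_j(0)\|\le C(\sqrt{\log n/n}+\log n/d)$ on a common event of probability $\ge 1-n^{-C}$, as required. The most delicate step is $(\mathrm{II})$, where one must control $\|V\widehat\Sigma_{-j}u_j\|$ uniformly over $j$ even though $u_j$ itself is random through $\theta_j$. Conditioning on $\theta_j$ reduces this to a standard vector concentration with known mean $Vu_j/d$, but the sub-exponential tail of vector Bernstein produces a $\log n/n$ error term that must be absorbed into the target $\log n/d$ budget via the hypothesis $d\le n\log n$.
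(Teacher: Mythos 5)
Your proof is correct and follows essentially the same strategy as the paper's: near-orthogonality of i.i.d.\ spherical points, second-order Taylor expansion of the softmax weights, and separate concentration of the mean, linear, and remainder pieces (with slightly different bookkeeping of the normalization $Z_j$, which you factor out via $Z_j\ge (n-1)/2$ whereas the paper absorbs it into the remainder term). One small slip that does not affect the conclusion: the $\log n/n$ term you get from vector Bernstein in step $(\mathrm{II})$ is \emph{not} $\le C\log n/d$ under $d\le n\log n$ (that would require $d\lesssim n$); however it is always $\le\sqrt{\log n/n}$, so it is absorbed into the other piece of the target bound rather than the $\log n/d$ budget.
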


To interpret the significance of Theorem~\ref{thm:initial-velocity}, recall from Table~\ref{tab:speed} that the initial velocity of direction $\theta_j$ is dampened by a factor proportional to $\|A_j(0)\|$ for both \peri{} and \ngpt{}. Consequently the first–layer angular displacement of \peri{} and \ngpt{} exceeds that of \post{}, \pre{}, \mix{}, and \cod{}, by a factor
$\Omega(\min(d/\log n, \sqrt{n/\log n}))$.

\subsection{Terminal velocity}
\label{sec:local_speed}

The idealized setup of Section~\ref{sec:symmetric-case} sheds light on a qualitative difference between \post{} and \pre{}: \post{} clusters tokens much more aggressively than \pre{} in the late stages of clustering.  In retrospect, the intuition behind this phenomenon is rather clear: under \pre{}, the angular velocity $\theta_j(t)$ of token $j$ is divided by a growing radial factor \( r_j(t) \), which increasingly dampens the rate at which tokens collapse toward one another. In contrast, \post{} normalizes this growth away, allowing tokens to continue clustering at a higher rate.

In this section, we go beyond the symmetric case of Section~\ref{sec:symmetric-case} and analyze a simplified setting in which tokens are pre-clustered, in the sense that they all lie within a narrow cone. This configuration captures the behavior of a single well-formed cluster and isolates the dynamics from interference by other clusters. The results below confirm our findings of Section~\ref{sec:symmetric-case} indicating that this idealized setup is already informative.

\textbf{Radial Growth under Pre-LN.}
Our first goal is to estimate the rate of growth of \( r_j(t) \), the norm of token \( j \)'s representation, under \pre{} normalization. Empirically, the growth of hidden states in transformers has been well-documented. For instance, studies such as \citep{xiong2020layer, kedia2024transformersstableendtoendsignal} observe that in randomly initialized transformers, \( r_j(t) \sim \sqrt{t} \), reflecting the diffusive nature of a random walk induced by randomly sampled projections \( V \). 

However, in an aligned regime where all tokens are directionally coherent, the dynamics reinforce alignment and exhibit linear radial growth: \( r_j(t) \sim t \) as in Section~\ref{sec:symmetric-case}. This linear scaling significantly alters the clustering behavior. Because the angular update is effectively scaled by \( 1/r_j(t) \), linear growth in \( r_j(t) \) slows the clustering rate from exponential to polynomial. 

\textbf{Speed of cluster collapse.}  To quantify the normalization induced slowdown, we introduce the $\mathrm{Var}(t)$ as a proxy for intra-cluster variance. Specifically, given token directions, $\theta_1(t), \ldots, \theta_n(t)$ let
\[
\mathrm{Var}(t) := \frac1n \sum_{k=1}^n \|\theta_k(t) - \bar\theta(t)\|^2\,, \quad \text{where}\ \bar\theta = \frac{1}{n} \sum_{j=1}^n \theta_j \,.
\]

\begin{theorem}
\label{thm:preln-slow}
  Consider the normalized attention dynamics with $V = I_d$  and arbitrary $Q, K$ s.t. $\|Q^\top K\|\leq 1$, initialized at $\theta_1(0), \ldots, \theta_n(0)$ in a local cone, namely $\langle \theta_j(0), \theta_k(0)\rangle \ge 1-\delta$ for  $\delta < 1/(100n^2\beta^2)$. Let the cluster center be defined as \( \bar\theta = \frac{1}{n} \sum_{j=1}^n \theta_j \). Then the following properties hold
\begin{itemize}
\item[(i)] \textbf{Radial growth} For all \( k \), the radial component satisfies
\[
r_k(t) \geq (1 - \delta) t, \quad \text{for both \pre{} and \peri{}}
\]
\item[(ii)] \textbf{Speed of clustering.} It holds
\[
 \frac{d}{dt} \mathrm{Var}(t) =
 \left\{
 \begin{array}{ll}
     -\Theta(\mathrm{Var}(t)), & \quad \text{for \post{}}\\
 -\Theta\left(\mathrm{Var}(t)/t \right) &   \quad \text{for \pre{}}\\
 -\Theta\left(\mathrm{Var}(t)/t \right) & \quad \text{for \peri{}} \\
 -\Theta\left(\mathrm{Var}(t)/\alpha_t\right) & \quad \text{for \ngpt{}} \\
 -\Theta\left(\mathrm{Var}(t)/t \right) & \quad \text{for \mix{}}\\
 -\Theta\left(\mathrm{Var}(t)/\sqrt{t}\right) & \quad \text{for \cod{}} \\

 \end{array}
\right.
\]
\end{itemize}
\end{theorem}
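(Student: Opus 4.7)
The plan is to handle part (i) as a short direct computation and then reduce part (ii) to a linearized variance ODE whose rate is set by the scheme-specific factor $s_j(t)$ from Table~\ref{tab:speed}. For part (i), under \pre{} we have $\dot r_j = \langle \theta_j, A^t_j(\Theta)\rangle$, and since $V = I_d$, $A^t_j(\Theta) = \sum_k w_{jk}\theta_k$ is a convex combination of the $\theta_k$, so
\[
\dot r_j = \sum_{k=1}^n w_{jk}\langle\theta_j,\theta_k\rangle \geq \min_k \langle\theta_j,\theta_k\rangle \geq 1-\delta,
\]
which integrates to $r_j(t) \geq (1-\delta)t$. For \peri{}, $\dot r_j = \langle\theta_j,A^t_j\rangle/\|A^t_j\|$, and the convex-combination property also gives $\|A^t_j\|\leq 1$, so the identical lower bound holds. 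Both conclusions hinge on the narrow-cone condition being preserved forward in time, which is the main step discussed below.

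For part (ii), I would track $\mathrm{Var}(t) = 1 - \|\bar\theta(t)\|^2$ with $\bar\theta = n^{-1}\sum_j\theta_j$, which yields
\[
\frac{d}{dt}\mathrm{Var}(t) = -\frac{2}{n}\sum_{j=1}^n \frac{1}{s_j(t)}\,\langle\mathbf{P}_{\theta_j}A^t_j(\Theta),\bar\theta\rangle.
\]
Under the narrow-cone assumption together with $\|Q^\top K\|\leq 1$, each softmax weight $w_{jk}$ lies within a factor $e^{O(\beta\delta)}$ of $1/n$, so $A^t_j(\Theta)$ is close to $\bar\theta$. Writing $\theta_j = \cos\alpha_j\,\theta^* + \sin\alpha_j\,\eta_j$ with $\theta^* = \bar\theta/\|\bar\theta\|$ and $\eta_j\perp\theta^*$, a small-angle expansion gives $\langle\mathbf{P}_{\theta_j}A^t_j,\bar\theta\rangle = \|\bar\theta\|^2\sin^2\alpha_j(1+O(\beta\delta))$, and summing over $j$, combined with $\mathrm{Var}(t)=1-\|\bar\theta\|^2\approx n^{-1}\sum_j\alpha_j^2$, yields $\sum_j\langle\mathbf{P}_{\theta_j}A^t_j,\bar\theta\rangle = \Theta(n\cdot\mathrm{Var}(t))$. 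Hence $\tfrac{d}{dt}\mathrm{Var}(t) = -\Theta(\mathrm{Var}(t)/\bar s(t))$, where $\bar s(t)$ represents $s_j(t)$ up to $O(1)$ factors across $j$. The six claimed rates then follow by reading $\bar s(t)$ off Table~\ref{tab:speed}: $\bar s\equiv 1$ for \post{}; $\bar s\asymp t$ for \pre{} (combining part (i) with the trivial upper bound $\dot r_j\leq 1$) and for \mix{} in the regime $t>\tau$; $\bar s\asymp t$ for \peri{} since $\|A^t_j\|\in[1-O(\delta),1]$; $\bar s\asymp 1/\alpha_t$ for \ngpt{}; and $\bar s=\sqrt{t+1}$ for \cod{}.

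The main technical obstacle will be establishing invariance of the narrow-cone condition: showing that $\min_{j\neq k}\langle\theta_j(t),\theta_k(t)\rangle \geq 1-\delta'$ persists along the trajectory for every scheme, with $\delta'$ slightly larger than $\delta$ but still small enough to close the bootstrap. I would handle this via a differential inequality on the minimum pairwise inner product, exploiting the fact that in the narrow cone the projected velocity $\mathbf{P}_{\theta_j}A^t_j$ induces a contraction toward the cluster mean. The hypothesis $\delta < 1/(100n^2\beta^2)$ appears calibrated precisely to absorb simultaneously the softmax deviation of order $\beta\delta$, the linearization remainder, and the inverse speed factor $1/s_j(t)$. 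Once cone preservation is in hand, controlling the multiplicative constants hidden in the $\Theta(\cdot)$ is routine bookkeeping.
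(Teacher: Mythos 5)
Your approach is essentially the same as the paper's: differentiate $\mathrm{Var}(t) = 1-\|\bar\theta\|^2$, isolate the contribution of the ``mean attraction'' from the softmax fluctuation, control the fluctuation via $|w_{jk}-1/n| = O\bigl(\tfrac{1}{n}(e^{\beta\sqrt{2\delta}}-1)\bigr)$, and then read off the scheme-specific rates from the bounds on $s_j(t)$ (your part (i) and the observation $\|A_j\|\approx 1$ are exactly what the paper uses to pin down $s_j$ for \pre{}, \peri{}, \ngpt{}). The one place where the paper's route is noticeably cleaner than your small-angle parametrization is the main term: rather than writing $\theta_j = \cos\alpha_j\,\theta^* + \sin\alpha_j\,\eta_j$ and Taylor-expanding, the paper decomposes $P_kA_k = P_k\bar\theta + P_k(A_k-\bar\theta)$ and uses the exact identity
\[
\langle\theta_k - \bar\theta,\,P_k\bar\theta\rangle \;=\; \langle\theta_k,\bar\theta\rangle^2 - \|\bar\theta\|^2,
\]
which, summed over $k$ with $\sum_k\langle\theta_k,\bar\theta\rangle = n(1-\mathrm{Var})$, yields matching upper and lower bounds $-2\mathrm{Var} + 2n\mathrm{Var}^2 \le \sum_k(\langle\theta_k,\bar\theta\rangle^2-\|\bar\theta\|^2) \le -2\mathrm{Var}+2\mathrm{Var}^2$ by a simple fixed-sum/sum-of-squares extremization, with no linearization error to absorb. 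On the ``main technical obstacle'' you flag (forward invariance of the narrow cone), the paper sidesteps an explicit bootstrap by observing that at any time $t$ one has $\max_{j,k}\|\theta_j-\theta_k\|^2 \le 4n\,\mathrm{Var}(t)$ and $\mathrm{Var}(t)\le\delta(t)$, and since the bound $\mathrm{Var}'\le 0$ makes $\mathrm{Var}$ nonincreasing, the cone width stays controlled by $\mathrm{Var}(0)$ — so you would be well served by the same trick rather than a separate differential inequality on $\min_{j\ne k}\langle\theta_j,\theta_k\rangle$. Otherwise your proposal hits all the right structural points.
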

Again this result corroborates the findings of Section~\ref{sec:symmetric-case}: \post{} and \cod{} cluster token directions at an exponential rate, while \pre{}, \peri{} and \mix{} slow down to a polynomial (\( \sim 1/t^C \)) decay. Moreover, \ngpt{} has the ability to control rate of clustering through $\alpha_t$. This confirms that \textbf{Pre-LN makes better use of depth}, as tokens continue to evolve meaningfully across many layers, rather than collapsing too quickly. In particular, this difference explains why \pre{}  is less prone to  \emph{representation collapse} in very deep models compared to \post{}.

Theorems~\ref{thm:initial-velocity}--\ref{thm:preln-slow} together give concrete guidelines to select a normalization scheme with large initial and terminal velocities so as to ensure adequate progress of token representations across both first and deep layers. A clear winner here is \peri{} that manages to do both automatically, and \ngpt{} that has ability to control the behavior via $\alpha_t$.

\section{Limitations}
\label{sec:limitations}
Our study offers a unifying dynamical-systems view of several normalization schemes, yet some theoretical and practical caveats temper its scope.

\paragraph{Theoretical limitations}
Theorem \ref{thm:convergence} proves that every trajectory of the normalized-attention ODE converges. Our approach relies on transforming the system to a compact autonomous frame, therefore it gives some convergence rate(that roughly aligns with Theorem\ref{thm:symmetric}), but it furnishes neither an explicit rate nor any metastability guarantees. 

We bound only the initial and terminal speeds; the intermediate regime remains uncharacterized. In particular, one flow could enter a region where it moves faster than another—even though its speed-control factor is larger. Comparing two flows in the general case, even when one enjoys a higher speed-control factor, remains an open problem.

Because of the strict assumptions on the weight matrices, the analysis does not capture the full behavior observed in both theory and practice. For example, in this work representation norms in Pre-LN are predicted to grow linearly (when matrices Q, K, V are tied), whereas empirical work reports a $\sqrt t$ trend at initialization (when weights are random). Reconciling these gaps calls for a stochastic analysis of the problem.

Our theory also leaves unexplained several optimization pathologies—such as exploding updates in Pre-LN—because it omits working with the gradient propagation. A companion gradient-flow analysis is required for a complete picture and is the subject of ongoing work.

\paragraph{Practical limitations}
From a practical perspective, we make two key simplifications. (i) MLP layers are omitted to focus purely on attention; and (ii) the query, key, and value matrices obey restrictive assumptions. Although the intuition gained from these toy settings is instructive, the proofs rely heavily on the simplifying hypotheses. Finally, in this work, we do not give any specific model architecture to train and validate, which currently limits direct architectural recommendations we could offer.

Addressing these limitations—tight metastability bounds, inclusion of MLP layers, gradient-flow analysis, and empirical verification—constitutes fertile ground for future research.

\section*{Acknowledgments}
PR is supported by NSF grants DMS-2022448 and CCF-2106377.

\bibliographystyle{apalike} 
\bibliography{transformer_references.bib}

@article{zhang2019rmsnorm,
  title={Root Mean Square Layer Normalization},
  author={Zhang, Biao and Titov, Ivan},
  journal={arXiv},
  year={2019}
}

@inproceedings{BahChoBen15a,
	author = {Bahdanau, Dzmitry and Cho, Kyunghyun and Bengio, Yoshua},
	booktitle = {ICLR},
	title = {Neural Machine Translation by Jointly Learning to Align and Translate},
	year = {2015}}

@inproceedings{KarPolRig24,
	author = {Nikita Karagodin and Yury Polyanskiy and Philippe Rigollet},
	booktitle = {NeuIPS},
	title = {Clustering in Causal Attention Masking},
	year = {2024}}

@article{GesKouPol24,
	author = {Borjan Geshkovski and Hugo Koubbi and Yury Polyanskiy and Philippe Rigollet},
	title = {Dynamic metastability in the self-attention model},
journal={arXiv},
	year = {2024}}

@inproceedings{BruPasAga24,  
    title={Emergence of meta-stable clustering in mean-field transformer models},
  author={Giuseppe Bruno and Federico Pasqualotto and Andrea Agazzi},
  booktitle={ICLR},
  year={2025}
}

@inproceedings{NociAnagBig22,
    title = {Signal Propagation in Transformers: Theoretical Perspectives and the Role of Rank COllapse},
    author = {Lorenzo Noci and Sotiris Anagnostidis and Luca Biggio and Antonio Orvieto and Sidak Pal Singh and Aurelien Lucchi},
    booktitle={NeurIPS},
    year = {2022}
}

@article{GesRigRui24,
	author = {Borjan Geshkovski and Philippe Rigollet and Dom{\`e}nec Ruiz-Balet},
	title = {Measure-to-measure interpolation using Transformers},
journal={arXiv},
	year = {2024}}

@article{mathpersp23,
      title={A mathematical perspective on Transformers}, 
      author={Borjan Geshkovski and Cyril Letrouit and Yury Polyanskiy and Philippe Rigollet},
      year={2025},
      journal={Bull. Amer. Math. Soc.}
}

@article{geshkovski2023emergence,
  title={The emergence of clusters in self-attention dynamics},
  author={Geshkovski, Borjan and Letrouit, Cyril and Polyanskiy, Yury and Rigollet, Philippe},
  journal={NeurIPS},
  year={2023}
}

@article{vaswani2017attention,
  title={Attention is all you need},
  author={Vaswani, Ashish and Shazeer, Noam and Parmar, Niki and Uszkoreit, Jakob and Jones, Llion and Gomez, Aidan N and Kaiser, {\L}ukasz and Polosukhin, Illia},
  journal={Advances in Neural Information Processing Systems},
  volume={30},
  year={2017}
}

@inproceedings{sander2022sinkformers,
  title={Sinkformers: {T}ransformers with doubly stochastic attention},
  author={Sander, Michael E and Ablin, Pierre and Blondel, Mathieu and Peyr{\'e}, Gabriel},
  booktitle={AISTATS},
  year={2022}
}

@inproceedings{he2016identity,
  title={Identity mappings in deep residual networks},
  author={He, Kaiming and Zhang, Xiangyu and Ren, Shaoqing and Sun, Jian},
  booktitle={Computer Vision--ECCV 2016: 14th European Conference, Amsterdam, The Netherlands, October 11--14, 2016, Proceedings, Part IV 14},
  pages={630--645},
  year={2016},
  organization={Springer}
}

@article{kedia2024transformersstableendtoendsignal,
      title={Transformers Get Stable: An End-to-End Signal Propagation Theory for Language Models}, 
      author={Akhil Kedia and Mohd Abbas Zaidi and Sushil Khyalia and Jungho Jung and Harshith Goka and Haejun Lee},
      year={2024},
      journal = {arXiv}
}

@article{Haraux12,
  author    = {Alain Haraux},
  title     = {Some applications of the Łojasiewicz gradient inequality},
  journal   = {Communications on Pure and Applied Analysis},
  year      = {2012},
  volume    = {11},
  number    = {6},
  pages     = {2417--2427},
  doi       = {10.3934/cpaa.2012.11.2417}
}

@article{cowsik2024geometric,
  title={Geometric Dynamics of Signal Propagation Predict Trainability of Transformers},
  author={Cowsik, Aditya and Nebabu, Tamra and Qi, Xiao-Liang and Ganguli, Surya},
  journal={arXiv},
  year={2024}
}

@article{boumal,
      title={Synchronization on circles and spheres with nonlinear interactions}, 
      author={Christopher Criscitiello and Quentin Rebjock and Andrew D. McRae and Nicolas Boumal},
      year={2024},
      journal={arXiv}
}

@misc{radford2019gpt,
  title={Language Models are Unsupervised Multitask Learners},
  author={Radford, Alec and Wu, Jeffrey and Child, Rewon and Luan, David and Amodei, Dario and Sutskever, Ilya},
  howpublished={OpenAI Technical Report},
  year={2019}
}

@article{touvron2023llama,
  title={{LLaMA}: Open and Efficient Foundation Language Models},
  author={{Touvron, H. et al}},
  journal={arXiv},
  year={2023}
}

@article{xiong2020layer,
      title={On Layer Normalization in the Transformer Architecture}, 
      author={Ruibin Xiong and Yunchang Yang and Di He and Kai Zheng and Shuxin Zheng and Chen Xing and Huishuai Zhang and Yanyan Lan and Liwei Wang and Tie-Yan Liu},
      year={2020},
      journal={arXiv}
}

@inproceedings{muralidharan2024compact,
title={Compact Language Models via Pruning and Knowledge Distillation},
author={Saurav Muralidharan and Sharath Turuvekere Sreenivas and Raviraj Bhuminand Joshi and Marcin Chochowski and Mostofa Patwary and Mohammad Shoeybi and Bryan Catanzaro and Jan Kautz and Pavlo Molchanov},
booktitle={NeurIPS},
year={2024}
}

@article{CheLinPol25,
	author = {Shi Chen and Zhengjiang Lin and Yury Polyanskiy and Philippe Rigollet},
	title = {Quantitative Clustering in Mean-Field Transformer Models},
journal={arXiv},
	year = {2025}}

@inproceedings{siddiqui2024deeper,
  title={A deeper look at depth pruning of {LLMs}},
  author={Siddiqui, Shoaib Ahmed and Dong, Xin and Heinrich, Greg and Breuel, Thomas and Kautz, Jan and Krueger, David and Molchanov, Pavlo},
  booktitle={ICML 2024 Workshop on Theoretical Foundations of Foundation Models},
  year={2024}
}

@article{kim2025peri,
  title={{Peri-LN}: A Hybrid Normalization Scheme for Deeper Transformer Models},
  author={{Kim, B.,  Johnson, M. et al.}},
  journal={arXiv},
  year={2025}
}

@article{yao2025circle,
  title={Synchronization of self-attention dynamics on the
circle},
  author={Polyanskiy, Y. and Rigollet, P. and Yao, A.},
  journal={arXiv},
  year={2025}
}

@article{sun2025curse,
      title={The Curse of Depth in Large Language Models}, 
      author={Wenfang Sun and Xinyuan Song and Pengxiang Li and Lu Yin and Yefeng Zheng and Shiwei Liu},
      year={2025},
      journal={arXiv}
}

@inproceedings{loshchilov2024ngpt,  
    title={{nGPT}: Normalized Transformer with Representation Learning on the Hypersphere},
  author={Ilya Loshchilov and Cheng-Ping Hsieh and Simeng Sun and Boris Ginsburg},
  booktitle={ICLR},
  year={2025}
}

@inproceedings{li2024mixln,  
    title={Mix-LN: Unleashing the Power of Deeper Layers by Combining Pre-LN and Post-LN},
  author={Pengxiang Li and Lu Yin and Shiwei Liu},
  booktitle={ICLR},
  year={2025}
}

@article{gemma2025,
      title={Gemma 3 Technical Report}, 
      author={{Gemma Team  et al.}},
      year={2025},
      journal={arXiv}
}

@article{bruno2025multiscale,
  title={A multiscale analysis of mean-field transformers in the moderate interaction regime},
  author={Bruno, Giuseppe and Pasqualotto, Federico and Agazzi, Andrea},
  journal={arXiv preprint arXiv:2509.25040},
  year={2025}
}

@inproceedings{gromov2024unreasonable,
  title={The unreasonable ineffectiveness of the deeper layers},
  author={Gromov, Andrey and Tirumala, Kushal and Shapourian, Hassan and Glorioso, Paolo and Roberts, Daniel A.},
  booktitle={ICLR},
  year={2025}
}

@inproceedings{wortsman2024small,
  title={Small-scale proxies for large-scale Transformer training instabilities},
  author={Wortsman, Mitchell and Liu, Peter J. and Xiao, Lechao and Everett, Katie and Alemi, Alex and Adlam, Ben and Co-Reyes, John D. and Gur, Izzeddin and Kumar, Abhishek and Novak, Roman and Pennington, Jeffrey and Sohl-dickstein, Jascha and Xu, Kelvin and Lee, Jaehoon and Gilmer, Justin and Kornblith, Simon},
  booktitle={ICLR},
  year={2024}
}

\newpage

\ifnips
\newpage
\section*{NeurIPS Paper Checklist}

\begin{enumerate}

\item {\bf Claims}
    \begin{enumerate}
        \item Question: Do the main claims made in the abstract and introduction accurately reflect the paper's contributions and scope?
        \item Answer: \answerYes{}
        \item Justification: Our abstract and introduction clearly articulate the two core contributions—(i) a unified ODE model covering six normalization variants and the analysis of said model in some specific regimes. All subsequent sections deliver matching formal statements and proofs, so the scope advertised up front aligns with the body of the paper. No claims exceeding the proven results are made.
        \item Guidelines:
        \begin{itemize}
            \item The answer NA means that the abstract and introduction do not include the claims made in the paper.
            \item The abstract and/or introduction should clearly state the claims made, including the contributions made in the paper and important assumptions and limitations. A No or NA answer to this question will not be perceived well by the reviewers.
            \item The claims made should match theoretical and experimental results, and reflect how much the results can be expected to generalize to other settings.
            \item It is fine to include aspirational goals as motivation as long as it is clear that these goals are not attained by the paper.
        \end{itemize}
    \end{enumerate}

\item {\bf Limitations}
    \begin{enumerate}
        \item Question: Does the paper discuss the limitations of the work performed by the authors?
        \item Answer: \answerYes{}
        \item Justification: The limitations of the work are described in the Limitations Section 5 from different perspectives.
        \item Guidelines:
        \begin{itemize}
            \item The answer NA means that the paper has no limitation while the answer No means that the paper has limitations, but those are not discussed in the paper.
            \item The authors are encouraged to create a separate ``Limitations'' section in their paper.
            \item The paper should point out any strong assumptions and how robust the results are to violations of these assumptions (e.g., independence assumptions, noiseless settings, model well-specification, asymptotic approximations only holding locally). The authors should reflect on how these assumptions might be violated in practice and what the implications would be.
            \item The authors should reflect on the scope of the claims made, e.g., if the approach was only tested on a few datasets or with a few runs. In general, empirical results often depend on implicit assumptions, which should be articulated.
            \item The authors should reflect on the factors that influence the performance of the approach. For example, a facial recognition algorithm may perform poorly when image resolution is low or images are taken in low lighting. Or a speech-to-text system might not be used reliably to provide closed captions for online lectures because it fails to handle technical jargon.
            \item The authors should discuss the computational efficiency of the proposed algorithms and how they scale with dataset size.
            \item If applicable, the authors should discuss possible limitations of their approach to address problems of privacy and fairness.
            \item While the authors might fear that complete honesty about limitations might be used by reviewers as grounds for rejection, a worse outcome might be that reviewers discover limitations that aren't acknowledged in the paper. The authors should use their best judgment and recognize that individual actions in favor of transparency play an important role in developing norms that preserve the integrity of the community. Reviewers will be specifically instructed to not penalize honesty concerning limitations.
        \end{itemize}
    \end{enumerate}

\item {\bf Theory Assumptions and Proofs}
    \begin{enumerate}
        \item Question: For each theoretical result, does the paper provide the full set of assumptions and a complete (and correct) proof?
        \item Answer: \answerYes{}
        \item Justification: The settings we are studying are always stated in details. Proofs are suspended to supplementary materials.
        \item Guidelines:
        \begin{itemize}
            \item The answer NA means that the paper does not include theoretical results.
            \item All the theorems, formulas, and proofs in the paper should be numbered and cross-referenced.
            \item All assumptions should be clearly stated or referenced in the statement of any theorems.
            \item The proofs can either appear in the main paper or the supplemental material, but if they appear in the supplemental material, the authors are encouraged to provide a short proof sketch to provide intuition.
            \item Inversely, any informal proof provided in the core of the paper should be complemented by formal proofs provided in appendix or supplemental material.
            \item Theorems and Lemmas that the proof relies upon should be properly referenced.
        \end{itemize}
    \end{enumerate}

\item {\bf Experimental Result Reproducibility}
    \begin{enumerate}
        \item Question: Does the paper fully disclose all the information needed to reproduce the main experimental results of the paper to the extent that it affects the main claims and/or conclusions of the paper (regardless of whether the code and data are provided or not)?
        \item Answer: \answerNA{}
        \item Justification: The paper has no experiments to reproduce
        \item Guidelines:
        \begin{itemize}
            \item The answer NA means that the paper does not include experiments.
            \item If the paper includes experiments, a No answer to this question will not be perceived well by the reviewers: Making the paper reproducible is important, regardless of whether the code and data are provided or not.
            \item If the contribution is a dataset and/or model, the authors should describe the steps taken to make their results reproducible or verifiable.
            \item Depending on the contribution, reproducibility can be accomplished in various ways. For example, if the contribution is a novel architecture, describing the architecture fully might suffice, or if the contribution is a specific model and empirical evaluation, it may be necessary to either make it possible for others to replicate the model with the same dataset, or provide access to the model. In general, releasing code and data is often one good way to accomplish this, but reproducibility can also be provided via detailed instructions for how to replicate the results, access to a hosted model (e.g., in the case of a large language model), releasing of a model checkpoint, or other means that are appropriate to the research performed.
            \item While NeurIPS does not require releasing code, the conference does require all submissions to provide some reasonable avenue for reproducibility, which may depend on the nature of the contribution. For example
            \begin{enumerate}
                \item If the contribution is primarily a new algorithm, the paper should make it clear how to reproduce that algorithm.
                \item If the contribution is primarily a new model architecture, the paper should describe the architecture clearly and fully.
                \item If the contribution is a new model (e.g., a large language model), then there should either be a way to access this model for reproducing the results or a way to reproduce the model (e.g., with an open-source dataset or instructions for how to construct the dataset).
                \item We recognize that reproducibility may be tricky in some cases, in which case authors are welcome to describe the particular way they provide for reproducibility. In the case of closed-source models, it may be that access to the model is limited in some way (e.g., to registered users), but it should be possible for other researchers to have some path to reproducing or verifying the results.
            \end{enumerate}
        \end{itemize}
    \end{enumerate}

\item {\bf Open access to data and code}
    \begin{enumerate}
        \item Question: Does the paper provide open access to the data and code, with sufficient instructions to faithfully reproduce the main experimental results, as described in supplemental material?
        \item Answer: \answerNA{}
        \item Justification: Paper does not include experiments requiring code.
        \item Guidelines:
        \begin{itemize}
            \item The answer NA means that paper does not include experiments requiring code.
            \item Please see the NeurIPS code and data submission guidelines (\url{https://nips.cc/public/guides/CodeSubmissionPolicy}) for more details.
            \item While we encourage the release of code and data, we understand that this might not be possible, so ``No'' is an acceptable answer. Papers cannot be rejected simply for not including code, unless this is central to the contribution (e.g., for a new open-source benchmark).
            \item The instructions should contain the exact command and environment needed to run to reproduce the results. See the NeurIPS code and data submission guidelines (\url{https://nips.cc/public/guides/CodeSubmissionPolicy}) for more details.
            \item The authors should provide instructions on data access and preparation, including how to access the raw data, preprocessed data, intermediate data, and generated data, etc.
            \item The authors should provide scripts to reproduce all experimental results for the new proposed method and baselines. If only a subset of experiments are reproducible, they should state which ones are omitted from the script and why.
            \item At submission time, to preserve anonymity, the authors should release anonymized versions (if applicable).
            \item Providing as much information as possible in supplemental material (appended to the paper) is recommended, but including URLs to data and code is permitted.
        \end{itemize}
    \end{enumerate}

\item {\bf Experimental Setting/Details}
    \begin{enumerate}
        \item Question: Does the paper specify all the training and test details (e.g., data splits, hyperparameters, how they were chosen, type of optimizer, etc.) necessary to understand the results?
        \item Answer: \answerNA{}
        \item Justification: The paper does not include experiments
        \item Guidelines:
        \begin{itemize}
            \item The answer NA means that the paper does not include experiments.
            \item The experimental setting should be presented in the core of the paper to a level of detail that is necessary to appreciate the results and make sense of them.
            \item The full details can be provided either with the code, in appendix, or as supplemental material.
        \end{itemize}
    \end{enumerate}

\item {\bf Experiment Statistical Significance}
    \begin{enumerate}
        \item Question: Does the paper report error bars suitably and correctly defined or other appropriate information about the statistical significance of the experiments?
        \item Answer: \answerNA{}
        \item Justification: There are no experiments
        \item Guidelines:
        \begin{itemize}
            \item The answer NA means that the paper does not include experiments.
            \item The authors should answer ``Yes'' if the results are accompanied by error bars, confidence intervals, or statistical significance tests, at least for the experiments that support the main claims of the paper.
            \item The factors of variability that the error bars are capturing should be clearly stated (for example, train/test split, initialization, random drawing of some parameter, or overall run with given experimental conditions).
            \item The method for calculating the error bars should be explained (closed form formula, call to a library function, bootstrap, etc.)
            \item The assumptions made should be given (e.g., Normally distributed errors).
            \item It should be clear whether the error bar is the standard deviation or the standard error of the mean.
            \item It is OK to report 1-sigma error bars, but one should state it. The authors should preferably report a 2-sigma error bar than state that they have a 96\% CI, if the hypothesis of Normality of errors is not verified.
            \item For asymmetric distributions, the authors should be careful not to show in tables or figures symmetric error bars that would yield results that are out of range (e.g. negative error rates).
            \item If error bars are reported in tables or plots, The authors should explain in the text how they were calculated and reference the corresponding figures or tables in the text.
        \end{itemize}
    \end{enumerate}

\item {\bf Experiments Compute Resources}
    \begin{enumerate}
        \item Question: For each experiment, does the paper provide sufficient information on the computer resources (type of compute workers, memory, time of execution) needed to reproduce the experiments?
        \item Answer: \answerNA{}
        \item Justification: There are no experiments
        \item Guidelines:
        \begin{itemize}
            \item The answer NA means that the paper does not include experiments.
            \item The paper should indicate the type of compute workers CPU or GPU, internal cluster, or cloud provider, including relevant memory and storage.
            \item The paper should provide the amount of compute required for each of the individual experimental runs as well as estimate the total compute.
            \item The paper should disclose whether the full research project required more compute than the experiments reported in the paper (e.g., preliminary or failed experiments that didn't make it into the paper).
        \end{itemize}
    \end{enumerate}

\item {\bf Code Of Ethics}
    \begin{enumerate}
        \item Question: Does the research conducted in the paper conform, in every respect, with the NeurIPS Code of Ethics \url{https://neurips.cc/public/EthicsGuidelines}?
        \item Answer: \answerYes{}
        \item Justification:
        \item Guidelines:
        \begin{itemize}
            \item The answer NA means that the authors have not reviewed the NeurIPS Code of Ethics.
            \item If the authors answer No, they should explain the special circumstances that require a deviation from the Code of Ethics.
            \item The authors should make sure to preserve anonymity (e.g., if there is a special consideration due to laws or regulations in their jurisdiction).
        \end{itemize}
    \end{enumerate}

\item {\bf Broader Impacts}
    \begin{enumerate}
        \item Question: Does the paper discuss both potential positive societal impacts and negative societal impacts of the work performed?
        \item Answer: \answerNA{}
        \item Justification: The paper is theoretical, thus its impact is limited to scientific advances and has no immediate societal impact otherwise.
        \item Guidelines:
        \begin{itemize}
            \item The answer NA means that there is no societal impact of the work performed.
            \item If the authors answer NA or No, they should explain why their work has no societal impact or why the paper does not address societal impact.
            \item Examples of negative societal impacts include potential malicious or unintended uses (e.g., disinformation, generating fake profiles, surveillance), fairness considerations (e.g., deployment of technologies that could make decisions that unfairly impact specific groups), privacy considerations, and security considerations.
            \item The conference expects that many papers will be foundational research and not tied to particular applications, let alone deployments. However, if there is a direct path to any negative applications, the authors should point it out. For example, it is legitimate to point out that an improvement in the quality of generative models could be used to generate deepfakes for disinformation. On the other hand, it is not needed to point out that a generic algorithm for optimizing neural networks could enable people to train models that generate Deepfakes faster.
            \item The authors should consider possible harms that could arise when the technology is being used as intended and functioning correctly, harms that could arise when the technology is being used as intended but gives incorrect results, and harms following from (intentional or unintentional) misuse of the technology.
            \item If there are negative societal impacts, the authors could also discuss possible mitigation strategies (e.g., gated release of models, providing defenses in addition to attacks, mechanisms for monitoring misuse, mechanisms to monitor how a system learns from feedback over time, improving the efficiency and accessibility of ML).
        \end{itemize}
    \end{enumerate}

\item {\bf Safeguards}
    \begin{enumerate}
        \item Question: Does the paper describe safeguards that have been put in place for responsible release of data or models that have a high risk for misuse (e.g., pretrained language models, image generators, or scraped datasets)?
        \item Answer: \answerYes{}
        \item Justification: We do not have experiments
        \item Guidelines:
        \begin{itemize}
            \item The answer NA means that the paper poses no such risks.
            \item Released models that have a high risk for misuse or dual-use should be released with necessary safeguards to allow for controlled use of the model, for example by requiring that users adhere to usage guidelines or restrictions to access the model or implementing safety filters.
            \item Datasets that have been scraped from the Internet could pose safety risks. The authors should describe how they avoided releasing unsafe images.
            \item We recognize that providing effective safeguards is challenging, and many papers do not require this, but we encourage authors to take this into account and make a best faith effort.
        \end{itemize}
    \end{enumerate}

\item {\bf Licenses for existing assets}
    \begin{enumerate}
        \item Question: Are the creators or original owners of assets (e.g., code, data, models), used in the paper, properly credited and are the license and terms of use explicitly mentioned and properly respected?
        \item Answer: \answerNA{}
        \item Justification: Does not use existing assets
        \item Guidelines:
        \begin{itemize}
            \item The answer NA means that the paper does not use existing assets.
            \item The authors should cite the original paper that produced the code package or dataset.
            \item The authors should state which version of the asset is used and, if possible, include a URL.
            \item The name of the license (e.g., CC-BY 4.0) should be included for each asset.
            \item For scraped data from a particular source (e.g., website), the copyright and terms of service of that source should be provided.
            \item If assets are released, the license, copyright information, and terms of use in the package should be provided. For popular datasets, \url{paperswithcode.com/datasets} has curated licenses for some datasets. Their licensing guide can help determine the license of a dataset.
            \item For existing datasets that are re-packaged, both the original license and the license of the derived asset (if it has changed) should be provided.
            \item If this information is not available online, the authors are encouraged to reach out to the asset's creators.
        \end{itemize}
    \end{enumerate}

\item {\bf New Assets}
    \begin{enumerate}
        \item Question: Are new assets introduced in the paper well documented and is the documentation provided alongside the assets?
        \item Answer: \answerNA{}
        \item Justification: No new assets
        \item Guidelines:
        \begin{itemize}
            \item The answer NA means that the paper does not release new assets.
            \item Researchers should communicate the details of the dataset/code/model as part of their submissions via structured templates. This includes details about training, license, limitations, etc.
            \item The paper should discuss whether and how consent was obtained from people whose asset is used.
            \item At submission time, remember to anonymize your assets (if applicable). You can either create an anonymized URL or include an anonymized zip file.
        \end{itemize}
    \end{enumerate}

\item {\bf Crowdsourcing and Research with Human Subjects}
    \begin{enumerate}
        \item Question: For crowdsourcing experiments and research with human subjects, does the paper include the full text of instructions given to participants and screenshots, if applicable, as well as details about compensation (if any)?
        \item Answer: \answerNA{}
        \item Justification: No crowdsourcing
        \item Guidelines:
        \begin{itemize}
            \item The answer NA means that the paper does not involve crowdsourcing nor research with human subjects.
            \item Including this information in the supplemental material is fine, but if the main contribution of the paper involves human subjects, then as much detail as possible should be included in the main paper.
            \item According to the NeurIPS Code of Ethics, workers involved in data collection, curation, or other labor should be paid at least the minimum wage in the country of the data collector.
        \end{itemize}
    \end{enumerate}

\item {\bf Institutional Review Board (IRB) Approvals or Equivalent for Research with Human Subjects}
    \begin{enumerate}
        \item Question: Does the paper describe potential risks incurred by study participants, whether such risks were disclosed to the subjects, and whether Institutional Review Board (IRB) approvals (or an equivalent approval/review based on the requirements of your country or institution) were obtained?
        \item Answer: \answerNA{}
        \item Justification: Out of scope
        \item Guidelines:
        \begin{itemize}
            \item The answer NA means that the paper does not involve crowdsourcing nor research with human subjects.
            \item Depending on the country in which research is conducted, IRB approval (or equivalent) may be required for any human subjects research. If you obtained IRB approval, you should clearly state this in the paper.
            \item We recognize that the procedures for this may vary significantly between institutions and locations, and we expect authors to adhere to the NeurIPS Code of Ethics and the guidelines for their institution.
            \item For initial submissions, do not include any information that would break anonymity (if applicable), such as the institution conducting the review.
        \end{itemize}
    \end{enumerate}

\item {\bf Declaration of LLM usage}
    \begin{enumerate}
        \item Question: Does the paper describe the usage of LLMs if it is an important, original, or non-standard component of the core methods in this research? Note that if the LLM is used only for writing, editing, or formatting purposes and does not impact the core methodology, scientific rigorousness, or originality of the research, declaration is not required.
        \item Answer: \answerNA{}
        \item Justification: There are no such parts in our paper
        \item Guidelines:
        \begin{itemize}
            \item The answer NA means that the core method development in this research does not involve LLMs as any important, original, or non-standard components.
            \item Please refer to our LLM policy (\url{https://neurips.cc/Conferences/2025/LLM}) for what should or should not be described.
        \end{itemize}
    \end{enumerate}

\end{enumerate}

\newcommand{\answerYes}[1][]{\textbf{[Yes]#1}}
\newcommand{\answerNo}[1][]{\textbf{[No]#1}}
\newcommand{\answerNA}[1][]{\textbf{[NA]#1}}
\fi

\newpage
    
    \appendix

    \section{Symmetric initialization}
    \label{ap:symmetric}

This section supplements the results of Section~\ref{sec:symmetric-case} by establishing the ODE governing the evolution of cosine similarity $\gamma(t)$ and the magnitude $r(t)$ for each normalization scheme. While Theorem~\ref{thm:convergence} guarantees convergence to a point mass from almost all initial configurations, we need to ensure that $\gamma(t) \to 1$ from a symmetric initialization as it approximates a random initial configuration when the embedding dimension $d$ is large. Below, the ODEs governing the evolution of $\gamma(t)$, that is the form of $\dot \gamma(t)=2\langle \dot \theta_k(t), \theta_1(t)\rangle$ can be derived using basic substitutions and we omit these details. Moreover, since, \mix{} is simply a combination of \post{} and \pre{}, the initial and terminal velocity in this case follow directly.

\noindent \post. The ODE governing the evolution of the cosine similarity $\gamma(t)$ was already derived in~\cite[Theorem 6.8]{mathpersp23}. It is given by
   \begin{align*}
  \dot \gamma (t)= \frac{2e^{\beta \gamma(t)}(1-\gamma(t))((n-1)\gamma(t) +1)}{((n-1)e^{\beta \gamma(t)} + e^{\beta})}\,.
  \end{align*}

  At $t=0$, $\gamma(t)=0$ and it is known from the aforementioned theorem that $\gamma(t) \to 1$ as $t \to \infty$. In fact we readily see from the ODE that $\gamma(t)$ is monotonically increasing.  Writing $\eps(t) = 1-\gamma(t)$, we get $\dot \eps(t) \sim -2\eps(t)$. It yields
  $$
 \dot \gamma(t) \sim_{t\to 0} \frac{2}{e^\beta + n-1} \,, \qquad \dot \gamma(t) \sim_{t \to \infty} Ce^{-2t}.
  $$

\noindent \pre. The ODEs governing $r(t)$ and  $\gamma(t)$ are given by
\[
  \dot r(t) = \frac{ (n-1)e^{\beta \gamma(t)} \gamma(t) + e^{\beta}}{(n-1)e^{\beta \gamma(t)} + e^{\beta}}
  \]
  and
\begin{equation}
    \label{eq:cossim_pre}
      \dot \gamma(t) =\frac{2e^{\beta \gamma(t)}(1-\gamma(t))((n-1)\gamma(t) +1)}{r(t) ((n-1)e^{\beta \gamma(t)} + e^{\beta})} \,. 
\end{equation}

  Note that $\gamma$ is increasing so $\gamma(t)\ge \gamma(0)=0$ for all positive $t$. Hence,
  $$
\dot \gamma(t) \ge \frac{2e}{r(t)ne^\beta} (1-\gamma(t))\,.
  $$
  By Gr\"onwall's inequality, we get 
  $$
  1-\gamma(t) \le \exp\left(-\frac{2e}{ne^\beta}\int_0^t \frac{\ud s}{r(s)} \right)
  $$
  But since $\dot r \le 1$, we have $r(t) \le t + r(0)$ and $\int_0^t \frac{\ud s}{r(s)}\to\infty$ as $t \to \infty$.  Hence $\gamma(t) \to 1$ and, in turn, $\dot r(t)\to 1$ so that\footnote{For two function $a(t)$ and $b(t)$ and $T \in \{0, \infty\}$, we write $a(t) \sim_{t\to T} b(t)$ if $a(t)/b(t) \to 1$ as $ t\to T$.} $r(t) \sim_{t \to \infty} t$ as $t \to \infty$ by l'H\^opital's rule.

Writing $\eps(t) = 1-\gamma(t)$, we get $\dot \eps(t) \sim_{t \to \infty} -2\eps(t)/r(t) \sim_{t \to \infty} -2\eps(t)/t$. It yields that
  $$
\dot \gamma(t) \sim_{t \to 0} \frac{2}{r(0)(e^\beta + n-1)} \,, \qquad  \dot \gamma(t) \sim_{t \to \infty}  \frac{C}{t^3}.
  $$

  \noindent \peri. The ODEs governing $r(t)$ and  $\gamma(t)$ are given by
  \[
  \dot r(t) = \frac{(n-1)e^{\beta \gamma(t)}\gamma(t) + e^{\beta}}{\sqrt{e^{2\beta} + 2(n-1)e^{\beta (1+\gamma(t))}\gamma(t) + (n-1)e^{2\beta\gamma(t)}(1+ (n-2)\gamma(t))}}
  \]
  and
  \[
  \dot \gamma(t)  = \frac{2 e^{\beta \gamma(t)} (1-\gamma(t))((n-1)\gamma(t) +1)}{r(t)\sqrt{e^{2\beta} + 2(n-1)e^{\beta (1+\gamma(t))}\gamma(t) + (n-1)e^{2\beta\gamma(t)}(1+ (n-2)\gamma(t))}}
  \]
  The argument follows the same lines as for \pre. Indeed, we have 
 $$
\dot \gamma(t) \ge \frac{2e}{r(t)e^\beta\sqrt{1+(n-1)^2}} (1-\gamma(t))\ge \frac{2e}{r(t)ne^\beta} (1-\gamma(t))\,,
  $$
and hence
  $$
  1-\gamma(t) \le \exp\left(-\frac{2e}{ne^\beta}\int_0^t \frac{\ud s}{r(s)} \right).
  $$
To show that $\dot r \le 1$ in this case too, we employ a coarser approximation that is sufficient for our purpose:
$$
\dot r(t) \le \frac{ne^\beta}{\sqrt{e^{2\beta}+ n-1 }}\le n\,.
$$
It readily yields that $\gamma(t) \to 1$ as $t \to \infty$ and in turn that $r(t) \sim_{t\to \infty} t$. Hence, 
  $$
\dot \gamma(t)\sim_{t \to 0} \frac{2}{r(0)\sqrt{e^{2\beta} + n-1}}\,, \qquad \dot \gamma(t) \sim_{t \to \infty} \frac{C}{t^3}.
  $$

\noindent \ngpt. The ODE governing $\gamma(t)$ is given by
  \[
  \dot \gamma(t) = \frac{2\alpha_te^{\beta \gamma(t)}(1-\gamma(t))((n-1)\gamma(t) +1)}{\sqrt{e^{2\beta} + 2(n-1)e^{\beta (1+\gamma(t))}\gamma(t) + (n-1)e^{2\beta\gamma(t)}(1+ (n-2)\gamma(t))}},.
  \]
This is the same formula as \peri{} where $r(t)$ is replaced with $\alpha_t^{-1}$. Hence,
  $$
  1-\gamma(t) \le \exp\left(-\frac{2e}{ne^\beta}\int_0^t \alpha_s \ud s \right)
  $$
  Assuming that $\alpha_t$ is chosen such that the above integral diverges as $t \to \infty$, we get that $\gamma(t)\to 1$ as $t \to \infty$. It yields
  $$
\dot \gamma(t)\sim_{t \to 0} \frac{2\alpha_0}{\sqrt{e^{2\beta} + n-1}}\,, \qquad \dot \gamma(t) \sim_{t \to \infty} C\alpha_t e^{-2\int_0^t \alpha_s \ud s}.
  $$

\noindent \cod. The ODE governing $\gamma(t)$ is given by
  \[
  \dot \gamma(t)  = \frac{2e^{\beta \gamma(t)}(1-\gamma(t))((n-1)\gamma(t) +1)}{\sqrt{t+1} ((n-1)e^{\beta \gamma(t)} + e^{\beta})}
  \]
Observe that the cosine similarity evolves precisely as~\eqref{eq:cossim_pre} but with predetermined magnitude $r(t) = \sqrt{t+1}$. In particular, we get that $\gamma(t) \to 1$ as $t \to \infty$.  We readily get
$$
\dot \gamma(t) \sim_{t \to 0} \frac{2}{e^\beta + n-1} \,, \qquad \dot \gamma(t) \sim_{t \to \infty} C\frac{e^{-4\sqrt{t}}}{\sqrt{t}}.
$$

\section{Proof of Theorem 4.2}
Here we prove an upper bound on the initial attention vector. Assume $\beta = 1$, $n\log n \geq d \geq \log^2 n$, $\|Q^\top K\|_{\mathrm{op}}, \|V\|_{\mathrm{op}}\leq 1$, i.i.d. random uniform $\theta_j$. Then
$$
\mathbb{P}\left (\forall j\in [n] \,\|A_j(\Theta(0))\| \le C\frac{\log n}{d}\right) \geq 1-n^{-C}.
$$

\begin{proof}
Throughout this proof, $C>0$ denotes a universal constant that may change from line to line.

Fix token $j$---without loss of generality, $j=n$---and work conditionally on $\theta_n$. Define the random variables:
   \[
X_k := \theta_n^\top Q^\top K \theta_k\,, \quad k=1, \ldots, n\,.
\]

Our goal is to control the norm of the vector
$$
A_n(\Theta(0)):= V\frac{\sum_{k=1}^n e^{ X_k} \theta_k}{\sum_{k=1}^n e^{ X_k}}\,.
$$
Since we assume that $\|V\|_{op}\le 1$, we may assume without loss of generality that $V=I_d$. 

Let $w$ denote the probability vector given by $w_k \propto e^{ X_k}$ and observe that 
\begin{equation}
    \label{eq:herbstpr}
    \sum_{k=1}^n w_k \theta_k = \frac1n \sum_{k=1}^n  \theta_k+  \frac1n \sum_{k=1}^n  X_k\theta_k + \sum_{k=1}^n \big(w_k-\frac1n-\frac{X_k}{n}\big) \theta_k. 
\end{equation}
Since the $\theta_k$s are i.i.d. centered and subGaussian with variance proxy $C/d$, we get that with probability at least $1-n^{-C}$
\begin{equation}
    \label{eq:herbstpr2}
\big\| \frac1n \sum_{k=1}^n  \theta_k \big \| \le  C\sqrt{\frac{\log n}{{n}}}
\end{equation}
Moreover, observe that for any $k \le n$, we have
$$
\big\|\E \frac1n \sum_{k=1}^n  X_k\theta_k\big\|   \le \frac{C}{d}.
$$
Hence, by vector Hoeffding, with probability at least $1-n^{-C}$, we also have
$$
\big\| \frac1n \sum_{k=1}^n  X_k\theta_k\big\|   \le \frac{C}{d} + C\sqrt{\frac{\log n}{n}} \,.
$$
because we assumed $n\log n\ge d$.

We now control the third and last term in the right-hand side of~\eqref{eq:herbstpr}. 
and observe that $X_n$ is deterministic and with norm at most 1. For $k \le n-1$, the random variables $X_k$ are i.i.d centered and subGaussian with variance proxy $C/d$. Hence there exists an event $E$, with probability at least $1-n^{-C}$, on which 
\[
\max_{k \le n-1}|X_k| \le C \sqrt{\frac{\log n}{d}}\,.
\]
Since $n \leq e^{\sqrt{d}}$,  on $E$, it holds for all $k \leq n-1$, 
\[
|e^{ X_k} -1 -  X_k|\le C {\frac{\log n}{d}}\,.
\]
Moreover, we have that $|X_n|\le 1$ so that $e^{-1} \le e^{X_n} \le e$. Together, these bounds yield that
$$
\frac{1+ X_k-  C {\frac{\log n}{d}}}{n-1 +e} \le w_k \le \frac{1+  X_k + C  {\frac{\log n}{d}}}{n-1 +e^{-1}}\,, \quad k\le n-1\,,
$$
so that 
$$
\big| w_k -\frac1n -\frac{X_k}{n}\big|\le C\frac{\log n}{nd}
$$
where we used the fact that $n \ge \sqrt{d}$.
Moreover, using similar arguments, we also have
$$
\big| w_k -\frac1n -\frac{X_k}{n}\big|\le 2\,.
$$
Put together, the last two displays yield
\begin{align*}
  \Big\|  \sum_{k=1}^n \big(w_k-\frac1n-\frac{X_k}{n}\big) \theta_k \Big\| \le C\frac{\log n}{d}\,.
\end{align*}
Combined together we get the claimed estimate.
\end{proof}
    
\section{Proof of Theorem 4.3}

Denote average $\bar \theta = \frac{1}{n}\sum_{k=1}^n \theta_k$.
Consider variance of tokens
\[
\V (t) := \frac{1}{n}\sum_{k=1}^n \|\theta_k - \bar \theta\|^2 = 1-\|\bar \theta\|^2.
\]
Then
\[
 \V'(t) = \frac{2}{n}\sum_{k=1}^n \la \theta_k - \bar \theta, \dot \theta_k - \frac{1}{n}\sum_{j=1}^n \dot \theta_j\ra.
\]
We immediately have
\[
\sum_{k=1}^n \la \theta_k - \bar\theta,  \frac{1}{n}\sum_{j=1}^n \dot \theta_j\ra =  \la \sum_{k=1}^n\theta_k - n\bar\theta,  \frac{1}{n}\sum_{j=1}^n \dot \theta_j\ra = 0.
\]
Thus
\[
\V'(t) = \frac{2}{n}\sum_{k=1}^n \la \theta_k - \bar \theta, \dot \theta_k\ra = \frac{2}{n}\sum_{k=1}^n \la \theta_k - \bar \theta, \frac{1}{s_k}P_k A_k\ra.
\]
Let's decompose $\delta_k := A_k - \theta_k$ to get
\[
\V'(t) = \frac{2}{n}\sum_{k=1}^n \la \theta_k- \bar \theta, \frac{1}{s_k}P_k \bar \theta\ra + \frac{2}{n}\sum_{k=1}^n \la \theta_k  - \bar \theta, \frac{1}{s_k}P_k \delta_k \ra = I_1 + I_2.
\]
For the first term we write 
\[
\frac{n}{2}I_1 = \sum_{k=1}^n \frac{1}{s_k}\la \theta_k - \bar \theta, P_k \bar \theta\ra = \sum_{k=1}^n\frac{1}{s_k}\la -\bar \theta, \bar \theta - \la \theta_k, \bar \theta\ra \theta_k\ra = \sum_{k=1}^n \frac{1}{s_k} (\la \theta_k, \bar \theta\ra^2 - \|\bar \theta\|^2).
\]
Each term in the sum is non-positive, thus we can bound
\[
\frac{1}{\max_k s_k}\sum_{k=1}^n (\la \theta_k, \bar \theta\ra^2 - \|\bar \theta\|^2)\geq\frac{n}{2} I_1 \geq \frac{1}{\min_{k} s_k} \sum_{k=1}^n (\la \theta_k, \bar \theta\ra^2 - \|\bar\theta\|^2).
\]
The sum itself can be written as
\[
\sum_{k=1}^n (\la \theta_k, \bar \theta \ra ^2 - \|\bar \theta\|^2) = \sum_{k=1}^n \la \theta_k, \bar \theta \ra ^2 - n\|\bar \theta\|^2 = \sum_{k=1}^n (\la \theta_k, \bar \theta\ra^2 - \la \theta_k, \bar \theta\ra),
\]
since $\sum_{k=1}^n \la \theta_k, \bar \theta\ra = n \|\bar \theta\|^2 = n - n\V(t)$. With a fixed sum of $\la \theta_k, \bar \theta\ra$, the min/max sum of squares $\la \theta_k, \bar \theta \ra^2$ is achieved when they are equal/spread out, which gives us
\[
\frac{1}{\max_k s_k}(-2\V + 2n\V^2)\geq  I_1 \geq  \frac{1}{\min_k s_k}(-2\V + 2\V^2).
\]
For the second term, we first upper bound the length of $P_k\delta_k$.
To this aim, consider 
\[
\la Q\theta_k, K\theta_i\ra - \la Q\theta_k, K \theta_j\ra = \la\theta_k, Q^\top K (\theta_i - \theta_j)\ra \leq \|\theta_k\| \|Q^\top K\|_{op} \|\theta_i - \theta_j\| \leq \|\theta_i - \theta_j\| \leq \sqrt{2\delta}.
\]
Consequently,
\[
\frac{1}{ne^{-\beta \sqrt{2\delta}}} \geq \frac{e^{\beta \la Q\theta_k, K\theta_j \ra}}{\sum_{t=1}^n e^{\beta \la Q\theta_k, K\theta_t\ra}} \geq \frac{1}{n e^{\beta \sqrt{2\delta}}}.
\]
Which implies
\[
\left|w_{kj} - \frac{1}{n}\right| = \left|\frac{e^{\beta \la Q\theta_k, K\theta_j \ra}}{\sum_{t=1}^n e^{\beta \la Q\theta_k, K\theta_t\ra}} - \frac{1}{n}\right| \leq \frac{1}{n}(e^{\beta \sqrt{2\delta}}-1).
\]
Therefore,
\[
\|P_k \delta_k\| = \|\sum_{j=1}^n (w_{kj}-\frac{1}{n}) P_k \theta_j\| \leq \frac{1}{n}(e^{\beta \sqrt{2\delta}}-1)\sum_{j=1}^n \|P_k \theta_j\| \leq \frac{1}{n}(e^{\beta \sqrt{2\delta}}-1) \sqrt{n} \sqrt{\sum_j \|P_k \theta_j\|^2}.
\]
Finally, one has
\begin{align*}
\sum_{j=1}^n \|P_k\theta_j\|^2 =&  \sum_{j=1}^n (1 - \la \theta_j, \theta_k\ra ^2)\leq n - \frac{1}{n}(\sum_{j=1}^n \la\theta_j, \theta_k\ra)^2 \\=& n(1-\la \bar \theta, \theta_k\ra^2)\leq n(1- (1-n\V)^2) \leq 2n^2\V.
\end{align*}
Combined, we obtain an upper bound
\begin{align*}
|I_2| &\leq \frac{2}{n}\sum_{k=1}^n \frac{1}{s_k}\|\theta_k - \bar \theta\| \|P_k\delta_k\| \leq \frac{2}{n}\frac{1}{\min_k s_k}\frac{1}{n}(e^{\beta \sqrt{2\delta}} -1)\sqrt{n}\sqrt{2n^2 \V} \sum_{k=1}^n\|\theta_k-\bar\theta\| 
\\ &\leq 2\frac{1}{\min_k s_k}(e^{\beta \sqrt{2\delta}}-1) \sqrt{2\V} (\sum_{k=1}^n \|\theta_k - \bar\theta\|^2)^{1/2} = 2\frac{1}{\min_k s_k}\sqrt{2n}(e^{\beta \sqrt{2\delta}}-1)\V.
\end{align*}

Thus, we obtain upper and lower bounds on $\V'(t) = I_1 + I_2$ in terms of $\V$. 
\begin{equation}
\label{seq:bounds}
\frac{-2\V + 2n\V^2}{\max_k s_k} + \frac{2\sqrt{2n}(e^{\beta \sqrt{2\delta}} - 1)}{\min_k s_K}\V \geq  \V'(t) \geq \frac{-2\V + 2\V^2 - 2\sqrt{2n}(e^{\beta \sqrt{2\delta}} - 1) }{\min_k s_k}\V.
\end{equation}
Let us also mention that 
\[
2\delta = \max_{k, j}\|\theta_k - \theta_j\|^2 \leq 4\max_k \|\theta_k - \bar\theta\|^2 \leq 4n \V,
\]
whereas
\[
1-\V = \la \bar \theta, \bar \theta \ra \geq 1-\delta, \; \textrm{i.e.} \; \V \leq \delta.
\]
Therefore, the true local rate of clustering that we get from bounds~\eqref{seq:bounds} is defined by the main terms on both sides $-2 \V /{\max_k s_k}$ and $ -2\V/\min_{k}s_k$. Moreover, as $\V \to 0$, $\min s_k \sim \max s_k$, so we obtain a tight rate of convergence.
To establish the result we claimed, notice that for $\delta < \frac{1}{100 n^2 \beta^2}$ one has
\[
2\sqrt{2n}(e^{\beta \sqrt{2\delta}} - 1) \leq \frac{\sqrt{2}}{3\sqrt{n}}, \qquad 2n\V^2 \leq 2n\delta \V,
\]
giving us
\begin{equation}
\label{seq:bounds_lin}
\frac{-2 + 2n\delta + \sqrt{2}/(3\sqrt{n})}{\max_k s_k} \V \geq \V' \geq \frac{-2 - \sqrt{2}/(3\sqrt{n})}{\min_k s_k} \V.
\end{equation}

Finally, we finish the proof with trivial estimates on $s_k$, that follow from the fact that all products $\la \theta_k, \theta_j\ra \geq 1-\delta$ and definitions.

\begin{itemize}
\item For \post{} $s_k \equiv 1$. 
\item for \pre{} $t \geq s_k \geq (1-\delta) t$.
\item for \peri{} $t \geq s_k \geq (1-\delta)^{3/2} t $
\item for \ngpt{} $\alpha_t \geq s_k \geq (1-\delta)^{1/2} \alpha_t $
\item for \mix{} $t \geq s_k \geq (1-\delta) (t- \tau)$
\item for \cod{} $s_k \equiv \sqrt{t}$.
\end{itemize}

Substituted into the estimate~\eqref{seq:bounds_lin}, we obtain the claimed rates.

\begin{remark}
The true local rate of convergence of $\V$ as $t \to \infty$ that we get from equation~\eqref{seq:bounds} is 
\begin{itemize}
\item $\V = e^{-2t(1+o(1))}$  for \post{},
\item $\V = e^{-2\log t(1+o(1))}$ for \pre{},
\item $\V = e^{-2\log t(1+o(1))}$ for \peri{},
\item $\V = e^{-2\int_0^t \alpha_s ds (1+o(1))}$ for \ngpt{},
\item $\V = e^{-2\log t(1+o(1))}$ for \mix{},
\item $\V = e^{-4\sqrt{t}(1+o(1))}$ for \cod{}.
\end{itemize}
\end{remark}

  \section{Final convergence}

In this section we prove Theorem 3.1 from the main text, that claims that under some assumptions, for almost any initial configuration of particles, any normalized attention dynamics that we study (that is \post{}, \pre{}, \peri{}, \ngpt{}, \mix{} and \cod{}) converges to a single cluster. First, let us outline the core of the proof.
\subsection{Proof Outline for Token Synchronization in \pre{}}
\label{ap:plan}

A conventional proof that all tokens converge to a single state consists of two stages. Showing that there is some limiting configuration of tokens, and then verifying that the only possible limiting configuration is the consensual one. 
We follow the same approach, but at each step we introduce novel technical details due to our general point of view. For simplicity of exposition, in the outline we follow \pre{} case.

\textbf{Existence of a Limit Point}
First, we demonstrate that the token dynamics indeed converge to a limiting configuration. This step heavily depends on the system. Common approach leverages the \L{}ojasiewicz inequality, as seen in \cite{mathpersp23}. It can be adopted to our setting, as we will show later. Moreover, our proof extends the gradient case $Q^\top K = V$ to a more general case, extending the synchronization results by \cite{mathpersp23}, \cite{boumal}, even in \post{} case.

\textbf{Local behavior at the limiting point.}
Second, we must prove that any such limit point corresponds to the synchronized state where all tokens are identical. The classical argument involves a local stability analysis around the system's critical points. One can typically show that any non-synchronized critical points are unstable and that their basin of attraction has measure zero, making them insignificant as final states. A comprehensive linearization analysis can be found in \cite{boumal} that, in particular, covers \post{} dynamics with $d\geq 3$. Together with a recent proof of synchronization for $d = 2$ \cite{yao2025circle}, the stability of \post{} system Jacobian is well-studied. We also rely on this method, but first we need to resolve the fact that \pre{} system is non-compact.

\textbf{Transformation to compact state space.} The \pre{} state-space is non-compact, because both empirically and theoretically tokens' magnitude $r_j$ grows to infinity with $t$. This restricts the direct study of the limiting point in that space. We can transform it to a compact state space by the following trick, however. Consider a logarithmic time scale $\tau := \ln t$ and modified scale variables $q_j := s_j / t$. Applying the chain rule, we find the transformed dynamics:
\begin{align*}
\frac{d\theta_j}{d\tau} &= \frac{1}{q_j}P_j A_j(\Theta) \\
\frac{dq_j}{d\tau} &= \langle \theta_j, A_j(\Theta)\rangle - q_j.
\end{align*}
This formulation is interesting in its own right. It reveals that the \pre{} system evolves on a logarithmic time scale, which may explain its observed stability advantages over \post{} variants in deep architectures. Furthermore, the dynamics are scaled by $q_j$, which are driven toward $\langle\theta_j, A_j(\Theta)\rangle$, the alignment between a token and its attention vector.

Crucially for our proof, this transformed system is still \textit{autonomous}. This allows us to proceed with the final step: a rigorous linearization analysis of its critical points. By showing that all critical points corresponding to non-consensual states are unstable in the $(\theta,q,\tau)$ frame, we can conclude that the system must converge to the state where all tokens are identical. 

In what follows we are going to cover all the proof steps in detail.
 
    \subsection{Generalized gradient descent convergence}
    
First, we need to refine an important result of \L{}ojasiewicz on convergence of gradient descent, so that it fits our problem setting. We follow an approach similar to the one presented in \cite{Haraux12}.
        \label{ap:loj}
     \begin{lemma}
  \label{lem:loj}
For any $t \ge 0$, let $M(t)$ be a symmetric real matrix $C\lambda(t)I \succ M(t) \succ \lambda(t) I$ with $\lambda(t) > 0$, $\int_0^{\infty}\lambda(t)dt = \infty$, and some constant $C$. Let energy function $E(x)$ be analytic in an open set $U\subset \mathbb{R}^N$. Consider a compact path $x(t)\subset U$ that satisfies the following modified gradient descent equation
  \[
  \dot x = - M(t) \nabla_x E(x).
  \]
  Then, $x(t)$ converges to a critical point of the energy function $x(t)\to x^*$ such that $\nabla E(x^*) = 0$.
  \end{lemma}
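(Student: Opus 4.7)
The plan is to reduce the claim to a classical \L{}ojasiewicz–Simon convergence argument by absorbing the scalar factor $\lambda(t)$ into a time change. The hypothesis $\int_0^\infty \lambda(t)\,dt = \infty$ is precisely what makes $\tau(t) := \int_0^t \lambda(s)\,ds$ a homeomorphism $[0,\infty) \to [0,\infty)$, and the two-sided sandwich $\lambda(t) I \prec M(t) \prec C\lambda(t) I$ is precisely what makes the rescaled preconditioner $\tilde M(\tau) := M(t(\tau))/\lambda(t(\tau))$ satisfy $I \preceq \tilde M(\tau) \preceq C I$. Writing $y(\tau) := x(t(\tau))$, the equation becomes $dy/d\tau = -\tilde M(\tau)\,\nabla E(y)$ on a compact subset of $U$, and this uniformly preconditioned system is what I would analyze.

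Along the rescaled flow, $-\tfrac{d}{d\tau} E(y(\tau)) = \nabla E(y)^\top \tilde M(\tau) \nabla E(y) \geq \|\nabla E(y)\|^2 \geq 0$, so $E(y(\tau))$ is nonincreasing. Compactness of the trajectory provides a finite lower bound, hence a limit $E(y(\tau)) \downarrow E^*$ and $\int_0^\infty \|\nabla E(y(\tau))\|^2\,d\tau < \infty$. Extracting a subsequence $\tau_k \to \infty$ along which $\nabla E(y(\tau_k)) \to 0$ and, by compactness again, $y(\tau_k) \to y^*$, I would conclude that $y^*$ is a critical point with $E(y^*) = E^*$. To upgrade subsequential to full convergence, I would apply the \L{}ojasiewicz inequality for the analytic function $E$ near $y^*$, yielding constants $c > 0$ and $\theta \in (0, 1/2]$ with $\|\nabla E(y)\| \geq c\,(E(y) - E^*)^{1-\theta}$ on a neighborhood $W$ of $y^*$. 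Combining the upper bound $\|\dot y\| \leq C\|\nabla E\|$ with the dissipation $-\dot E \geq \|\nabla E\|^2$ and the \L{}ojasiewicz estimate gives
\[
\|\dot y(\tau)\| \;\leq\; \frac{C}{\|\nabla E(y)\|}\bigl(-\dot E\bigr) \;\leq\; -\frac{C}{c\,\theta} \frac{d}{d\tau}\bigl(E(y) - E^*\bigr)^\theta, \qquad y(\tau) \in W.
\]
A standard continuation argument (if the trajectory first exits $W$ at some $\tau^\star > \tau_k$, integrate the displayed inequality on $[\tau_k, \tau^\star]$ and derive a contradiction for $k$ large) shows the trajectory never leaves $W$ after some $\tau_k$, giving finite arc length and $y(\tau) \to y^*$. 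Transporting back, $x(t) \to x^* := y^*$ with $\nabla E(x^*) = 0$.

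The main obstacle, and the only real departure from the textbook proof, is the time-dependent and possibly degenerate preconditioner. The cancellation in the key length-versus-dissipation comparison $\|\dot y\| \leq (C/\|\nabla E\|)(-\dot E)$ relies crucially on both sides of the sandwich involving the same scale $\lambda(t)$; a weaker lower bound would degrade the $1/\|\nabla E\|$ factor and break the chain-rule identification with $(d/d\tau)(E-E^*)^\theta$. Likewise, without $\int \lambda = \infty$ the time change does not cover $[0,\infty)$, and the original trajectory $x(t)$ could stall before reaching a critical point—so both hypotheses are used exactly once, and for essentially orthogonal reasons. Once these are in place, the remainder of the argument is a faithful transcription of the strategy in \cite{Haraux12,mathpersp23}.
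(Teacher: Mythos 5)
Your proposal is correct and follows essentially the same strategy as the paper: the time change $\tau = \int_0^t \lambda$ to reduce to a uniformly preconditioned flow $I \preceq \tilde M \preceq CI$, the dissipation identity $-\dot E = \nabla E^\top \tilde M \nabla E \geq \|\nabla E\|^2$ combined with compactness to obtain a subsequential critical limit, and the \L{}ojasiewicz inequality to upgrade to full convergence.

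The one place you diverge is the final length estimate, and your variant is in fact a bit cleaner. The paper first derives an explicit decay rate $E(x(t)) - E^* \leq K t^{-\beta}$ by integrating $(f^{1-2\alpha})' \geq 2\alpha-1$, then bounds $\int_t^{2t}\|\dot x\|\,ds$ by Cauchy–Schwarz on each dyadic block and sums the geometric series (which silently requires $\alpha < 1$ so that $\beta > 1$). You instead use the Simon-style pointwise comparison $\|\dot y\| \leq -\tfrac{C}{c\theta}\tfrac{d}{d\tau}(E - E^*)^\theta$ and integrate directly, so the finite arc length drops out in one line without ever computing the decay rate or invoking dyadic blocks. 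You also only extract a subsequence along which $\nabla E \to 0$ from $\int \|\nabla E\|^2 < \infty$, which sidesteps the paper's slightly delicate claim that $\dot x \in L^2$ plus absolute continuity forces $\dot x(t) \to 0$ pointwise. Both routes are standard and both hypotheses ($\int\lambda = \infty$ for the time change, the two-sided sandwich for the $1/\|\nabla E\|$ cancellation) are used in the same way in each.
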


     \begin{proof}
     \textit{Step I. Change of time.} First, define a new time variable $\tau(t) = \int_0^t \lambda(s)ds$. By assumption $\tau$ monotonically grows to infinity as $t\to\infty$. Moreover,
  \[
  \frac{dx}{d\tau} = \frac{dx/dt}{d\tau/dt} = \frac{- M(t)\nabla_x E(x)}{\lambda(t)}.
  \]
  Take $\tilde M(\tau) = M(t(\tau))/\lambda(t(\tau))$. Then
  \[
  \frac{dx}{d\tau} = - \tilde M(\tau) \nabla_x E(x)
  \]
  with $C I \succ \tilde M(\tau) \succ I$. This change of time proves that it is sufficient to prove the Lemma in its initial form under the assumption $C I \succ M(t) \succ I$, whereas $\lambda(t)$ corresponds to time change.
  
  \textit{Step 2.} Now that we have $C I \succ M(t) \succ I$, let us follow a known approach to the proof of gradient descent convergence.
  Consider the energy along the trajectory, i.e.
  \[
  f(t) := E(x(t)).
  \]
  Then
  \[
  f'(t) = (\dot x)^\top \nabla_x E|_{x(t)}  = - (\dot x)^\top M^{-1}  \dot x \leq -C^{-1}|\dot x|^2.
  \]
  In particular, $f'(t) < 0$, the energy is decreasing along the trajectory. Since $E$ is bounded on a compact trajectory, we get that $f'(t) \in L_1([0, \infty))$. Because
  \[
  |\dot x|^2 \leq C |f'(t)|,
  \]
 we get that $\dot x \in L_2([0, \infty))$. This implies that $\dot x \to 0$, because $\dot x$ is an absolutely continuous function in $L_2([0, \infty))$. 
 
 Therefore, since $M(t)\succ cI$, we get that $\nabla_x E(x) \to 0$. For convergence to a point this is not enough, but it already shows us that $\textrm{dist}(x, \mathcal{E}) \to 0$ where $\mathcal{E} = \{a : \nabla E(a) = 0 \}$. Then, because the limit set $\Gamma$ of a compact trajectory $x(t)$ is compact and connected, we can use uniform \L{}ojasiewicz inequality.
 
 To get $x \to x^*$ we need to sharpen the estimate on $\dot x$. This is where the \L{}ojasiewicz inequality is used.
 It says that in some neighbourhood $\Omega$ of $\Gamma$ and some constants $V, \alpha$ one has 
 \[
 |E(u) - V|^{\alpha} \leq \|\nabla E(u)\|.
 \]
 We can assume $V = 0$ by shifting the energy function. In particular, it means that $f(t)$ decreases to 0 as $t\to \infty$. Moreover, because $x(t)$ approaches $\Gamma$ as $t\to\infty$, we know that as $t\to \infty$ it is true that
 \[
 |E(x(t))|^\alpha \leq \|\nabla E(x(t))\|. 
 \]
 Therefore, from our assumption $M(t) \succ I$ we get
 \[
 f'(t) = (\nabla_x E|_{x(t)})^\top \dot x = -  (\nabla_x E|_{x(t)})^\top M(t) \nabla_x E|_{x(t)} \leq - \|\nabla_x E(x(t))\|^2 \leq - |f(t)|^{2\alpha}.
 \]
 Then
 \[
 (f^{1-2\alpha}(t))' = (1-2\alpha) f^{-2\alpha} f'\geq (2\alpha-1).
 \]
 Consequently, for $\beta = 1/(2\alpha - 1)$ one has
 \[
 f(t) \leq K t^{-\beta}.
 \]
 We know that
 \[
 |\dot x|^2 \leq C |f'(t)| = -C f'(t).
 \]
 Then 
 \[
 \int_t^{2t}  |\dot x|^2 ds \leq C(f(t) - f(2t)) \leq CK t^{-\beta} .
 \]
 From this inequality and Cauchy-Schwarz we get
 \[
  \int_t^{2t}  |\dot x| ds \leq CK t^{(1-\beta)/2}.
 \]
 Finally, this estimate shows convergence of the path $x(t)$ to some limiting point, because
 \[
 \int_1^{\infty}|\dot x| \leq CK \sum_{n=0}^{\infty} 2^{n(1-\beta)/2}<\infty.  
 \]
  \end{proof}
  
  \subsection{Proof of Theorem 3.1}
In this appendix we provide a complete proof of Theorem \ref{thm:convergence}. The argument follows the roadmap outlined in Section \ref{ap:plan}, with minor adjustments for each normalization scheme. For simplicity of exposition, we first give a full analysis of \pre{}. Then, we provide remarks on how to adapt the proof for each normalization scheme. Thanks to our unified formulation of normalization in \eqref{NA}, the core proof applies verbatim across all schemes—the only variation lies in some technical details. A forthcoming work will pursue that broader unification and extend the analysis beyond purely architectural speed regulators.

For convenience, let us recall the object of study. We consider the evolution of particles $\theta_j$ on a unit sphere $\mathbb{S}^{d-1}$ governed by the ODE
\[
\dot \theta_j = \frac{1}{s_j} P_j A_j, \qquad A_j =\sum_{k=1}^n \frac{e^{\beta \la Q\theta_j, K\theta_k\ra}}{\sum_{\ell=1}^n e^{\beta\la Q\theta_j, K\theta_ell\ra}}V\theta_k
\]
with normalization factor $s_k$ evolving according to the following table.
\begin{table}[htbp]
\centering
\caption{Speed regulation factors}
\begin{tabular}{@{}l ll@{}}
\toprule
                 & $s_j(t)$  & $\dot r_j(t)$\\
\midrule
\post{} & $1$   & $0$\\
\pre{}  & $r_j(t)$ & $\langle \theta_j(t), A^t_j(\Theta(t))\rangle $\\
\mix{}  & $ \1_{t\le \tau} +r_j(t)\1_{t> \tau} $ &  $\langle \theta_j(t), A^t_j(\Theta(t))\rangle \1_{t> \tau}$\\
\peri{} & $r_j(t)\|A^t_j(\Theta(t))\|$ & $\langle \theta_j(t) , A^t(\theta_j(t))\rangle /\|A^t_j(\Theta(t))\|$\\
\ngpt{}    & $\alpha_t^{-1}\|A^t_j(\Theta(t))\|$ & $0$\\
\cod{}     & $\sqrt{t+1}$ & $0$\\
\bottomrule
\end{tabular}
\end{table}

\begin{proposition}
\label{sprop:time_change}
Consider monotonically growing to infinity time change $\tau(t)$. Then, normalized attention dynamics with speed regulation factors $s_j(t)$ is equivalent to normalized attention dynamics with speed regulation factors $\tilde s_j(\tau) = s_j(t(\tau))/t'(\tau)$ in time $\tau$.
\end{proposition}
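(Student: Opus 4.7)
The plan is to apply the chain rule directly to the normalized attention ODE~\eqref{NA} under the substitution $\tilde\theta_j(\tau):=\theta_j(t(\tau))$, where $t(\tau)$ denotes the $C^1$ inverse of the strictly increasing, unbounded reparameterization $\tau(t)$. Since the velocity field $(1/s_j)\mathbf{P}_{\theta_j}A^t_j(\Theta)$ is written explicitly in the $t$-coordinate, one simply computes
\[
\frac{d\tilde\theta_j}{d\tau}(\tau)=\dot\theta_j(t(\tau))\,t'(\tau)=\frac{t'(\tau)}{s_j(t(\tau))}\,\mathbf{P}_{\tilde\theta_j(\tau)}\,A^{t(\tau)}_j(\tilde\Theta(\tau)),
\]
and reads off the new speed factor $\tilde s_j(\tau)=s_j(t(\tau))/t'(\tau)$, matching the claim. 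The argument is symmetric: applying the inverse reparameterization to any solution of the $\tau$-system recovers a solution of the original system, so the two dynamics have the same orbits up to a relabeling of parameter.

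For the schemes in which $s_j$ couples to the magnitude (\pre{}, \mix{}, \peri{}), I would carry the radial coordinate along the same chain rule, setting $\tilde r_j(\tau):=r_j(t(\tau))$ and noting that $d\tilde r_j/d\tau=\dot r_j(t(\tau))\,t'(\tau)$. This leaves the entries of Table~\ref{tab:speed} structurally intact: for example $s_j=r_j$ under \pre{} becomes $\tilde s_j=\tilde r_j/t'(\tau)$, consistent with the general formula. Monotonicity and unboundedness of $\tau(t)$ ensure the map $[0,\infty)\to[0,\infty)$ is a bijection, so no finite-time blowup or truncation of trajectories is introduced.

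The only bookkeeping subtlety, and the one place I would be careful, is the explicit $t$-dependence that already lives inside the right-hand side through $A^t_j$ and through schemes like \cod{} (where $s_j(t)=\sqrt{t+1}$) or \ngpt{} (where $\alpha_t$ appears); one must evaluate all these quantities at $t=t(\tau)$, not at $\tau$ itself. Once this substitution is made consistently, the proposition reduces to a single application of the chain rule, and no further analytical content is required. I would state this as a standalone lemma so that Section~\ref{ap:plan}'s passage to the compact autonomous frame $\tau=\ln t$, $q_j=s_j/t$ follows as the special case $t(\tau)=e^\tau$ with $t'(\tau)=e^\tau=t$.
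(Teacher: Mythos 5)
Your proof is correct and takes essentially the same approach as the paper: a single application of the chain rule under the substitution $t=t(\tau)$, reading off the new speed factor from $\frac{d\theta_j}{d\tau} = t'(\tau)\dot\theta_j$. The extra bookkeeping you add (handling $\tilde r_j$, evaluating explicit $t$-dependencies at $t(\tau)$) is sound and goes slightly beyond the paper's one-line verification, but the argument is the same.
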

\begin{proof}
This immediately follows from the definition
\[
\frac{d\theta_j}{d\tau} = \frac{d\theta_j}{dt} t'(\tau) = \frac{1}{s_j(t(\tau))/t'(\tau)} P_j A_j(\Theta) .
\]
\end{proof}

This proposition shows that in the normalized attention dynamics we can divide $s_j$ by the same factor, as long as it's positive and its inverse integrates to infinity. This notion helps us reduce time dependence in normalization dynamics.

\begin{proof}
\textit{Step 1. Time change}

Consider evolution starting at time $t = 1$ and a time change $\tau := \ln t$ so that $dt / d\tau = t$. Moreover, set $q_j(t):= r_j(t)/t$. Then, we rewrite \pre{} in time $\tau$ as
  \[
  \dot \theta_j(\tau) = \frac{1}{q_j(\tau)} P_j A_j(\Theta(\tau)), \qquad \dot q_j(\tau) = \la \theta_j(\tau), A_j(\Theta(\tau))\ra - q_j(\tau). 
  \]
The function $\la \theta_j, A_j(\Theta)\ra$ is continuous and thus bounded on the compact. Then, all $q_j$ are upper bounded from the equation, and thus evolve on a segment $[0, Q]$. This frame change is important, as it allows us to study an autonomous system on a compact, whereas in the original coordinates one usually has $r_j\to\infty$.
Moreover, the condition
\[
\inf_{j}\liminf_{t\to\infty} \dot r_j > 0 
\]
implies that all magnitudes $r_j$ are lower bounded by some linear function at $t\to\infty$, which translates into
\[
\inf_{j} \inf_\tau q_j(\tau) > 0.
\]

  \textit{Step 2. Gradient-like structure.}
    We consider the event $\{\inf_j \inf_\tau q_j > 0\}$. It is enough to show synchronization under this assumption to prove the result.
    First, to show the convergence of the system to some limiting configuration of angles $\Theta^*$, we use Lemma~\ref{lem:loj}. For any trajectory $\Theta(\tau)$ we can write 
  \[
  \dot \theta_j = -\frac{1}{q_j Z_j} \nabla_{\theta_j} E(\Theta)
  \]
  with spherical gradient of the following energy function
  \[
  E(\Theta) = -\frac{1}{2\beta}\sum_{j, k} e^{\beta \la \theta_j, \theta_k \ra}.
  \]
  To get convergence of a specific trajectory $\Theta(\tau)$ to some critical point $\Theta^*$, we need to verify that the time-dependent matrix $M(\tau)$ with diagonal blocks $\frac{1}{q_j Z_j}$ satisfies the assumptions of Lemma~\ref{lem:loj}. This is true, because the blocks are uniformly bounded. Indeed, the function $Z_j$ is uniformly bounded as continuous functions on a compact. Whereas $q_j$ are uniformly bounded on any trajectory we consider, with $\{\inf_j \inf_{\tau} q_j > 0\}$.

  \textit{Step 3. Local behavior}
  We consider the event $\inf_j \inf_{\tau} q_j >0$ and $\Theta(\tau) \to \Theta^*$. Our goal is to show that when the limiting point is not $\theta_1^* = \ldots = \theta_n^*$, this event has probability zero. 
  We can split the event into a countable union with assumptions $\{q_j(\tau) \geq \frac{1}{m}\}$.
  \[
  \{\inf_j \inf q_j > 0\} \subset \bigcup_{m\in \mathbb{Z}_{>0}} \{\forall j\in[n]\; \forall \tau>0 \; q_j \geq \frac{1}{m}\}.
  \]
  As we already mentioned, $q_j$ are bounded from above. This means that under the restriction $q_j \geq \frac{1}{m}$, the combined state space of $(\Theta, q)$ is a compact manifold. Our goal is to show that the event 
  \[
  \{\forall j\in [n] \; \forall t>0 \; q_j > \frac{1}{m}\} \cup\{\forall j\in [n] \; \theta_j \to \theta_j^* \,|\, \Theta^* \textrm{is not synchronized}\}
  \]
  has probability zero.

  When we get an autonomous dynamical system on a compact manifold, and we study its convergence to a limiting point, we need to study the Jacobian at that limiting point. Specifically, a well-known stability argument that was already written down several times (see \cite{boumal}, \cite[Lemma A.1]{mathpersp23}), employs central manifold theorem to show that basin of attraction of unstable critical points has measure zero. 

  This argument applies to our case. Therefore, we move on to studying stability of critical points in the next part.

  \textit{Step 4. Unstable direction of the $\theta$ part}
  
  Consider the dynamics in the form
  \[
  \dot \theta_j = -\frac{1}{q_j(\tau) g_j(\Theta)} \nabla_{\theta_j} E(\Theta), \qquad \dot q_j = f_j(\Theta) - q_j,
  \]
  where $g_j = Z_j$ and $f_j = \la \theta_j, A_j(\Theta)\ra$ for \pre{}.
  In order to show that all limiting points $(\Theta^*, q^*), q^* > 0$ that are not fully synchronized (i.e. not all $\theta_j$ are equal) have measure zero basin of attraction, we only need to check that they are all unstable. More specifically, that the Jacobian matrix at any such point $\Theta^*, q^*$ has an eigenvalue with a positive real part. Because of the specific form of our system, the Jacobian has a convenient block form
\[
J = \begin{pmatrix}
J_{qq} & J_{q \theta} \\
J_{\theta q} & J_{\theta \theta}
\end{pmatrix}
\]
where $J_{qq} = -I_n$ and $J_{\theta q} = 0$ because at the critical point $\nabla_{\theta_j} E(\Theta^*) = 0$. Therefore, it is enough to show that $J_{\theta \theta}$ has a positive eigenvalue. Because of the gradient-like structure, $J_{\theta \theta}$ is the product of two matrices -- $\mathrm{diag}(\frac{1}{f_1(\Theta^*)g_1(\Theta^*)}, \ldots, \frac{1}{f_n(\Theta^*)g_n(\Theta^*)})$ and a symmetric Hessian of the energy function $E$. The Hessian itself is unstable, this is an established result due to \cite{boumal} (for $d\geq 3$) and \cite{yao2025circle} (for $d = 2$) that together closed synchronization for \post{}.  
  
  Surprisingly, this is enough for our cause, because of the following matrix property, that shows the product of the diagonal matrix and unstable Hessian is again unstable. Note that the lemma is not true without the symmetry assumption on $A$.
  \begin{lemma}
  \label{lem:matrix}
        For a symmetric unstable matrix $A$ and a symmetric positive-definite $D$, the product $DA$ is also unstable.
  \end{lemma}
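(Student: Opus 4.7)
The plan is to reduce the product $DA$ to a symmetric matrix via similarity, and then invoke Sylvester's law of inertia to transfer positivity of an eigenvalue from $A$ to the new symmetric matrix.

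First, since $D$ is symmetric positive definite, its positive definite square root $D^{1/2}$ exists and is invertible. I would then observe the similarity
\[
D^{-1/2}(DA)D^{1/2} \;=\; D^{1/2} A D^{1/2}.
\]
Thus $DA$ and $B := D^{1/2} A D^{1/2}$ share the same spectrum. Since $A$ is symmetric and $D^{1/2}$ is symmetric, $B$ is symmetric, so the eigenvalues of $DA$ are all real; in particular, ``has an eigenvalue with positive real part'' reduces to ``has a positive eigenvalue.''

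Next, because $D^{1/2}$ is a nonsingular congruence, Sylvester's law of inertia tells us that $A$ and $B = D^{1/2} A D^{1/2}$ have the same number of positive, negative, and zero eigenvalues. By assumption $A$ is unstable, so it has at least one positive eigenvalue. Hence $B$ has at least one positive eigenvalue, and so does $DA$, proving instability.

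I do not anticipate a real obstacle: the entire argument is three lines once one remembers that similarity preserves eigenvalues while congruence preserves inertia. The one thing to double-check is that the lemma is applied in the paper's setting with $D = \operatorname{diag}(1/(f_j(\Theta^*)g_j(\Theta^*)))$ positive definite, which holds precisely because $q_j \geq 1/m > 0$ (so $f_j = \langle \theta_j, A_j\rangle$ stays bounded away from $0$ in the relevant regime) and $Z_j > 0$ always; this justifies invoking the lemma at the limiting critical point.
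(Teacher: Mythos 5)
Your proof is exactly the paper's: same square-root factorization ($P = D^{1/2}$), same similarity $D^{-1/2}(DA)D^{1/2} = D^{1/2}AD^{1/2}$, same appeal to Sylvester's law of inertia to transfer the positive eigenvalue from $A$ to the congruent matrix $B$, and hence to $DA$. The only (harmless) additions on your side are the explicit remark that $DA$ therefore has a real spectrum and the sanity check on positivity of the diagonal scaling in the application.
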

  \begin{proof} 
        First, because $D$ is symmetric positive-definite, there is a symmetric positive-definite square root $P$, i.e. $P^2 = D$.
        Consider a symmetric matrix $B = PAP$. Notice that 
        \[
        P^{-1} DA P = P A P = B,
        \]
        thus matrices $DA$ and $B$ are similar, i.e. they share eigenvalues. On the other hand, by Sylvester's law of inertia, $B$ and $A$ have the same inertia, and in particular the number of positive eigenvalues. Therefore, $B$ has a positive eigenvalue, and so does $DA$. 
  \end{proof}

    \textit{Step 5. Basin of attraction of unstable critical points.}
    
    It is well-known that the set of unstable critical points of a dynamical system on a compact has measure-zero basin of attraction (see for example \cite[Lemma A.1]{mathpersp23} for a proof outline).
    Thus, we obtain that the event
     \[
  \{\forall j\in [n] \; \forall t>0 \; q_j > \frac{1}{m}\} \cup\{\forall j\in [n] \; \theta_j \to \theta_j^* \,|\, \Theta^* \textrm{is not synchronized}\}
  \]
  has measure zero.
  
    For completeness, we include the proof here. 
    Let $\Phi_\tau(x_0)$ be the flow for the system  
$\dot{x} = F(x)$, where $x = (\Theta, q)$. The vector field $F(x)$ is smooth on the open domain where all $q_j > 0$.  
For any fixed $m > 0$, we consider the dynamics on the compact manifold  
$$
M_m := (S^{d-1})^n \times [1/m, Q]^n,
$$  
on which the flow is smooth. Let $K_m \subseteq M_m$ be the compact, forward-invariant set of initial conditions whose trajectories remain in $M_m$.

Let $S_{ns} \subset K_m$ be the set of non-synchronized critical points.  
By Step 4, every point $x^* \in S_{ns}$ is unstable.  
Let $A_{m,ns} \subset K_m$ be the basin of attraction for $S_{ns}$, i.e., the set of $x_0 \in K_m$ such that  
$$
\lim_{\tau \to \infty} \Phi_\tau(x_0) \in S_{ns}.
$$

For any $x^* \in S_{ns}$, which lies in the interior of $M_m$, the Center-Stable Manifold Theorem applies.  
It guarantees the existence of a local center-stable manifold $W^{\text{loc}}_{cs}(x^*)$.  
The instability of $x^*$ implies that  
$$
\dim(W^{\text{loc}}_{cs}(x^*)) \leq \dim(M_m) - 1,
$$  
so $W^{\text{loc}}_{cs}(x^*)$ has measure zero. From the Center-Stable Manifold Theorem, there is a neighborhood of $x^*$ such that any trajectory staying in this neighborhood has to enter and remain on $W_{\text{cs}}^{\text{loc}}(x^*)$. By choosing a finite covering of the compact set $S_{ns}$ with respective neighborhoods of $x^*$, we get that 
any initial condition $x_0 \in A_{m,ns}$ has a trajectory $\Phi_\tau(x_0)$ that must eventually enter and remain on some $W^{\text{loc}}_{cs}(x^*_k)$, with a finite number of $x^*_k, k\leq K$ chosen from the covering.  
Thus, for some $N \in \mathbb{Z}_{+}, k \leq K$, we have  
$$
x_0 = \Phi_{-N}(\Phi_{N}(x_0)), \quad \text{where } \Phi_{N}(x_0) \in W^{\text{loc}}_{cs}(x^*_k).
$$  
Since $\Phi_{-N}$ is a local diffeomorphism, it preserves the dimensionality. Manifold $W^{\text{loc}}_{cs}(x^*_k)$ has positive co-dimension, and thus its pre-image too, which implies that it has measure zero in $M_m$.
Consequently, measure of $A_{m, ns}$ is also zero, as a countable union of measure zero sets.
Finally, to completely finish, we need to map the set to $t=0$, because $\tau = 0$ corresponds to $t=1$. This is again a smooth backward flow that preserves measure zero set. We arrive at measure zero set in initial coordinates $(\Theta(0), r(0))$, because they are distributed with standard Gaussian $r(0)\cdot \Theta(0)$.

    \end{proof}

\begin{remark} Here we describe modifications of the proof for each scheme.

\begin{itemize}
\item \post{} No time change is required. The system is already autonomous and compact. \textit{Step 2} works with modified gradient descent from \ref{lem:loj}, because the modification matrix $M(t)$ is diagonal with blocks $\frac{1}{Z_j}$, that are uniformly bounded. As such, we get convergence to some critical point. Finally, we use existing analysis of the stability of the energy functional together with \ref{lem:matrix} to establish synchronization.
\item \peri{} For the \textit{Step 1} we use time change $\tau:=\ln t$ and also consider $q_j(t):=r_j(t)/t$. It leads to the dynamics of the form
 \[
  \dot \theta_j = \frac{1}{q_j\|A_j(\Theta)\|}P_j A_j(\Theta), \qquad \dot q_j = \frac{\la \theta_j, A_j(\Theta)\ra}{\|A_j(\Theta)\|} - q_j.
  \]
  The rest of the proof remains the same as \pre{}, because this system satisfies gradient-like structure of \textit{Step 2}, we also use the assumption to separate $q_j$ from $0$, and then show that all critical points that are not synchronized have unstable direction in \textit{Step 4}. The form of the system and the Jacobian in \textit{Step 4} is written generally, to accommodate this case too.
\item \mix{} At infinity \mix{} follows exactly \pre{}, and the argument follows from the proof of \pre{}. 
\item \ngpt{} Time change $\tau = \int \alpha_t^{-1}$ from \textit{Step 1} simplifies \ngpt{} to the case $\alpha_t \equiv 1$. This makes the original dynamics autonomous on a compact manifold. As such, it requires no frame change, and we immediately move on to studying convergence and local behavior of that system. Modified \L{}ojasiewicz from \textit{Step 2} and analysis of the unstable direction of the Jacobian from \textit{Step 4} follow similar steps. For the Jacobian, the only component is $J_{\theta \theta}$, and it is unstable from the same Lemma. The only complication for the system are points with $A_j = 0$. They, however, break the original dynamics too, and can be excluded with careful analysis.
\item \cod{} Time change from \textit{Step 1} with $\tau = 2\sqrt{t+1}$ reduces \cod{} to \post{}.
\end{itemize} 

\end{remark}

\section{Simulation results with random weight matrices}
\label{ap:pics}
\textbf{Attention Update Formulation.}
To align our simulations with practical transformer architectures, we now explicitly include the output projection matrix, $W \in \mathbb{R}^{d \times d}$, in the attention update. For a multi-head configuration, the output of each head $h$ is first computed and then concatenated, after which the final projection is applied:
$O_h = \text{softmax}(\beta X Q_h K_h^T X^T) X V_h, h=1,\ldots,n_{\text{heads}}$
$X_{t+1} = \text{Concat}(O_1,\ldots,O_{n_{\text{heads}}})W$
where $Q_h, K_h, V_h \in \mathbb{R}^{d \times d_{\text{head}}}$. The inclusion of the matrix $W$ is a linear transformation applied after the core softmax-driven interaction. While this is crucial for model capacity in practice, it does not impact the theoretical dynamics description, which is why it was omitted from the preceding theoretical analysis for notational simplicity.

\textbf{Experimental Settings.}
We present simulation results illustrating the evolution of average token cosine similarity. All plots show the mean trajectory averaged over $10^5$ independent runs, with shaded regions indicating the 90\% confidence interval. Each run begins with a fresh draw of initial token positions $X$ from an isotropic distribution and random weight matrices. All simulations use a context of $n=128$ tokens. For the normalization methods \mix{} and \ngpt{}, we use parameters $\tau=0.25T$ and $\alpha \equiv 1$, respectively.

Our plots vary several factors.
The majority of our experiments use Kaiming initialization. In this setting, we fix the number of heads to $n_{\text{heads}}=1$ (so $d_{\text{head}}=d$) to isolate the core dynamics. We systematically vary the following parameters:
\begin{itemize}
\item \textbf{Dimension ($d$)}: small (16), medium (128), and large (512).
\item \textbf{Temperature ($\beta$)}: low ($\beta=1$), medium ($\beta=\sqrt{d}$), and high ($\beta=4\sqrt{d}$).
\item \textbf{Weight Sampling}: \emph{static} (a single draw of $Q,K,V,W$ fixed for all time steps) vs. \emph{re-sampled} (new matrices are drawn at each time step $\Delta t$).
\end{itemize}

\textbf{GPT-style Initialization}: We conduct one experiment that mirrors the configuration of a small GPT-2 style model.
\begin{itemize}
    \item It uses $d=768$, $n_{\text{heads}}=12$ (implying $d_{\text{head}}=64$), and a temperature of $\beta=\sqrt{d_{\text{head}}}$.
    \item Weights are drawn from a Gaussian distribution with variance $\sigma^2=0.02$ and are held \emph{static}.
\end{itemize}

Figures are arranged to facilitate comparison, with each caption specifying the experimental signature $\langle d,n_{\text{heads}},\beta,\text{weights},\text{init} \rangle$.
\begin{figure*}[ht]
\centering

\begin{subfigure}[t]{0.45\textwidth}
    \includegraphics[width=\linewidth]{figures/exp2_kaiming_h1_T10_midtemp.png}
    \caption{$d=512, n_{\text{heads}}=1, \beta=\sqrt{d}$ (\textbf{medium}), static Kaiming weights. Case where $d>n$. }
    \label{fig:h1-midT}
\end{subfigure}
\hfill
\begin{subfigure}[t]{0.45\textwidth}
    \includegraphics[width=\linewidth]{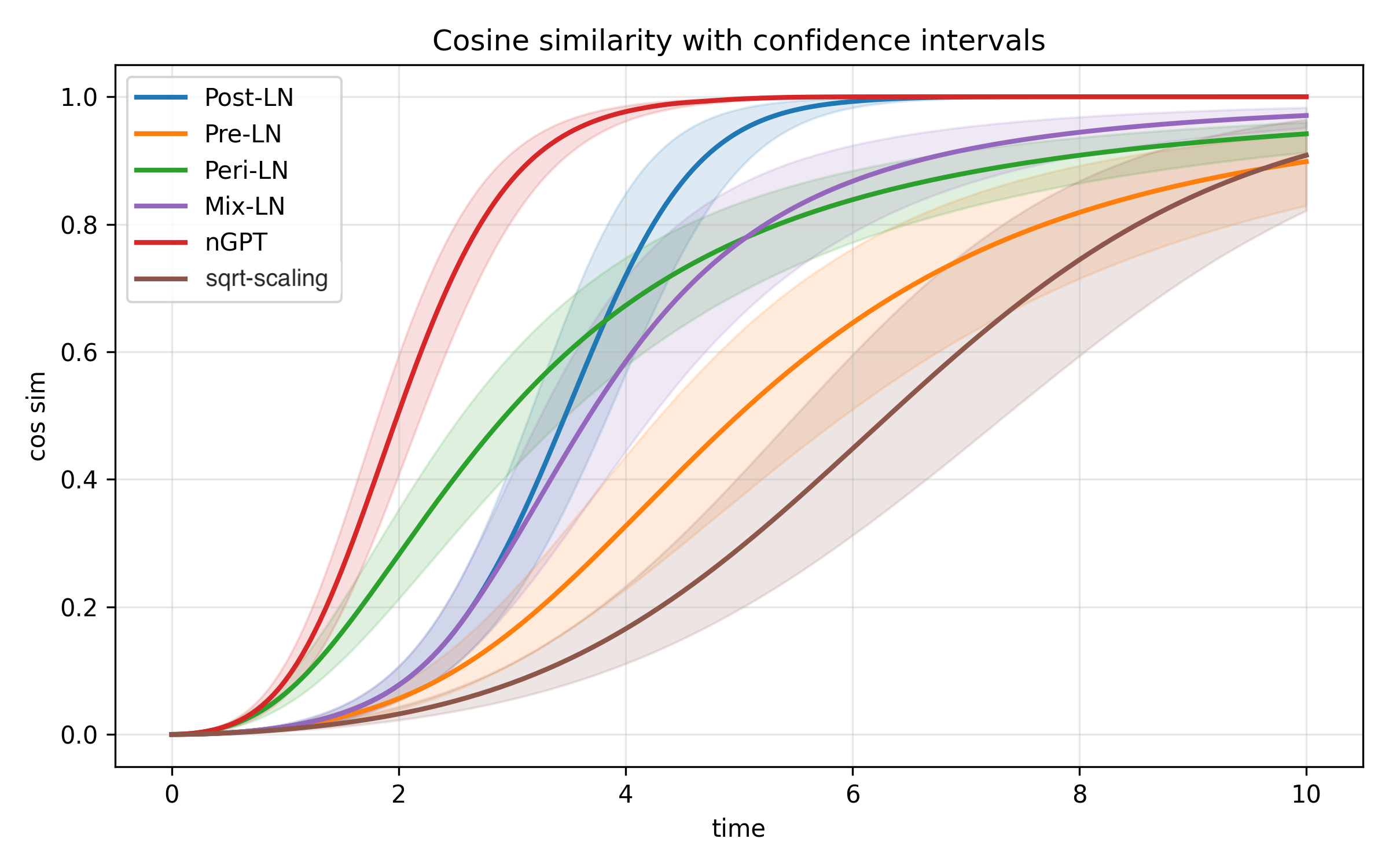}
    \caption{$d=512, n_{\text{heads}}=1, \beta=4\sqrt{d}$ (\textbf{high}), static Kaiming weights. Case where $d>n$.}
    \label{fig:h1-highT}
\end{subfigure}

\vspace{1em}

\begin{subfigure}[t]{0.45\textwidth}
    \includegraphics[width=\linewidth]{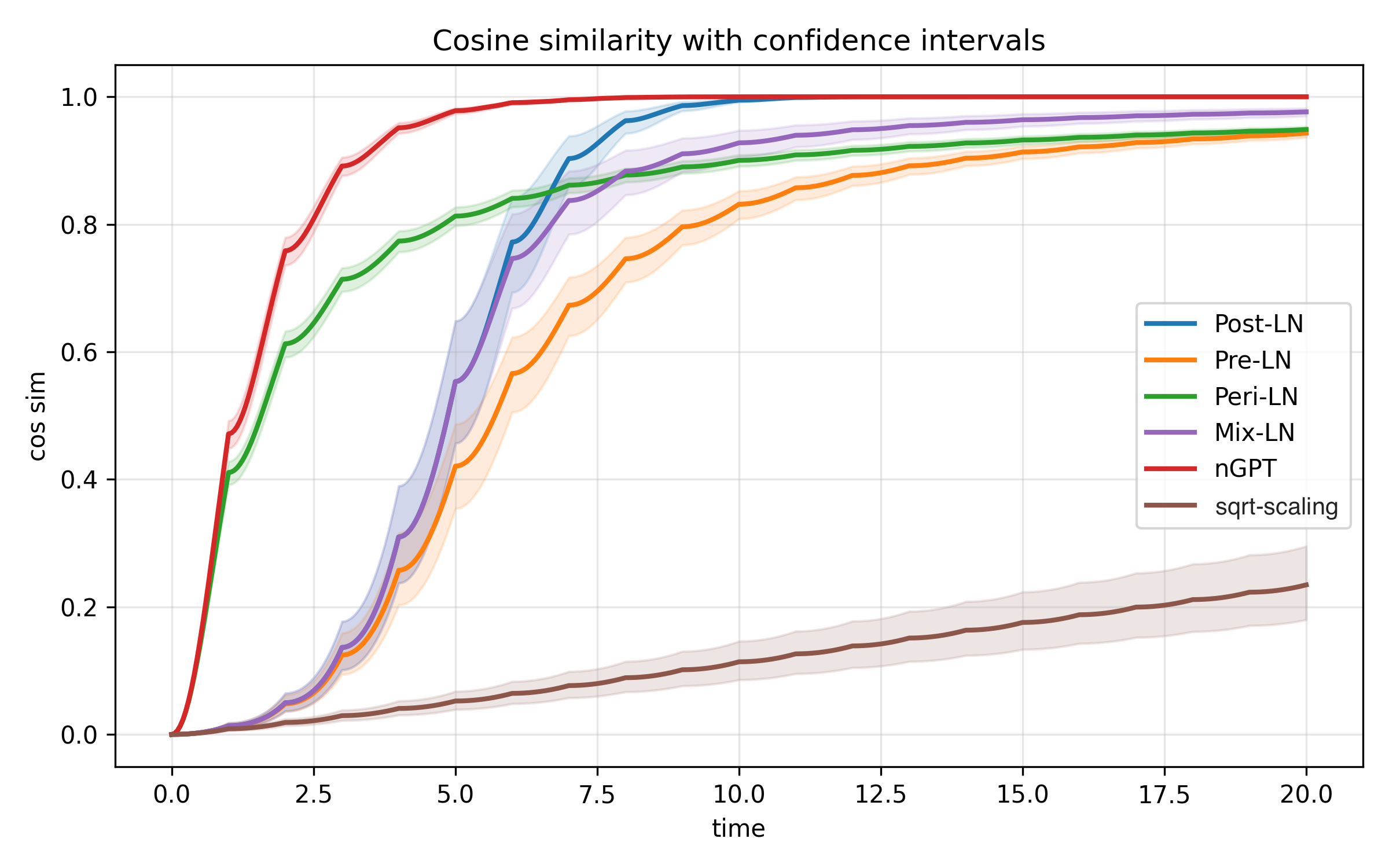}
    \caption{$d=512, n_{\text{heads}}=1, \beta=\sqrt{d}$ (medium), \textbf{re-sampled} Kaiming weights at each $\Delta t = 1$.}
    \label{fig:h1-regen}
\end{subfigure}
\hfill
\begin{subfigure}[t]{0.45\textwidth}
    \includegraphics[width=\linewidth]{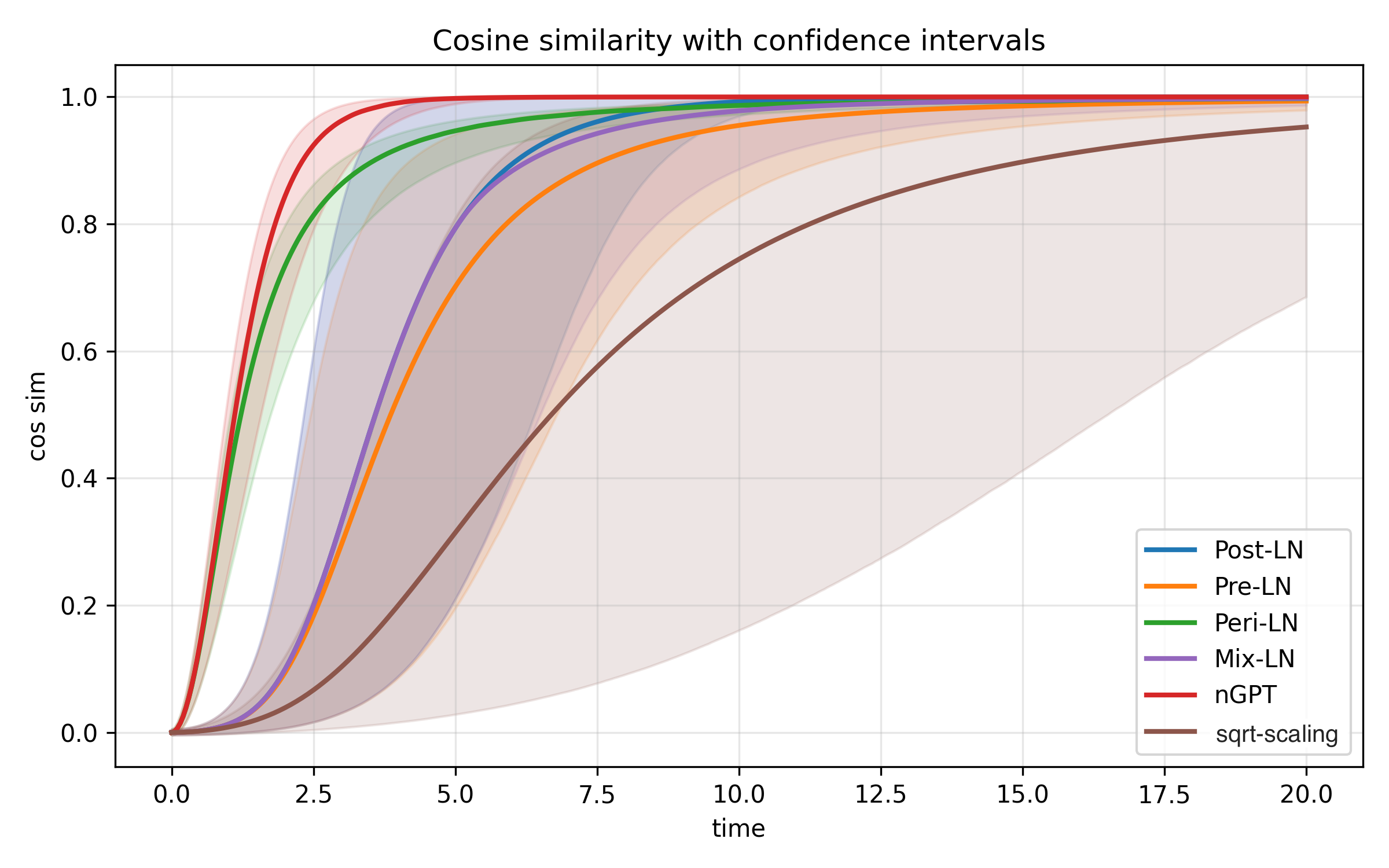}
    \caption{$d=16, n_{\text{heads}}=1, \beta=1$, static Kaiming weights. Case where $d < n$.}
    \label{fig:overcomplete}
\end{subfigure}

\vspace{1em}

\begin{subfigure}[t]{0.45\textwidth}
    \includegraphics[width=\linewidth]{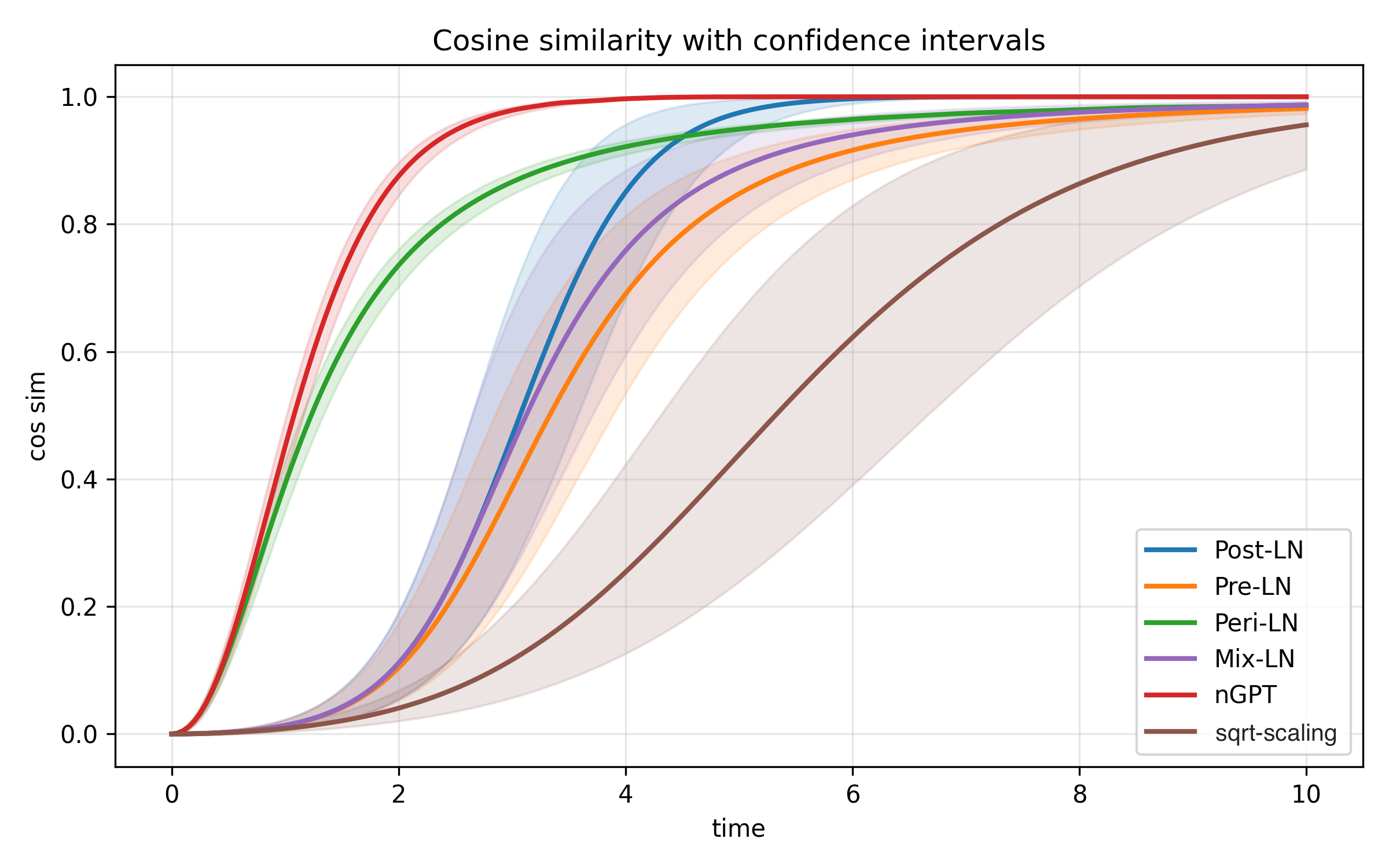}
    \caption{$d=128, n_{\text{heads}}=1, \beta=\sqrt{d}$ (\textbf{medium}), static Kaiming weights. Case where $d=n$.}
    \label{fig:equal-med}
\end{subfigure}
\hfill
\begin{subfigure}[t]{0.45\textwidth}
    \includegraphics[width=\linewidth]{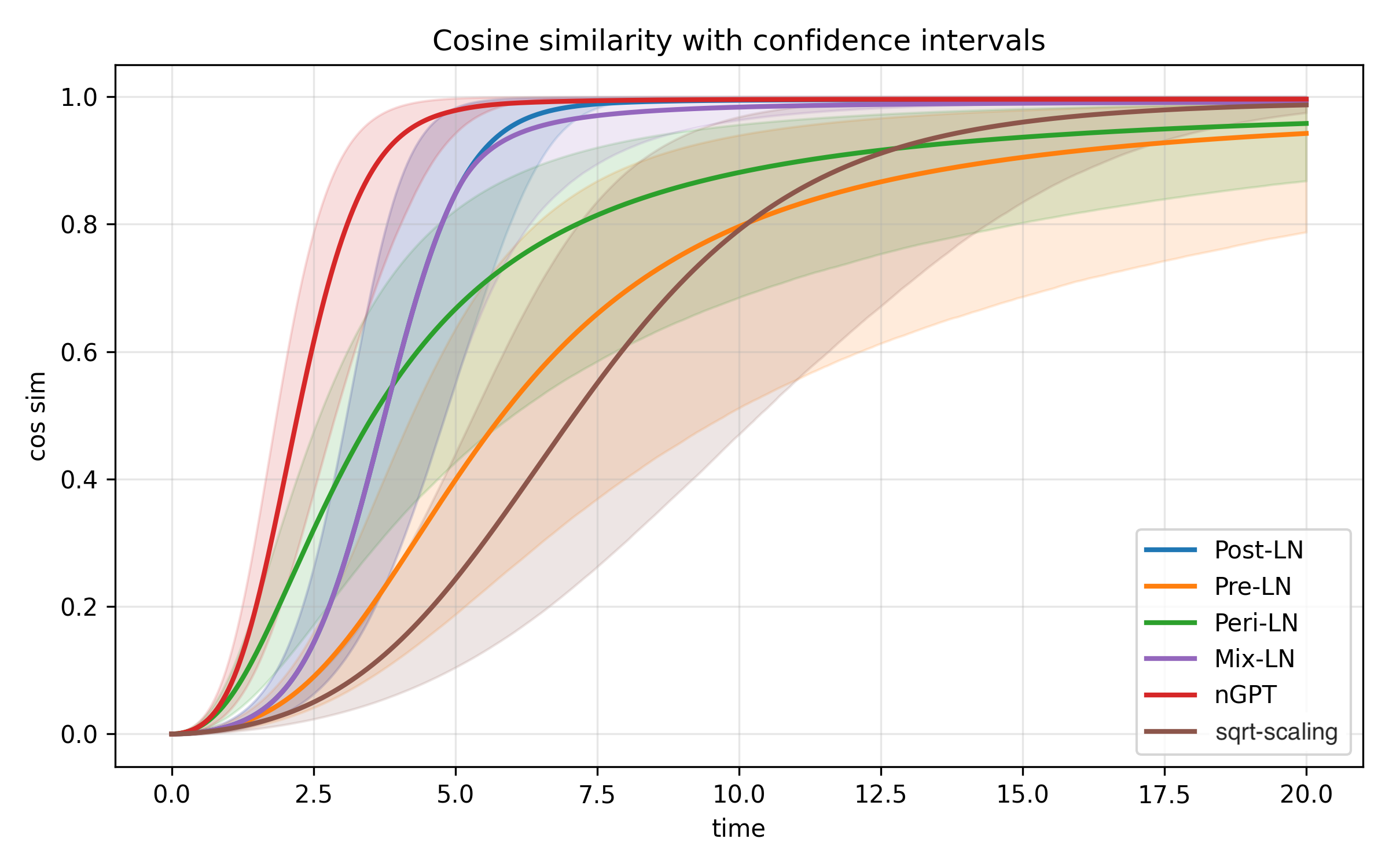}
    \caption{$d=128, n_{\text{heads}}=1, \beta=4\sqrt{d}$ (\textbf{high}), static Kaiming weights. Case where $d=n$.}
    \label{fig:equal-high}
\end{subfigure}

\vspace{1em}

\begin{subfigure}[t]{0.45\textwidth}
    \includegraphics[width=\linewidth]{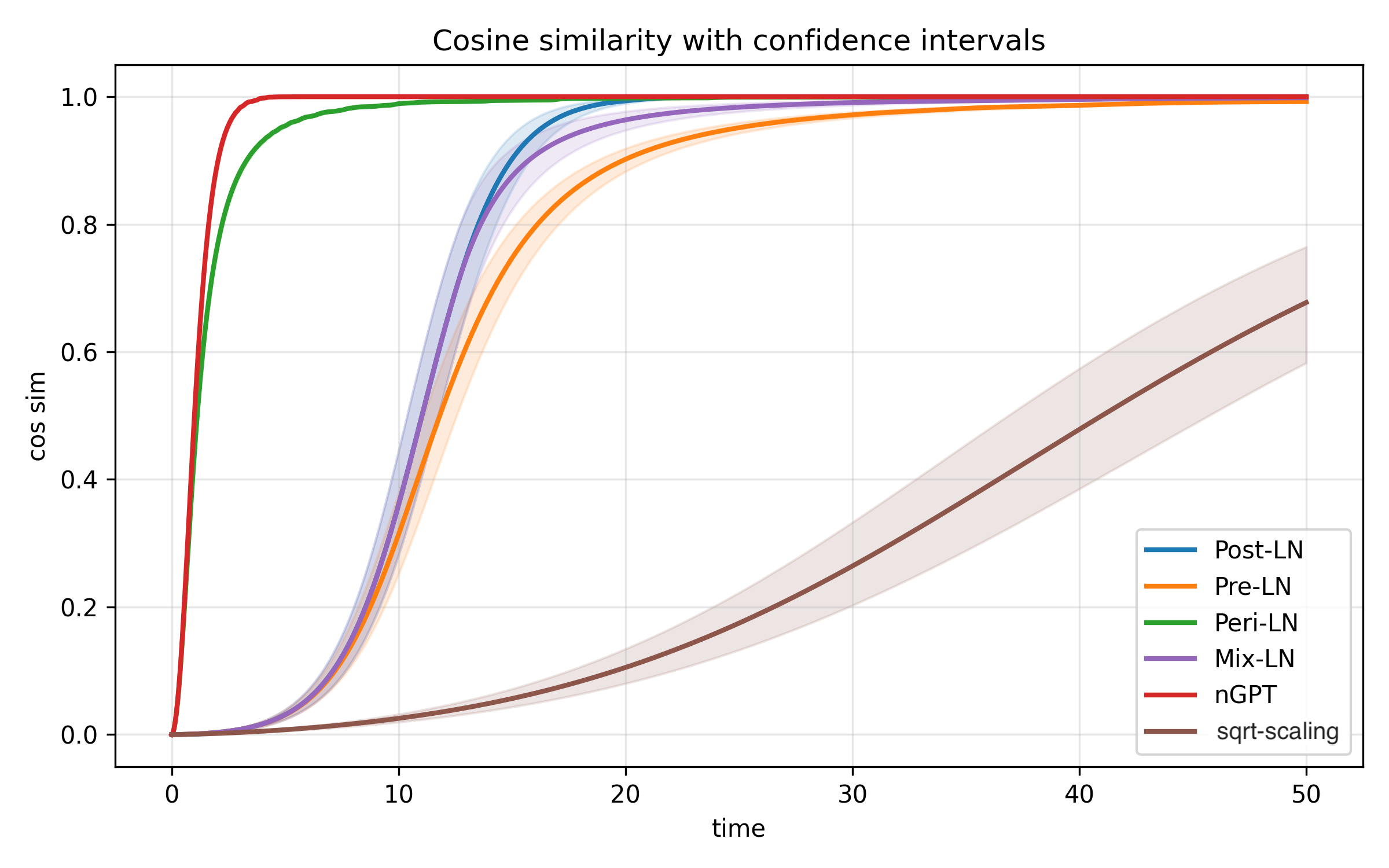}
    \caption{\textbf{NanoGPT-style}: $d=768, n_{\text{heads}}=12$ ($d_{\text{head}}=64$), $\beta=\sqrt{d_{\text{head}}}$, static Gaussian weights with $\sigma=0.02$.}
    \label{fig:nanogpt-base}
\end{subfigure}

\caption{Evolution of average cosine similarity for tokens under the pure attention update.}
\label{fig:all_experiments}
\end{figure*}


\end{document}